\theoremstyle{plain}
\newtheorem{theorem}{Theorem}[section]
\newtheorem{lemma}[theorem]{Lemma}
\newtheorem{assumption}{Assumption}
\theoremstyle{remark}
\newtheorem{remark}{Remark}
\newtheorem{proposition}{Proposition}
\def\E{\mathbb{E}}
\def\P{\mathbb{P}}
\def\Var{\mathrm{Var}}
\def\R{\mathbb{R}}
\def\cA{\mathcal{A}}
\newcommand\norm[1]{\left\lVert#1\right\rVert}
\newcommand{\argmin}{\mathop{\mathrm{argmin}}}
\begin{document}

\begin{frontmatter}
\title{On the Statistical Complexity for Offline and Low-Adaptive Reinforcement Learning with Structures}
\runtitle{Statistical Complexity for Offline and Low-Adaptive RL }

\begin{aug}
\author[A]{\fnms{Ming}~\snm{Yin}\ead[label=e1]{my0049@princeton.edu}},
\author[B]{\fnms{Mengdi}~\snm{Wang}\ead[label=e2]{mengdiw@princeton.edu }}
\and
\author[C]{\fnms{Yu-Xiang}~\snm{Wang}\ead[label=e3]{yuxiangw@ucsd.edu}}


\address[A]{Ming Yin is a Postdoc at the Department of Electrical and Computer Engineering at Princeton
University\printead[presep={\ }]{e1}.}

\address[B]{Mengdi Wang is an Associate Professor at the Department of Electrical and Computer Engineering at Princeton University\printead[presep={\ }]{e2}.}

\address[C]{Yu-Xiang Wang is an Associate Professor at the Halıcıoğlu Data Science Institute at UC San Diego\printead[presep={\ }]{e3}.}

\end{aug}

\begin{abstract}
This article reviews the recent advances on the statistical foundation of reinforcement learning (RL) in the offline and low-adaptive settings.  We will start by arguing why offline RL is the appropriate model for almost any real-life ML problems, even if they have nothing to do with the recent AI breakthroughs that use RL. Then we will zoom into two fundamental problems of offline RL:  offline policy evaluation (OPE) and offline policy learning (OPL). It may be surprising to people that tight bounds for these problems were not known even for tabular and linear cases until recently. We delineate the differences between worst-case minimax bounds and instance-dependent bounds. We also cover key algorithmic ideas and proof techniques behind near-optimal instance-dependent methods in OPE and OPL. Finally, we discuss the limitations of offline RL and review a burgeoning problem of \emph{low-adaptive exploration} which addresses these limitations by providing a sweet middle ground between offline and online RL. 

\end{abstract}

\begin{keyword}
\kwd{Sample Complexity}
\kwd{Offline Reinforcement Learning}
\kwd{Low-Adaptive Exploration}
\end{keyword}

\end{frontmatter}

\section{Introduction}

Reinforcement learning (RL) has gained remarkable popularity lately. Most people would attribute the surge to its usage in AI milestones such as AlphaGo \cite{mnih2015human,silver2016mastering,silver2017mastering,fawzi2022discovering,mankowitz2023faster} and in instruction-tuning large language models \cite{christiano2017deep,stiennon2020learning,ouyang2022training,bai2022training}. We, however, argue that it is caused by a more fundamental paradigm shift that places RL in the front and center of nearly every Machine Learning (ML) application in practice. Why? Training an accurate classifier is most likely not the end goal of an ML task. Instead, the predictions of the trained ML model is often used as interventions hence changing the distribution of future data. Real-world applications are usually sequential decision-making problems, and trained ML models need to be combined with RL methods to perform high-quality decision-making. We provide three examples. 

\textit{AI Diagnosis/Screening.} In medical diagnosis, ML models are frequently used to predict the likelihood of a patient having a certain disease based on their symptoms and medical history. However, these predictions are not the final outcome; they often guide subsequent medical interventions, such as recommending further tests or treatments. These interventions, in turn, influence future patient states, creating a feedback loop that affects the data distribution. RL methods are essential in this context to optimize the sequence of decisions—like treatment plans—over time, improving patient's outcomes. For instance, \cite{nemati2016optimal} used RL to develop a model that assists in the management of ICU by recommending treatment strategies that adapt to the evolving condition of the patient.

\textit{Recommendation Systems.} Traditional recommendation systems rely on ML models to predict user preferences based on historical data. However, when these recommendations are presented to users, they influence user behavior and preferences, which alters future data. This dynamic environment is well-suited to RL, where the goal is to maximize long-term user engagement by continuously adapting recommendations based on real-time feedback. For example, \cite{zhao2018deep} applied RL to optimize a recommendation system for news articles, showing that it could significantly improve user click-through rates by considering the long-term effects of recommendations.

\textit{Video Streaming over Wireless Networks.} In video streaming applications, ML models are used to predict network conditions and select appropriate streaming bitrates. These predictions directly influence the quality of the streaming experience and the subsequent network load, posing a challenging sequential decision-making problem. RL can be applied to adaptively adjust bitrates to optimize the trade-off between video quality and buffering. For instance, \cite{mao2017neural} introduced a system called \emph{Pensieve}, which uses RL to optimize video streaming quality over wireless networks by learning from past streaming experiences and network conditions.

These examples not only demonstrate the fundamental applicability of RL across diverse domains but also bring to light the significant challenges it faces. 

Notably, most real-life RL problems are \emph{offline RL} problems. Unlike Chess or Go with unlimited access to simulators, it is often unsafe, illegal, or costly to conduct experiments in the task environment. Instead, we need to work with an offline dataset collected from the environment, which poses fundamental problems on \emph{what can be learned offline} and \emph{how (statistically) efficiently one can learn from the offline dataset}. Three critical aspects of the offline RL problems are:
\begin{itemize}
\item \textbf{Long horizon problem.} The long decision horizon in RL poses unique challenge for finding the optimal strategy. In particular, the undesired actions chosen at earlier phases will have long-lasting impact for the future, making the strategy suboptimal. Small deviations from the optimal policy early on can propagate and amplify over time, further complicating the learning process. 

\item \textbf{Distribution Shift and Coverage.} Distribution shift is a fundamental challenge in reinforcement learning that occurs when the distribution of data the agent encounters during training differs from the distribution of optimal policies. When the overlap (measured by certain distribution distance metric) between the two distributions is small, it would be hard to find optimal actions due to the insufficient data coverage, especially when the offline dataset is collected using a suboptimal policy. 
\item \textbf{Function Approximation and Generalization.} The state and action space of RL problems are often so large that a finite dataset cannot cover. In such cases, RL requires generalization across states through a certain feature representation of the states and a parametric approximation of the value functions. Learning such function approximations offline is more challenging.
\end{itemize} 

This article aims to review recent advances of the statistical foundations for offline RL, covering both problems in \emph{offline policy evaluation} and \emph{offline policy learning}. Specifically, we review what the fundamental learning hardness/statistical limits for offline RL under different MDP (Markov Decision Processes) or function approximation structures are. By examining the statistical results, we reveal how factors such as distribution shift and horizon length affect the learning hardness of the problems. We also introduce the related algorithms and highlight the theoretical techniques to achieve these results. 

\textbf{Paper organization.} We first introduce the mathematical notations and set up the problems of interest in Section~\ref{sec:setup}. Then the remaining sections 
describe results in offline policy evaluation, offline policy learning and low-adaptive exploration under various assumptions (see Table~\ref{tab:organization}). 

\begin{table*}[t]
    \centering
    \begin{tabular}{c|ccc}
        Problem setup & Offline Evaluation &  Offline Learning & Low-Adaptive Exploration\\
         \hline
       Tabular MDP  & Section~\ref{sec:ope_tabular} &Section~\ref{sec:OPL}&Section~\ref{sec:batched_tabular}\\
       Linear Approx. &Section~\ref{sec:ope_linear} & Section~\ref{sec:OPL_linear} &Section~\ref{sec:batched_linear}\\
       Parametric Approx. &Section~\ref{sec:ope_parametric} & Section~\ref{sec:OPL_parametric} &Section~\ref{sec:batched_general}\\
    \end{tabular}
    \caption{Overview of the paper structure.}
    \label{tab:organization}
\end{table*}

\begin{figure}
    \centering
    \includegraphics[width=1.0\linewidth]{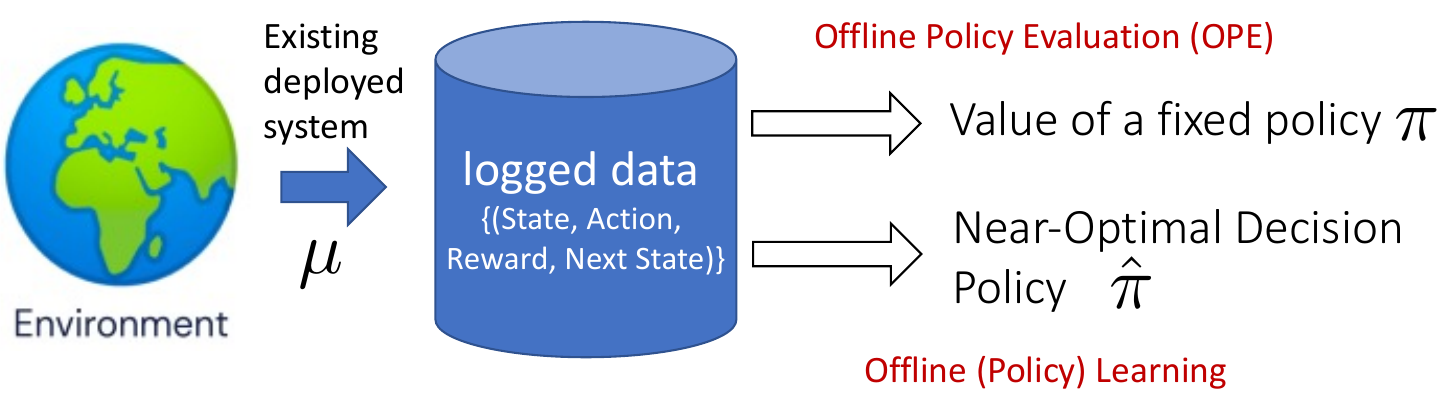}
    \caption{Illustration of the offline reinforcement learning problem.}
    \label{fig:illus_offlinerl}
\end{figure}
\textbf{Disclaimer.} The literature of offline RL is gigantic. It is not our intention to provide thorough coverage. Instead, the topics and results covered in this paper focus on a niche that the coauthors studied in the past few years.  Since our goal is pedagogical, we do not make any claims about novelty and precedence of scientific discovery. Please refer to the bibliography and the references therein for a more detailed discussion.


\section{Notations and problem setup} \label{sec:setup}
We first provide the background for different problem settings 
that we consider in this article. 

\subsection{Episodic time-inhomogenuous RL} A finite-horizon \emph{Markov Decision Process} (MDP) is denoted by a tuple $\mathcal{M}=(\mathcal{S}, \mathcal{A}, P, r, H, d_1)$ \citep{sutton2018reinforcement}, where $\mathcal{S}$ is the state space and $\mathcal{A}$ is the action space. A time-inhomogenuous transition kernel $P_h:\mathcal{S}\times\mathcal{A}\times\mathcal{S} \mapsto [0, 1]$ maps each state action$(s_h,a_h)$ to a probability distribution $P_h(\cdot|s_h,a_h)$ and $P_h$ can be different across the time. Besides, $r : \mathcal{S} \times{A} \mapsto \mathbb{R}$ is the expected instantaneous reward function satisfying $0\leq r\leq R_{\max}$. $d_1$ is the initial state distribution. $H$ is the horizon. A policy $\pi=(\pi_1,\ldots,\pi_H)$ assigns each state $s_h \in \mathcal{S}$ a probability distribution over actions according to the map $s_h\mapsto \pi_h(\cdot|s_h)$ $\forall h\in[H]$.  An MDP together with a policy $\pi$ induce a random trajectory $ s_1, a_1, r_1, \ldots, s_H,a_H,r_H,s_{H+1}$ with $s_1 \sim d_1, a_h \sim \pi(\cdot|s_h), s_{h+1} \sim P_h (\cdot|s_h, a_h), \forall h \in [H]$ and $r_h$ is a random realization given the observed $s_h,a_h$.

\emph{Bellman (optimality) equations.} The value function $V^\pi_h(\cdot)\in \R^\mathcal{S}$ and Q-value function $Q^\pi_h(\cdot,\cdot)\in \R^{\mathcal{S}\times \mathcal{A}}$ for any policy $\pi$ is defined as, $\forall s,a\in\mathcal{S},\mathcal{A},h\in[H]$:
{\small
\begin{align*}
V^\pi_h(s)=\E_\pi[\sum_{t=h}^H r_{t}|s_h=s],\;\;Q^\pi_h(s,a)=\E_\pi[\sum_{t=h}^H  r_{t}|s_h,a_h=s,a].
\end{align*}
}The {Dynamic Programming principle} follows \cite{puterman1990markov,bellman1966dynamic,powell2007approximate} $\forall h\in[H]$:
{\small
\begin{equation}\label{eqn:bellman}
\begin{aligned}
Q^\pi_h(s,a)&=r_h+\E_{s'\sim P_h(\cdot|s,a)}[V^\pi_{h+1}(s')],\;\;V^\pi_h=\E_{a\sim\pi_h}[Q^\pi_h],\\ \;\;\;Q^*_h(s,a)&=r_h+\E_{s'\sim P_h(\cdot|s,a)}[V^*_{h+1}(s')],\; V^*_h=\max_a Q^*_h(\cdot,a).
\end{aligned}
\end{equation}
}The corresponding Bellman operators are defined as: 
{\small
\begin{equation}\label{eqn:operator}
\begin{aligned}
\mathcal{P}^\pi_h(f)(s,a)&=r_h+\E_{s'\sim P_h(\cdot|s,a),a'\sim \pi(\cdot|s')}[f(s',a')],\\ \mathcal{P}_h(f)(s,a)&=r_h+\E_{s'\sim P_h(\cdot|s,a)}[\max_{a'}f(s',a')].
\end{aligned}
\end{equation}
}We incorporate the standard marginal state-action occupancy $d^\pi_h(s,a)$ as:
{$
d^\pi_h(s,a):=\P[s_h=s,a_h=a|s_1\sim d_1,\pi].
$
}The performance per policy $\pi$ is defined as $$v^\pi:=\E_{d_1}\left[V^\pi_1\right]=\E_{\pi,d_1}\left[\sum_{t=1}^H  r_t\right].$$

\subsection{Structured MDP models}\label{subsec:str}

In this article, we examine three 
fundamental yet representative MDP models (or related function approximation classes) that are well-structured. Despite their simplicity, as we will discuss in later sections, their statistical limits have not been well-understood until recently.

\textbf{Tabular MDPs.} Tabular MDP is arguably the most simple setting in RL. It is a Markov Decision Process with finite states $|\mathcal{S}|<\infty$ and finite actions $|\mathcal{A}|<\infty$. The most common tabular MDPs, such as Gridworlds, often have small state and action spaces. When the number of states and actions are large, they are generally not treated as discrete and are instead addressed using function approximators. 

\textbf{Linear MDPs.} An episodic MDP $(\mathcal{S},\mathcal{A},P,r,H,d_1)$ is called a linear MDP with a known (unsigned) feature map $\phi:\mathcal{S}\times\mathcal{A}\rightarrow \mathbb{R}^d$ if there exist $d$ unknown (unsigned) measures $\nu_h=(\nu_h^{(1)},\ldots,\nu_h^{(d)})$ over $\mathcal{S}$ and an unknown vector $\theta_h\in\R^d$ such that $\forall s',s\in\mathcal{S}, \;a\in\mathcal{A}, \;h\in[H]$,
	\[
	{P}_{h}\left(s^{\prime} \mid s, a\right)=\left\langle\phi(s, a), \nu_{h}\left(s^{\prime}\right)\right\rangle, \; r_{h}\left(s, a\right) =\left\langle\phi(x, a), \theta_{h}\right\rangle
	\]
	with $\int_\mathcal{S}\norm{\nu_h(s)}ds\leq\sqrt{d}$ and $\max(\norm{\phi(s,a)}_2,\norm{\theta_h}_2)\leq 1$ for all $h\in[H]$ and $\forall s,a\in\mathcal{S}\times\mathcal{A}$. 

When specify $d=|\mathcal{S}| \times|\mathcal{A}|$ and $\phi(x, a)=\mathbf{1}_{(x, a)}$ be the canonical basis in $\mathbb{R}^d$, linear MDPs recover tabular MDPs. Thus, linear MDPs strictly generalize tabular MDPs and allow continuous state-actions spaces.

\textbf{Linear Functions approximation.} 
By Bellman equation \eqref{eqn:bellman}, Linear MDP aimplies that the value function $Q_h^\pi$ for any policy $\pi$ is a linear function in the feature vector $\phi$, i.e., 
$$Q_h^\pi(\cdot,\cdot)\in \left\{ \langle \phi(\cdot,\cdot),\theta \rangle \;|\; \theta\in \R^d\right\}$$ for any $h\in[H]$ and $\pi$. 
It is sometimes sufficient to directly reason about these linear function approximations rather than relying on the stronger linear MDPs structures. There are various subtle differences in the various type of linear function approximation. For the purpose of this paper though, it suffices to just think about linear MDPs.

For more general MDPs, it is harder to impose tractable structures. Alternatively, we consider the following structured function class that is expressive enough to learn general MDPs.

\textbf{Parametric Differentiable Functions.} Let $\mathcal{S},\mathcal{A}$ be arbitrary state, action spaces and a feature map $\phi(\cdot,\cdot):\mathcal{S}\times\mathcal{A}\rightarrow \Psi\subset\mathbb{R}^m$. The parameter space $\Theta\in\mathbb{R}^d$. Both $\Theta$ and $\Psi$ are compact spaces. Then the parametric function class (for a model $f:\R^d\times \R^m\rightarrow \R$) is defined as
	\[
	\mathcal{F}:=\{f(\theta,\phi(\cdot,\cdot)):\mathcal{S}\times\mathcal{A}\rightarrow\mathbb{R},\theta\in\Theta\}
	\]
	that satisfies differentiability/smoothness condition: 1. for any $\phi\in \R^m$, $f(\theta,\phi)$ is third-time differentiable with respect to $\theta$; 2. $f,\partial_\theta f,\partial^2_{\theta,\theta} f,\partial^3_{\theta,\theta,\theta} f$ are jointly continuous for $(\theta,\phi)$. 
Clearly, $\mathcal{F}$ generalizes linear function class (via choosing $f(\theta,\phi)=\langle \theta,\phi\rangle$). 


\subsection{Offline RL Tasks} The offline RL begins with a static offline data {\small$\mathcal{D}=\left\{\left(s_{h}^{\tau}, a_{h}^{\tau}, r_{h}^{\tau}, s_{h+1}^{\tau}\right)\right\}_{\tau\in[n]}^{h\in[H]}$} rolled out from some behavior policy $\mu$. In particular, the offline nature requires we cannot change $\mu$ and in particular we do not assume the functional knowledge of $\mu$. There are two major tasks considered in offline RL.
\begin{itemize}
    \item \textbf{Offline Policy Evaluation (OPE).} For a policy of interest $\pi$, the agent needs to evaluate its performance $v^\pi$ using $\mathcal{D}$. In general, there is a distribution mismatch between $\pi$ and $\mu$. The goal is to construct an estimator $\widehat{v}^\pi$ such that $|v^\pi-\widehat{v}^\pi|<\epsilon$ or mean square error $\E_\mu[(v^\pi-\widehat{v}^\pi)^2]<\epsilon$).  
    
    \item \textbf{Offline Policy Learning (OPL).} This requires the agent to find a reward-maximizing policy $\pi^*:=\text{argmax}_\pi v^\pi$ given data {\small$\mathcal{D}$}. That is to say, given the batch data $\mathcal{D}$ and a targeted accuracy $\epsilon>0$, the offline RL seeks to find a policy $\pi_\text{alg}$ such that $v^*-v^{\pi_\text{alg}}\leq\epsilon$.

\end{itemize}

Both OPE and OPL are essential to a real-world offline RL system since the decision maker should first run the offline learning algorithm to find a near optimal policy and then use OPE methods to check if the obtained policy is good enough. For instance, in finance, OPL can be applied for learning a strategy, but traders still need to run OPE for backtesting before deployment. On the other hand, they are also standalone research questions, \emph{e.g.} doctors can be asked to evaluate a heuristic treatment plan that does not involve offline learning, which makes it a pure OPE problem.

\begin{remark}The source of historical data $\mathcal{D}$ could be multilateral, and there are papers (e.g. \cite{jin2021pessimism,ren2021nearly}) directly considers data distribution without specifying $\mu$. We incorporate a specific behavior policy $\mu$ to manifest the distribution mismatch between $\mu$ and $\pi$.
\end{remark}

\subsection{Assumptions in offline RL}

Due to the inherent distribution shift in offline RL, for both OPE and OPL, we revise different assumptions for different problem classes in Section~\ref{subsec:str}. These assumptions are standard protocols for deriving provably efficient results.

\textbf{Offline Policy Evaluation.} For tabular OPE, it requires marginal state ratios and policy ratios to be finite, as stated below.
\begin{assumption}[Tabular OPE \cite{xie2019towards,yin2020asymptotically}]\label{assum:tope}
    	Logging policy $\mu$ obeys that $d_m:=\min_{t,s}d^\mu_t(s)>0$.	Also, $\tau_s:=\max_{t,s}\frac{d^\pi_t(s)}{d^\mu_t(s)}<+\infty$ and $\tau_a:=\max_{t,s,a}\frac{\pi(a|s)}{\mu(a|s)}<+\infty$.
\end{assumption}
Having bounded weights is necessary for discrete state and actions, as otherwise the unbounded importance ratio would cause the estimation error
become intractable.
\begin{assumption}[Linear OPE \cite{duan2020minimax,hao2021bootstrapping}]\label{assume:lope}
    Let the population feature covariance $\Sigma_h:=\mathbb{E}_{\mu, h}\left[\phi(s, a) \phi(s, a)^{\top}\right]$. Then we assume $\min_h\lambda_{\text{min}}(\Sigma_h)>0$ with $\lambda_{\text{min}}$ being the minimal eigenvalue.
\end{assumption}
This assumption ensures the behavior policy $\mu$ has good coverage over the state-action spaces. For instance, when $\phi(x, a)=\mathbf{1}_{(x, a)}$, the assumption above reduces to $\min_{s,a}d_h^\mu(s,a)>0$.

\begin{assumption}[Parametric OPE \cite{zhang2022off}]\label{assume:pope}
Policy completeness: assume reward $r\in\mathcal{F}$ and for any $f\in\mathcal{F}$, we have $\mathcal{P}^{\pi} f\in\mathcal{F}$. Policy realizability: assume $Q^\pi(\cdot,\cdot)=f(\phi(\cdot,\cdot),\theta^\pi)$ for some $\theta^\pi\in\R^d$. Lastly, let the population feature covariance {\small$$\Sigma_h:=\mathbb{E}_{\mu, h}\left[\nabla f(\phi(s, a),\theta^\pi)\nabla f( \phi(s, a),\theta^\pi)^{\top}\right].$$} Then we assume $\min_h\lambda_{\text{min}}(\Sigma_h)>0$.
\end{assumption}

Policy completeness and policy realizability ensure the policy class is rich enough to capture $Q^\pi$. Besides, the assumption on the population feature covariance generalizes the Linear OPE case.

\textbf{Offline Policy Learning.} Next, we summarize the common assumptions (from strong to weak) that can yield statistical sample efficiency for policy learning. After that, we introduce extra assumptions for offline learning in the function approximation settings.

\begin{assumption}[Uniform data coverage \citep{yin2021near,ren2021nearly}]\label{assum:uniform}
	For behavior policy, $d_m:=\min_{h,s,a} d_h^\mu (s,a) > 0$. Here the infimum is over all the states satisfying there exists certain policy so that this state can be reached by the current MDP with this policy.\footnote{Note here $d_m$ is defined by minimizing over the state and action spaces. For Assumption~\ref{assum:tope}, $d_m$ only concerns state. } 
\end{assumption}
This is the strongest assumption in offline RL as it requires $\mu$ to explore each state-action pairs with positive probability at different time step $h$. For tabular RL, it mostly holds $1/SA\geq d_m$ under Assumption~\ref{assum:uniform}. This reveals offline learning is generically harder than \emph{the generative model setting} \citep{agarwal2020model,li2020breaking} in the statistical sense. On the other hand, for task where it needs to evaluate different policies simultaneously (such as \emph{uniform OPE} task in \cite{yin2021near}), this is required as the task considered is in general a harder task than offline learning.

\begin{assumption}[Uniform concentrability \cite{szepesvari2005finite,le2019batch,chen2019information,xie2020q}]\label{assum:concen}
	For all policy $\pi$, $C_\mu:=\sup_{\pi,h} ||d^\pi_h(\cdot,\cdot)/d^\mu_h(\cdot,\cdot)||_\infty$. The parameter $C_\mu<+\infty$ is commonly known as ``concentrability efficient''.
\end{assumption}

This is a classical offline RL condition that is commonly assumed in the function approximation scheme (\emph{e.g.} Fitted Q-Iteration in \cite{szepesvari2005finite,le2019batch}, MSBO in \cite{xie2020q}). Qualitatively, this is a uniform data-coverage assumption that is similar to Assumption~\ref{assum:uniform}, but quantitatively, the coefficient $C_\mu$ can be smaller than $1/d_m$ due the $d^\pi_h$ term in the numerator. There are other variants of concentrability efficient \cite{xie2021bellman,nguyen2024sample} that capture the data coverage of behavior policy slightly differently.

\begin{assumption}[Single policy coverage \cite{liu2020off,yin2021towards}]\label{assum:single_concen}
	There exists one optimal policy $\pi^*$, such that $\forall s_h,a_h\in\mathcal{S},\mathcal{A}$, $d^\mu_h(s_h,a_h)>0$ if $d^{\pi^*}_h(s_h,a_h)>0$. We further denote the trackable set as $\mathcal{C}_h:=\{(s_h,a_h):d^\mu_h(s_h,a_h)>0\}$. 
\end{assumption}
 Assumption~\ref{assum:single_concen} is arguably the weakest assumption needed for accurately learning the optimal value $v^*$. It only requires $\mu$ to trace the state-action space of one optimal policy and can be agnostic at other locations.

 \begin{assumption}[Realizability+Bellman Completeness \cite{yin2022offline}]\label{assum:R+BC} The parametric function class $\mathcal{F}$ in Section~\ref{subsec:str} satisfies: 1.
Realizability: for optimal $Q^*_h$, there exists $\theta^*_h\in\Theta$ such that $Q^*_h(\cdot,\cdot)=f(\theta^*_h,\phi(\cdot))$ $\forall h$;
	2. Bellman Completeness: let $\mathcal{G}:= \{V(\cdot)\in\R^\mathcal{S}: s.t.\;\norm{V}_\infty \leq H\}$. Then in this case $\sup_{V\in\mathcal{G}}\inf_{f\in\mathcal{F}}\norm{f-\mathcal{P}_h(V)}_\infty=0$.
\end{assumption} 

Realizability and Bellman Completeness are widely adopted in the offline RL analysis with general function approximations \citep{chen2019information,xie2021bellman}, and they are assumed to ensure class $\mathcal{F}$ is expressive enough to capture the Q-values of the problems and any bounded functions after Bellman updates.


\textit{Additional structural data coverage assumption.} For linear OPL task, we adopt the Assumption~\ref{assume:lope} from linear OPE. As explained before, this assumption is a characterization of Assumption~\ref{assum:uniform} with linear features. 

\begin{assumption}[Linear OPL, Identical to Assumption~\ref{assum:uniform}]\label{assume:lopl}
    Let the population feature covariance $\Sigma_h:=\mathbb{E}_{\mu, h}\left[\phi(s, a) \phi(s, a)^{\top}\right]$. Then we assume $\min_h\lambda_{\text{min}}(\Sigma_h)>0$ with $\lambda_{\text{min}}$ being the minimal eigenvalue.
\end{assumption}

For parametric differentiable function class $\mathcal{F}$, we impose the following structural data coverage assumption to replace Assumption~\ref{assum:uniform}-\ref{assum:single_concen}. The statistical limit is achieved due to properly leveraging the assumptions on the gradient covariance and the quadratic structure. It depends on both the MDPs and the function approximation class $\mathcal{F}$.

\begin{assumption}[Uniform Coverage for $\mathcal{F}$]\label{assum:cover} 
	We assume there exists $\kappa>0$, such that $\forall h\in[H],\theta_1,\theta_2,\theta\in\Theta$, 
  {\small
	\begin{itemize}
	\item\label{assume:eqn1}
$\E_{\mu,h}\left[\left(f(\theta_1,\phi(\cdot,\cdot))-f(\theta_2,\phi(\cdot,\cdot))\right)^2\right]
	\geq \kappa\norm{\theta_1-\theta_2}^2_2, 
	$ 
	\item $
	\E_{\mu,h}\left[\nabla f(\theta,\phi(s,a))\cdot\nabla f(\theta,\phi(s,a))^\top\right]\succ \kappa I, 
	$ 
	\end{itemize}
 }
\end{assumption}

In the linear function approximation regime, Assumption~\ref{assum:cover} reduces to Assumption~\ref{assume:lopl}. The first condition serves more for the ``optimization'' purpose as it can be cast as a variant of the \emph{quadratic growth condition} \cite{anitescu2000degenerate}. For more discussion about this condition, please refer to \cite{yin2022offline,di2023pessimistic}. We will go through the statistical limits of offline policy learning with these assumptions.

\section{Offline Policy Evaluation in Contextual Bandits and Tabular RL}\label{sec:ope}
Let's start by the problem of offline policy evaluation (OPE) --- the problem of evaluating a fixed target policy $\pi$ using data collected by executing a logging (or behavior) policy $\mu$. 

Readers may wonder why this is even a problem.  Admittedly, in \emph{supervised learning}, one can simply evaluate a classifier policy on a validation dataset. Similarly, in \emph{online RL}, one can roll out policy $\pi$ to see how well it works. 

The problem starts to arise in offline problems because we do not have data directly associated with policy $\pi$.

\subsection{OPE in contextual bandits}
Let us build intuition by considering the contextual bandit problem. Contextual bandit (CB) problem is a special case of RL with horizon $H=1$, where the initial state $s$ is referred to as the ``context''. For short horizon problems such as CB, the main challenge is to handle \emph{distribution shift}. Motivated by a change of measure formula
\[
v^\pi_{\text{CB}} = \E_{\substack{s\sim d_1,\\a\sim\pi(\cdot|s)}}[r(s,a)]=\E_{\substack{s\sim d_1,\\a\sim\mu(\cdot|s)}}[\frac{\pi(a|s)}{\mu(a|s)}r(s,a)],
\]
classical methods employ \emph{importance sampling} (IS) \cite{horvitz1952generalization,precup2000eligibility,liu2001monte} to corrects the mismatch in the distributions under the behavior policy $\mu$ and target policy $\pi$. Specifically, let the importance ratio be $\rho:=\pi(a|s)/\mu(a|s)$, then the IS estimator is computed as:
\[
\widehat{v}^\pi_{\text{IS-CB}}=\frac{1}{n}\sum_{i=1}^n \rho^{(i)}r^{(i)}.
\]
It is known to be effective for real-world applications such as news article recommendations \cite{dudik2011doubly,li2011unbiased}.


The mean square estimation error (MSE) of the IS estimator $\widehat{v}^\pi_{\text{IS-CB}}$ decomposes into two terms
\small{
$$
\frac{1}{n}(\E_\mu[\rho(s,a)^2 \Var[r|s,a]] + \Var_\mu[\rho(s,a) \E[r|s,a]]).
$$
}
The first term comes from the noisy reward while the second term comes from the random $(s,a)$ pair. Interestingly, if we make no assumption about $\E[r|s,a]$ and the size of the state-space is large, then IS is minimax optimal \citep[Theorem 1]{wang2017optimal}.  On the contrary, if $\E[r|s,a]$ can be estimated sufficiently accurately, then there are methods that asymptotically do not depend on the $\Var[\E[\cdot]]$. 

Perhaps a bit surprising to some readers, the above conclusion implies that even for \emph{on-policy} evaluation, i.e., $\pi=\mu$ and $\rho \equiv 1$, the naive value estimator of $\frac{1}{n}\sum_i r^{(i)}$ (IS with $\rho\equiv 1$) can be substantially improved using a good reward model. 
 
\subsection{``Curse of Horizon'' in OPE for RL}

The IS estimators are later adopted for long horizon sequential decision making (RL) problems. Concretely, denote the $t$-step importance ratio $\rho_t:=\pi_t(a_t|s_t)/\mu_t(a_t|s_t)$ and the cumulative importance ratio $\rho_{1:t} :=\prod_{t'=1}^t \rho_{t'}$, the (stepwise) Importance Sampling estimators for RL are defined as:
\begin{align*}
\widehat{v}^\pi_{\text{IS}}:=\frac{1}{n}\sum_{i=1}^n \widehat{v}_{\text{IS}}^{(i)}, \quad&\widehat{v}_{\text{IS}}^{(i)}:=\rho_{1:H}^{(i)}\cdot \sum_{t=1}^Hr_t^{(i)};\\
\widehat{v}^\pi_{\text{step-IS}}:=\frac{1}{n}\sum_{i=1}^n \widehat{v}_{\text{step-IS}}^{(i)},\quad &\widehat{v}_{\text{step-IS}}^{(i)}:=\sum_{t=1}^H\rho_{1:t}^{(i)}r_t^{(i)},
\end{align*}
where $\rho_{1:t}^{(i)}=\prod_{t'=1}^t \pi_{t'}(a_{t'}^{(i)}|s_{t'}^{(i)})/\mu_{t'}(a_{t'}^{(i)}|s_{t'}^{(i)})$. In addition, there are many works extend IS estimators and different variants such as \emph{weighted IS estimators} and \emph{doubly robust estimators} \citep{murphy2001marginal,hirano2003efficient,dudik2011doubly,jiang2016doubly} are proposed.

While IS-based OPE methods can correct the distribution shift and are statistically unbiased, the variance of the cumulative importance ratios $\rho_{1:t}$ may grow exponentially as the horizon goes long. We provide two concrete examples in Appendix~\ref{app:CoH_examples} which demonstrate that IS-based methods suffer from exponential variance even in the simplest tabular RL problems.

To make matters worse, the exponential sample complexity in $H$ cannot be improved in general in the large state-space regime unless we make additional assumptions \citep{jiang2016doubly}. This is known as the ``curse of horizon'' in offline RL. We refer readers to a sister article \citep{jiang2024offline} that appears in the same issue of this journal for a quest to obtain sufficient and necessary conditions that enable $\mathrm{poly}(H)$ sample complexity.

Instead of inspecting the exponential separation, we zoom into three well-established sufficient conditions (from Section~\ref{sec:setup}) that circumvent the ``curse of horizon'' and focus on 
providing fine-grained statistical characterization of the optimal OPE error bound and design adaptive estimators that take advantage of individual problem instances. We will cover the case with small finite state spaces in Section~\ref{sec:ope_tabular} and then function approximation in Section~\ref{sec:ope_FA}.





\subsection{OPE in Tabular MDPs}\label{sec:ope_tabular}


The most basic model of interest is the tabular MDP, namely, MDP when the state and action spaces are finite and that the policy $\mu$ gets to visit all states and actions that $\pi$ visits. A statistical lower bound for OPE in the tabular MDP setting is established in \cite{jiang2016doubly}.
\begin{theorem}[Cramer-Rao lower bound for tabular OPE \cite{jiang2016doubly}]\label{thm:lower} For discrete DAG MDPs with horizon $H$, the variance of any unbiased estimator $\hat{v}$ with $n$ trajectories from policy $\mu$ satisfies 
{\small
\begin{align*}
	n \cdot\Var[\hat{v}] \geq \sum_{t=1}^{H}\E_\mu\left[ \frac{d^\pi(s_t,a_t)^2}{d^\mu(s_t,a_t)^2}\Var\Big[V_{t+1}^\pi(s_{t+1})+r_t\Big| s_{t}, a_t\Big]\right].
	\end{align*}
    }
\end{theorem}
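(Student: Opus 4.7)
The plan is to reduce the statement to a multivariate Cramer-Rao lower bound on a natural parametric model of the DAG MDP. I would parameterize the MDP by the collection
$$\theta \;=\; \{\,P_t(\cdot\mid s,a),\; p_r(\cdot\mid s,a)\,\}_{t,s,a}$$
of next-state and reward distributions at each reachable $(t,s,a)$ triple, treating the logging policy $\mu$ as known and independent of $\theta$. Under $\mu$, the likelihood of one trajectory factors as $d_1(s_1)\prod_{t=1}^H \mu_t(a_t\mid s_t)\,p_r(r_t\mid s_t,a_t)\,P_t(s_{t+1}\mid s_t,a_t)$, so the log-likelihood decomposes additively into one term per $(t,s,a)$. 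Consequently the Fisher information $I(\theta)$ from $n$ trajectories is block diagonal, and the $(t,s,a)$-block equals $n\,d^\mu_t(s,a)$ times the single-observation Fisher information of $(r_t,s_{t+1})$ given $(s_t,a_t)=(s,a)$.

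Next I would compute $\nabla_\theta v^\pi$ block by block. Fix $(t,s,a)$ and unwind the Bellman equation \eqref{eqn:bellman}: perturbing the reward law at $(s,a)$ by $\delta p_r$ changes $v^\pi$ by $d^\pi_t(s,a)\cdot \int r\,\delta p_r(dr)$, while perturbing $P_t(\cdot\mid s,a)$ by $\delta P$ changes $v^\pi$ by $d^\pi_t(s,a)\cdot \sum_{s'}\delta P(s')\,V^\pi_{t+1}(s')$, since all other occupancies and values are unaffected. Absorbing both into the joint law of $(r_t,s_{t+1})\mid (s,a)$, the block-wise gradient is identified with the function $g(r,s')=d^\pi_t(s,a)\,\bigl[r+V^\pi_{t+1}(s')\bigr]$ on the sample space of $(r_t,s_{t+1})$.

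The multivariate Cramer-Rao inequality then gives $\Var[\hat v]\ge (\nabla_\theta v^\pi)^\top I(\theta)^{-1}(\nabla_\theta v^\pi)$, which sums over blocks. Invoking the standard identity that the Cramer-Rao bound for estimating a linear functional $\E_p[g]$ from $n_{\mathrm{eff}}$ i.i.d.\ draws equals $\Var_p[g]/n_{\mathrm{eff}}$, I would apply it at $p=\mathrm{Law}(r_t,s_{t+1}\mid s,a)$ with $g(r,s')=r+V^\pi_{t+1}(s')$ and $n_{\mathrm{eff}}=n\,d^\mu_t(s,a)$, producing the block contribution
$$\frac{(d^\pi_t(s,a))^2}{n\,d^\mu_t(s,a)}\,\Var\!\big[r_t+V^\pi_{t+1}(s_{t+1})\,\big|\,s_t=s,\,a_t=a\big].$$
Summing over $(s,a)$, rewriting $\sum_{s,a}d^\mu_t(s,a)(\cdot)$ as $\E_\mu[\cdot]$, and summing over $t$, then multiplying by $n$, reproduces the claimed bound.

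The main obstacle is the usual subtlety of Cramer-Rao on distributions that live on a simplex or on an unknown set: $I(\theta)$ is singular at each block and the inequality must be read with a Moore-Penrose pseudoinverse, or equivalently through a reduced parameterization of the simplex. One resolves this via the standard identity that the pseudoinverse quadratic form against the gradient of a linear functional of $p$ equals $\Var_p[\cdot]$; I would verify this in the multinomial case and extend to the joint $(r_t,s_{t+1})$ law by a least-favorable one-dimensional sub-family argument, which avoids imposing a parametric assumption on the reward noise while still yielding a valid Cramer-Rao lower bound for any locally unbiased $\hat v$.
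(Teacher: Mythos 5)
Your proposal is correct and follows essentially the same route as the cited source: parameterize the discrete DAG MDP by its per-$(t,s,a)$ transition/reward laws, exploit the block-diagonal Fisher information (with blocks weighted by $n\,d^\mu_t(s,a)$), compute $\nabla_\theta v^\pi$ via the occupancy-times-value identity, and apply the constrained (pseudoinverse) Cram\'er--Rao inequality on the simplex, which is exactly the ``constrained Fisher information'' computation the paper alludes to. The only cosmetic difference is that you handle stochastic rewards via a least-favorable sub-family rather than folding the reward into the next state, which does not change the argument.
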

The construction of the CR lower bound relies on computing the constrained version of Fisher Information Matrix. Under Assumption~\ref{assum:tope}, this right hand side can be readily bounded by $O(\tau_s\tau_a H^3)$ (after a change of measure into $\E_\pi[\cdot]$)\footnote{The tightest bound is actually $O(\tau_s\tau_a H^2)$ using Lemma~\ref{lem:horizon-re} which we describe later.}, 
which makes the IS estimators with a variance of $\exp(H)$ exponentially suboptimal.  

\noindent\textbf{Marginalized Importance Sampling.} In \cite{xie2019towards,yin2020asymptotically}, we addressed the exponential gap by an idea that is now referred to as Marginalized Importance Sampling. If we re-examine the value objective with RL, by a change of measure formula, 
\[
v^\pi:=\E_{\pi}\left[\sum_{t=1}^H  r_t\right]=\E_{\mu}\left[\sum_{t=1}^H \frac{d^\pi_t(s_t)}{d^\mu_t(s_t)} r_t^\pi(s_t)\right]
\]
with $r_t^\pi(s)=\E_{a\sim \pi(\cdot|s)}[r_t(s,a)|s]$. This reformulation reveals, rather than applying $\rho_{1:t}$, we could instead estimate the marginal state density ratio $d^\pi_t/d^\mu_t$. Inspired by this observation, the \emph{Marginalized Importance Sampling} (MIS) estimator is defined as 
{\small
\begin{equation}\label{MIS_def}
\widehat{v}^\pi_{\text{MIS}} =\frac{1}{n}\sum_{i=1}^n\sum_{t=1}^H\frac{\widehat{d}^\pi_t(s_t^{(i)})}{\widehat{d}^\mu_t(s^{(i)}_t)}\widehat{r}^\pi_t(s^{(i)}).
\end{equation}
}Different design choices for $\widehat{d}^\pi,\widehat{d}^\mu,\widehat{r}^\pi$ in \eqref{MIS_def} yield different MIS estimators.

\noindent\textbf{State MIS (SMIS \cite{xie2019towards}).} For SMIS, $\widehat{d}^\mu_t(\cdot)$ is directly estimated using the empirical mean, \emph{i.e.} $\widehat{d}^\mu_t(s_t):=\frac{1}{n}\sum_i \mathbf{1}(s_t^{(i)}=s_t):=\frac{n_{s_t}}{n}$ whenever $n_{s_t}>0$ and $\widehat{d}^\pi_t(s_t)/\widehat{d}^\mu_t(s_t)=0$ when $n_{s_t}=0$. Marginal state distributions are estimated via recursion $\widehat{d}_t^\pi  = \widehat{P}^{\pi}_t \widehat{d}_{t-1}^\pi$, followed by the estimations $P^\pi_t(s_t|s_{t-1})$ and state reward $r^\pi_t(s_t)$ as:
{\small
\begin{equation}\label{eqn:SMIS-con}
  \begin{aligned}
  \widehat{P}^{\pi}_t(s' | s)  =&  \frac{1}{n_{s}} \sum_{i=1}^{n}   \frac{\pi( a^{(i)}| s)}{\mu( a^{(i)}| s)} \cdot \mathbf{1}\{(s_{t-1}^{(i)},s_t^{(i)}) = (s,s')\};
  \\
  \widehat{r}_t^{\pi}(s)  =&  \frac{1}{n_{s}}\sum_{i=1}^n \frac{\pi(a^{(i)}|s)}{\mu(a^{(i)}|s)} r_t^{(i)} \cdot \mathbf{1}(s_t^{(i)} = s).
  \end{aligned}
  \end{equation}}

SMIS \eqref{MIS_def} explicitly gets rid of the cumulative importance ratio $\rho_{1:t}$ and provides the polynomial sample complexity for horizon under Mean Square Error.

\begin{theorem}
Under Assumption~\ref{assum:tope} and other mild regularity conditions, the MSE of state marginalized importance sampling satisfies
{\small
\begin{align*}
&\mathbb{E}\left[\left( \widehat{v}_{\mathrm{SMIS}}^\pi-v^\pi\right)^2\right]\\ = & \frac{1}{n} \sum_{t=1}^H \E_{\mu} \left[\frac{d_t^\pi\left(s_t\right)^2}{d_t^\mu\left(s_t\right)^2} \operatorname{Var}_\mu[\left.\frac{\pi\left(a_t \mid s_t\right)}{\mu\left(a_t \mid s_t\right)}\left(V_{t+1}^\pi\left(s_{t+1}\right)+r_t\right) \right\rvert\,s_t] \right]\\
& \cdot \big(1+O(\sqrt{\frac{\log n}{n}})\big)+O\big(\frac{1}{n^2}\big).
\end{align*}}
The big $O$ notation hides universal constants. 
\end{theorem}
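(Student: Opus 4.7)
The plan is to reduce the MSE analysis of SMIS to a time-telescoped sum of conditional variances via a plug-in / simulation-style decomposition, and then extract the leading order using Bernstein-type concentration on the visit counts.

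First, I would rewrite the estimator in a transparent form. Using $\widehat{d}^\mu_t(s)=n_s/n$ in \eqref{MIS_def}, a short calculation collapses the outer average and gives
\[
\widehat{v}^\pi_{\mathrm{SMIS}} \;=\; \sum_{t=1}^H \sum_s \widehat{d}^\pi_t(s)\,\widehat{r}^\pi_t(s) \;=\; \E_{d_1}[\widehat{V}^\pi_1],
\]
where $\widehat{V}^\pi_t$ satisfies the Bellman recursion under the plug-in kernels $(\hP^\pi_t,\widehat{r}^\pi_t)$. Applying the value-difference (``simulation'') lemma to this plug-in MDP against the true one yields
\[
\widehat{v}^\pi_{\mathrm{SMIS}}-v^\pi \;=\; \sum_{t=1}^H \sum_s d^\pi_t(s)\,\epsilon_t(s) \;+\; \mathrm{R}_n,
\]
with local error
\[
\epsilon_t(s)=\frac{1}{n_s}\!\!\sum_{i:\,s_t^{(i)}=s}\!\!\Big[\rho_t^{(i)}\big(r_t^{(i)}+V^\pi_{t+1}(s_{t+1}^{(i)})\big)-r^\pi_t(s)-(P^\pi_t V^\pi_{t+1})(s)\Big],
\]
a zero-mean average (given the visit pattern) of i.i.d.\ summands with per-sample conditional variance exactly $\Var_\mu[\rho_t(V^\pi_{t+1}(s_{t+1})+r_t)\mid s_t=s]$; the remainder $\mathrm{R}_n$ collects the second-order cross $(\widehat{d}^\pi_t-d^\pi_t)\cdot\epsilon_t$ contribution.

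Next, I would square the decomposition and take expectations. The summands $X_t^{(i)}:=\rho_t^{(i)}(r_t^{(i)}+V^\pi_{t+1}(s_{t+1}^{(i)}))-r^\pi_t(s_t^{(i)})-(P^\pi_t V^\pi_{t+1})(s_t^{(i)})$ form a per-trajectory martingale difference sequence under $\mu$, since $\E_\mu[\rho_t(r_t+V^\pi_{t+1}(s_{t+1}))\mid s_t]=r^\pi_t(s_t)+(P^\pi_t V^\pi_{t+1})(s_t)$. Consequently, after conditioning on the count vector $(n_s)_{t,s}$, the cross $(t,s)$-covariances of $\epsilon_t(s)$ vanish (across trajectories by independence, within a trajectory by the martingale property applied at the larger time index), leaving the diagonal
\[
\sum_{t=1}^H \E\!\left[\frac{d^\pi_t(s_t)^2}{n_{s_t}}\,\Var_\mu\!\Big[\rho_t(V^\pi_{t+1}+r_t)\,\Big|\,s_t\Big]\right].
\]
Under Assumption~\ref{assum:tope}, $\min_{t,s}d^\mu_t(s)\geq d_m>0$ and $\tau_s,\tau_a<\infty$, so Bernstein gives $n_s=n\,d^\mu_t(s)(1+O(\sqrt{\log n/n}))$ uniformly in $(t,s)$ on a high-probability event $\mathcal{E}$. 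Substituting this in and re-expressing the outer expectation as $\E_\mu$ (which it already is, since the outer randomness is the trajectory sampled by $\mu$) produces the stated leading term multiplied by $(1+O(\sqrt{\log n/n}))$.

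The main obstacle will be making the cross-term cancellation rigorous despite the random normalizer $n_s$ coupling $\epsilon_t(s)$ to all trajectories at once. I would handle this by restricting the argument to the good event $\mathcal{E}$, on which $n_s$ is essentially pinned to its mean and conditioning on $(n_s)$ only introduces the $(1+O(\sqrt{\log n/n}))$ multiplicative slack. Outside $\mathcal{E}$, the SMIS estimator remains bounded (by the convention $\widehat{d}^\pi_t/\widehat{d}^\mu_t=0$ when $n_s=0$ and by $\tau_s\tau_a H$ otherwise), and tightening the Bernstein bound to $\P(\mathcal{E}^c)=O(1/n^2)$ absorbs this tail into the final $O(1/n^2)$ remainder. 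The second-order term $\mathrm{R}_n$ is controlled analogously: both $\|\widehat{d}^\pi_t-d^\pi_t\|_1$ and $\epsilon_t$ are $O(1/\sqrt{n})$ in second moment (again using $d_m>0$ and $\tau_s,\tau_a<\infty$), so $\E[\mathrm{R}_n^2]=O(1/n^2)$. Assembling these pieces yields the stated MSE expansion.
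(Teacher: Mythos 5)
Your proposal is correct and follows essentially the same route as the analysis of \cite{xie2019towards} that the paper summarizes: a plug-in/Bellman-recursion decomposition of the SMIS error into per-state local Bellman errors, a martingale/covariance computation that kills the cross terms and isolates the diagonal conditional variances, and Bernstein concentration on the visit counts $n_s$ to replace $n_s$ by $n\,d_t^\mu(s)$ up to the $(1+O(\sqrt{\log n/n}))$ factor, with the tail and second-order terms absorbed into $O(1/n^2)$. The one step to tighten is the cross-term cancellation: conditioning on the full count vector $(n_s)_{t,s}$ distorts the transition law (the counts at time $t+1$ are determined by the transitions at time $t$), so the per-sample conditional variance is not ``exactly'' $\Var_\mu[\rho_t(V^\pi_{t+1}+r_t)\mid s_t]$ after that conditioning; the rigorous version computes the covariance of $\widehat{d}^\pi_t$ recursively forward in time with respect to the natural filtration --- precisely the ``delicate calculation leveraging the Bellman recursion in the covariance matrix'' that the paper alludes to.
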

The MSE of SMIS is $O(\tau_s\tau_a H^3/n)$ and the result holds even when the action space is continuous. This exponentially improves over the standard IS. 

SMIS however, does not match the Cramer-Rao lower bound. In particular, the asymptotic MSE (modulo a $1+O(n^{-1/2})$ multiplicative factor and 
an $O(1/n^2)$ additive factor) is {\small
\[
\frac{1}{n} \sum_{t=1}^H \E_{\mu} \left[\frac{d_t^\pi\left(s_t\right)^2}{d_t^\mu\left(s_t\right)^2} \operatorname{Var}_\mu[\left.\frac{\pi\left(a_t \mid s_t\right)}{\mu\left(a_t \mid s_t\right)}\left(V_{t+1}^\pi\left(s_{t+1}\right)+r_t\right) \right\rvert\,s_t] \right]
\]} and is asymptotically bigger than the CR lower bound in Theorem~\ref{thm:lower} by an additive term 
$$\frac{1}{n} \sum_{t=1}^H \E_{\mu}\bigg[ \frac{d_t^\pi\left(s_t\right)^2}{d_t^\mu\left(s_t\right)^2} \operatorname{Var}_\mu[\frac{\pi_t\left(a_t \mid s_t\right)}{\mu_t\left(a_t \mid s_t\right)} Q_t^\pi\left(s_t, a_t\right)\mid s_t]\bigg]
$$
due to the decomposition via \emph{Law of total variance} that
{\small
\begin{equation}
\begin{aligned}
& \operatorname{Var}_\mu\left[\left.\frac{\pi\left(a_t\mid s_t\right)}{\mu\left(a_t \mid s_t\right)} [V_{t+1}^\pi\left(s_{t+1}\right)+r_t] \right\rvert\, s_t\right] \\
= & \mathbb{E}_\mu\left[\left.\frac{\pi\left(a_t \mid s_t\right)^2}{\mu\left(a_t \mid s_t\right)^2} \operatorname{Var}\left[V_{t+1}^\pi\left(s_{t+1}\right)+r_t \mid s_t, a_t\right] \right\rvert\, s_t\right]\\
+&\operatorname{Var}_\mu\left[\left.\frac{\pi\left(a_t \mid s_t\right)}{\mu\left(a_t \mid s_t\right)} Q_t^\pi\left(s_t, a_t\right) \right\rvert\, s_t\right].
\end{aligned}
\end{equation}}

Not only does it miss the CR lower bound by an additive factor, it also has a worse dependence in horizon $H$. SMIS has an MSE that scales $O(H^3)$, but one can show that the Cramer-Rao lower bound in Theorem~\ref{thm:lower} scales only quadratically in $H$. This is a non-trivial fact that follows from the following lemma (iterative law of total variance).
\begin{lemma}\label{lem:horizon-re}[\cite{gheshlaghi2013minimax,azar2017minimax,yin2020asymptotically}]For any policy $\pi$ and MDP,
{\small
\begin{align*}
&\mathrm{Var}_\pi\left[\sum_{t=1}^H r_t\right] = \sum_{t=1}^H \Big(\E_\pi\left[ \mathrm{Var}\left[V^\pi_{t+1}(s_{t+1})+r_t \middle|s_t,a_t\right] \right]\\
&\quad +  \E_\pi\left[ \mathrm{Var}\left[  \E[V^\pi_{t+1}(s_{t+1})+r_t | s_t, a_t]  \middle|s_t\right] \right]\Big).
\end{align*}
}
\end{lemma}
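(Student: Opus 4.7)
I would prove the identity via a telescoping/martingale-difference argument. The idea is to rewrite the total return as the initial value plus a sum of zero-mean ``per-step Bellman residuals,'' further split each residual into a state-action component (whose conditional variance reproduces the first summand on the RHS) and a state component (whose conditional variance reproduces the second summand), and then collect everything using orthogonality.

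\textbf{Key steps.} First, using $V^\pi_{H+1}\equiv 0$ and telescoping, I rewrite
\[
\sum_{t=1}^H r_t \;=\; V^\pi_1(s_1) + \sum_{t=1}^H D_t, \qquad D_t := r_t + V^\pi_{t+1}(s_{t+1}) - V^\pi_t(s_t),
\]
so that $\{D_t\}$ is a martingale-difference sequence adapted to the natural filtration $\mathcal{F}_t=\sigma(s_1,a_1,\ldots,s_t)$, namely $\E_\pi[D_t\mid\mathcal{F}_t]=0$ by \eqref{eqn:bellman}. Next, I split each $D_t = X_t + Y_t$ with
\[
X_t := r_t + V^\pi_{t+1}(s_{t+1}) - Q^\pi_t(s_t,a_t), \qquad Y_t := Q^\pi_t(s_t,a_t) - V^\pi_t(s_t),
\]
which again by \eqref{eqn:bellman} satisfy $\E_\pi[X_t\mid s_t,a_t]=0$ and $\E_\pi[Y_t\mid s_t]=0$. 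Squaring and the tower property then identify $\E_\pi[X_t^2]$ with the first RHS summand and $\E_\pi[Y_t^2]$ with the second (using $Q^\pi_t(s_t,a_t)=\E[r_t+V^\pi_{t+1}(s_{t+1})\mid s_t,a_t]$). Finally, I verify that all pairwise cross terms vanish: $\E_\pi[X_tY_t]=0$ since $Y_t$ is $\sigma(s_t,a_t)$-measurable while $\E_\pi[X_t\mid s_t,a_t]=0$; $\E_\pi[X_sX_t],\E_\pi[Y_sY_t],\E_\pi[X_sY_t]$ for $s<t$ vanish by conditioning on the $\sigma$-field with respect to which the later factor is mean-zero; and $V^\pi_1(s_1)$ is uncorrelated with every $D_t$ because $\E_\pi[D_t\mid s_1]=0$. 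Combining,
\[
\Var_\pi\!\Big[\sum_{t=1}^H r_t\Big] - \Var_\pi[V^\pi_1(s_1)] \;=\; \sum_{t=1}^H\big(\E_\pi[X_t^2]+\E_\pi[Y_t^2]\big),
\]
which matches the claimed identity once the LHS is read conditionally on $s_1$ (see the caveat below).

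\textbf{Main obstacle and caveat.} The only genuinely delicate part is the orthogonality bookkeeping: one has to pick the right conditioning $\sigma$-field for each pair, e.g.\ $\E_\pi[X_t\mid\mathcal{F}_t,a_t]=0$ handles $\E_\pi[X_sY_t]$ and $\E_\pi[X_sX_t]$ with $s<t$ as well as same-$t$ $\E_\pi[X_tY_t]$, while $\E_\pi[Y_t\mid s_t]=0$ handles $\E_\pi[Y_sY_t]$ with $s<t$. A secondary issue worth flagging is that, strictly speaking, the identity as displayed holds conditional on $s_1$ (equivalently when $d_1$ is a point mass); if $s_1$ is genuinely random under $d_1$, then an extra additive term $\Var_\pi[V^\pi_1(s_1)]$ appears on the right-hand side, consistent with $\Var_\pi[\sum_t r_t] = \Var_\pi[V^\pi_1(s_1)] + \E\big[\Var_\pi[\sum_t r_t\mid s_1]\big]$.
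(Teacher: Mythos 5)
Your proof is correct and is essentially the standard argument behind this lemma: the telescoping decomposition $\sum_t r_t = V^\pi_1(s_1)+\sum_t(X_t+Y_t)$ with orthogonal martingale differences is exactly the ``iterative law of total variance'' that the cited references use, just organized as a single orthogonal expansion rather than a backward induction. The identification $\E_\pi[X_t^2]$ and $\E_\pi[Y_t^2]$ with the two summands on the right-hand side, and the cross-term bookkeeping via conditioning on $\sigma(s_1,a_1,\ldots,s_t,a_t)$ and $\sigma(s_1,a_1,\ldots,s_t)$ respectively, are all as they should be. Your caveat is also well taken: as displayed, the identity holds conditionally on $s_1$ (or when $d_1$ is a point mass); for a genuinely random initial state the right-hand side should carry an additional $\Var_{s_1\sim d_1}\left[V^\pi_1(s_1)\right]$, which is how the cited sources state it and is immaterial for the paper's use of the lemma since that term is nonnegative and bounded by $H^2$.
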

Observe that the first term on the RHS is the CR lower bound and the second term is non-negative, thus, by $|r_t| \leq 1$ we get that the CR lower bound of $O(\tau_s\tau_a H^2)$.

The gap from the lower bound is rooted in the importance ratios applied for state transition estimations \eqref{eqn:SMIS-con} which eventually propagate into the conditional variance terms of MSE. It is an open problem whether the $O(\tau_s\tau_a H^3/n)$ bound of SMIS can be improved in the setting of (exponentially) large action space $\cA$. Our conjecture is inthe negative, similar to the contextual bandits results by \cite{wang2017optimal}.

When the action space is also finite,  \cite{yin2020asymptotically} proved that an alternative estimator, Tabular MIS, closes the gap. 

\noindent\textbf{Statistically optimal OPE --- Tabular MIS.} 
To remove importance weights \eqref{eqn:SMIS-con}, we need to go beyond state transitions and estimate state-action transitions $\widehat{P}_{t+1}(s'|s,a)$ and state-action reward $\widehat{r}_t(s,a)$ via: 
{\small
\begin{equation}\label{eq:tabular_MIS_construction}
\begin{aligned}
\widehat{P}_{t+1}(s'|s,a)&=\frac{\sum_{i=1}^n\mathbf{1}[(s^{(i)}_{t+1},a^{(i)}_t,s^{(i)}_t)=(s',s,a)]}{n_{s,a}}\\
\widehat{r}_t(s,a)&=\frac{\sum_{i=1}^n r_t^{(i)}\mathbf{1}[(s^{(i)}_t,a^{(i)}_t)=(s,a)]}{n_{s,a}},\\
\end{aligned}
\end{equation}
}with $\widehat{P}_{t+1}(s'|s,a)=0$ and $\widehat{r}_t(s,a)=0$ if $n_{s,a}=0$. The corresponding estimation of $\widehat{P}^\pi_t(s'|s)$ and $\widehat{r}^\pi_t(s)$ are defined by averaging $\widehat{P}_t$ and $\widehat{r}_t$ over $\pi$ and then $\widehat{d}_t^\pi  = \widehat{P}^{\pi}_t \widehat{d}_{t-1}^\pi$.  Tabular MIS (TMIS) \cite{yin2020asymptotically} is then defined via plugging $\widehat{P}^\pi_t$, $\widehat{r}^\pi_t$ and $\widehat{d}^\pi_t$ into \eqref{MIS_def}. It differs from SMIS by leveraging the fact that each state-action pair is visited frequently under the tabular setting.


\textbf{MIS vs. Model-based estimators.} Tabular marginalized importance sampling estimator has dual expressions
{\small
\[
 \frac{1}{n}\sum_{i=1}^n\sum_{t=1}^H\frac{\widehat{d}^\pi_t(s_t^{(i)})}{\widehat{d}^\mu_t(s^{(i)}_t)}\widehat{r}^\pi_t(s^{(i)})=\widehat{v}^\pi_{\text{TMIS}}=\sum_{t=1}^H \sum_{s, a} \widehat{d}_t^\pi\left(s, a\right) \widehat{r}_t\left(s, a\right),
\]}where the right-hand-side expression reveals TMIS is also a model-based estimator as it estimates model transitions $\widehat{P}_t$ and replaces the model with the estimated model for the evaluation purpose. Consequently, despite their differences in complex settings, marginalized importance sampling and model-based estimators can be unified through TMIS in tabular RL, using standard MLE estimators, similar to traditional statistical estimation problems \cite{fisher1925theory}. More importantly, it is statistically optimal for the tabular OPE problem.

\begin{table}[tb]
    \centering
    \begin{tabular}{ccc}
    \hline
         Estimator & MSE (realizable) & MSE (misspecified)  \\\hline
      Import. Sampl. (IS) & $\exp(H)/n$ & $\exp(H)/n$  \\
      Doubly Robust &$\exp(H)/n$ & $\exp(H)/n$ \\
      State MIS &$H^3\tau_s\tau_a/n$ & $H^3\tau_s\tau_a/n+ \mathrm{bias}^2$  \\
      TMIS/FQE/Model-Based  & $H^2\tau_s\tau_a/n$ & $H^2\tau_s\tau_a/n + \mathrm{bias}^2$ \\
      \hline
    \end{tabular}
    \caption{Summary of OPE methods for tabular RL and their squared estimation error.}
    \label{tab:ope_tabular_summary}
\end{table}

\begin{theorem}\label{thm:tmis}Let $\mathcal{D}=\left\lbrace (s_t^{(i)},a_t^{(i)},r_t^{(i)})\right\rbrace_{i\in[n]}^{t\in[H]} $ be obtained by running a behavior policy $\mu$ and $\pi$ is the target policy to evaluate. Under Assumption~\ref{assum:tope} and other mild regularity conditions, the MSE of tabular marginalized importance sampling satisfies
{\small
\begin{equation}
\begin{aligned}
&\mathbb{E}\left[\left( \widehat{v}_{\mathrm{TMIS}}^\pi-v^\pi\right)^2\right]\\ = & \frac{1}{n} \sum_{t=1}^H \E_{\mu} \left[\frac{d_t^\pi\left(s_t,a_t\right)^2}{d_t^\mu\left(s_t,a_t\right)^2} \operatorname{Var}_\mu[\left.\left(V_{t+1}^\pi\left(s_{t+1}\right)+r_t\right) \right\rvert\,s_t] \right]\\
& \cdot [1+O(\sqrt{\frac{\log n}{n}})]+O(\frac{1}{n^2}).
\end{aligned}
\end{equation}}
The big $O$ notation hides universal constants. 
\end{theorem}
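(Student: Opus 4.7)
The plan is to recognize TMIS as a model-based / fitted-Q evaluation estimator and then carry out a Taylor/linearization analysis of the error around the true model. By construction, $\widehat{v}^\pi_{\text{TMIS}} = \sum_s d_1(s)\widehat{V}^\pi_1(s)$, where $\widehat{V}^\pi_t$ solves the Bellman equation for the plug-in MDP $(\widehat{P}_t,\widehat{r}_t)$ under policy $\pi$. The first step is therefore to invoke the standard \emph{simulation lemma} (i.e., the value-difference identity) to write
\begin{equation*}
\widehat{v}^\pi_{\text{TMIS}}-v^\pi \;=\; \sum_{t=1}^H \E_\pi\bigl[(\widehat{r}_t-r_t)(s_t,a_t) + \bigl((\widehat{P}_{t+1}-P_{t+1})V^\pi_{t+1}\bigr)(s_t,a_t) \bigr] + \Delta,
\end{equation*}
where $\Delta$ is the second-order remainder coupling model errors at different $t$; note the outer expectation uses the true $d^\pi_t$, so the leading term becomes $\sum_t \sum_{s,a} d^\pi_t(s,a)\,\xi_{t,s,a}$ with $\xi_{t,s,a}:=(\widehat{r}_t-r_t)(s,a) + (\widehat{P}_{t+1}-P_{t+1})(\cdot|s,a)^\top V^\pi_{t+1}$.

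Next, I would analyze $\xi_{t,s,a}$ conditional on $n_{s,a}$. Each visit to $(s,a)$ at time $t$ contributes an i.i.d.~sample of $V^\pi_{t+1}(s')+r_t$ with mean $r_t(s,a)+P_{t+1}(\cdot|s,a)^\top V^\pi_{t+1}$ and conditional variance $\Var_\mu[V^\pi_{t+1}(s_{t+1})+r_t\mid s_t=s,a_t=a]$, so
\begin{equation*}
\E\bigl[\xi_{t,s,a}^2 \,\big|\, n_{s,a}\bigr] \;=\; \frac{1}{n_{s,a}}\Var_\mu[V^\pi_{t+1}(s_{t+1})+r_t\mid s_t=s,a_t=a]
\end{equation*}
on the event $\{n_{s,a}\ge 1\}$. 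Crucially, for fixed $t$, the errors $\{\xi_{t,s,a}\}_{(s,a)}$ are mutually independent (they use disjoint data subsamples), and across time steps they are uncorrelated after conditioning on the visitation counts (by Markovness and the tower property applied to the Bellman residuals against $V^\pi_{t+1}$). Hence the cross-terms in $\E[(\widehat{v}^\pi_{\text{TMIS}}-v^\pi)^2]$ vanish at first order, and the dominant piece is
\begin{equation*}
\sum_{t=1}^H\sum_{s,a} d^\pi_t(s,a)^2\,\E\!\left[\frac{1}{n_{s,a}}\right]\Var_\mu\bigl[V^\pi_{t+1}(s_{t+1})+r_t\mid s_t=s,a_t=a\bigr].
\end{equation*}

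The third step is to replace $\E[1/n_{s,a}]$ by $1/(n d^\mu_t(s,a))$. Under Assumption~\ref{assum:tope}, $n_{s,a}\sim\mathrm{Binomial}(n,d^\mu_t(s,a))$ with mean $\ge n d_m$, so a multiplicative Chernoff bound on the ``good event'' $\mathcal{E}:=\{n_{s,a}\ge \tfrac{n}{2}d^\mu_t(s,a)\ \forall t,s,a\}$ gives $\P(\mathcal{E}^c)\le \exp(-\Omega(nd_m))$, and on $\mathcal{E}$ one has $\E[1/n_{s,a}\mid\mathcal{E}] = \tfrac{1}{nd^\mu_t(s,a)}(1+O(\sqrt{\log n/n}))$ by a standard binomial-reciprocal expansion. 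Summing and rewriting $\sum_{s,a}d^\mu_t(s,a)\tfrac{d^\pi_t(s,a)^2}{d^\mu_t(s,a)^2}(\cdots)$ as an $\E_\mu$ yields exactly the leading term in the theorem.

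The main obstacle, and what I would devote most of the bookkeeping to, is controlling $\Delta$ and related higher-order pieces so that they contribute $O(1/n^2)$ (absorbing them in the stated $[1+O(\sqrt{\log n/n})] + O(1/n^2)$). Specifically, one must bound (i) the quadratic cross terms $(\widehat{P}_{t+1}-P_{t+1})(\widehat{V}^\pi_{t+1}-V^\pi_{t+1})$ that arise when iterating the simulation lemma, using $\|\widehat{V}^\pi_{t+1}-V^\pi_{t+1}\|_\infty = \widetilde{O}(H/\sqrt{nd_m})$ from a uniform Bernstein bound on $(\widehat{P}_{t+1},\widehat{r}_t)$; (ii) the contribution of $\mathcal{E}^c$, which is negligible by Assumption~\ref{assum:tope} and $|v^\pi|\le H R_{\max}$; and (iii) the bias incurred by setting $\widehat{P}_t,\widehat{r}_t$ to zero when $n_{s,a}=0$, which is at most $\exp(-\Omega(nd_m))\cdot H R_{\max}\cdot \tau_s\tau_a$ in MSE. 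Combining these pieces and invoking Lemma~\ref{lem:horizon-re} to confirm that the leading term matches the Cramer--Rao lower bound of Theorem~\ref{thm:lower} up to the $(1+o(1))$ factor concludes the argument.
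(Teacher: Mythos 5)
Your proposal is correct in its essentials, but it organizes the argument differently from the paper. The paper's analysis (following \cite{yin2020asymptotically}) works on the \emph{occupancy side}: it propagates the error $\widehat{d}^\pi_t - d^\pi_t$ through the recursion $\widehat{d}^\pi_t = \widehat{P}^\pi_t\widehat{d}^\pi_{t-1}$ and tracks the covariance matrix of $\widehat{d}^\pi_t$ step by step, then pairs it with $\widehat{r}^\pi_t$; the survey explicitly contrasts this with the crude simulation-lemma bound $H^2\sup\|\widehat{P}-P\|_1$, which loses factors of $H$ and $S$. You instead work on the \emph{value side}: you linearize the plug-in functional via the exact value-difference identity, obtaining $\sum_t\sum_{s,a}d^\pi_t(s,a)\,\xi_{t,s,a}$ plus a second-order remainder, kill the cross terms by a martingale/tower argument, and compute the leading variance from the binomial counts. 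The two decompositions are algebraically dual — unrolling the occupancy recursion and resumming against $r^\pi_t$ reconstructs exactly your $V^\pi_{t+1}$-weighted model errors — so both yield the same leading term; your route is the more standard influence-function view and is arguably cleaner, while the paper's recursion handles the second-moment bookkeeping directly. Two points to be careful about: (i) the exact identity cannot carry both the true $d^\pi_t$ and the true $V^\pi_{t+1}$ simultaneously (one must be the estimated version), so your display is the linearization and all of the burden of the theorem's $O(1/n^2)$ additive and $O(\sqrt{\log n/n})$ multiplicative corrections falls on $\Delta$ and on the first-order-times-$\Delta$ cross term, which by Cauchy--Schwarz is $\tilde O(n^{-3/2})$ and must be absorbed into the multiplicative factor rather than the $O(1/n^2)$ term — your sketch gets this right but it is where the ``delicate calculation'' actually lives; (ii) your derivation naturally produces the variance conditioned on $(s_t,a_t)$, which is the form matching the Cram\'er--Rao bound of Theorem~\ref{thm:lower} (the conditioning on $s_t$ alone in the displayed statement should be read as conditioning on the pair).
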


\textbf{Asymptotic efficiency and local minimaxity.}\label{remark:asym}
	The error bound implies that 
	{\small$\lim_{n\rightarrow \infty}{n} \cdot\E[ (\widehat{v}_{\mathrm{TMIS}}^\pi -  v^\pi)^2]$} equals  
		{\small 
	\begin{align*}
	\sum_{t=1}^{H}\E_\mu\left[ \frac{d^\pi(s_t,a_t)^2}{d^\mu(s_t,a_t)^2}\Var\Big[V_{t+1}^\pi(s_{t+1})+r_t\Big| s_{t}, a_t\Big]\right].
	\end{align*}
}This result exactly matches the CR-lower bound \ref{thm:lower} and strictly improves \emph{state MIS} estimator, indicating that both the lower bound \ref{thm:lower} and upper bound \ref{thm:tmis} are tight. Modern estimation theory \cite{van2000asymptotic} establishes that CR-lower bound is the asymptotic minimax lower bound for the MSE of \emph{all} estimators in every local neighborhood of the parameter space.\footnote{In classical statistical text, CR-lower bound is often used to lower bound the variance of the class of \emph{unbiased} estimators.} Therefore, Tabular marginalized importance sampling is asymptotically efficient, and locally minimax optimal (i.e. optimal for every problem instance separately).


This result provides new insight even for on-policy evaluation. By default, on-policy evaluation is computed by averaging Monte Carlo returns and its MSE is $\mathrm{Var}_\pi\left[\sum_{t=1}^H r_t\right]$. As implied by Lemma~\ref{lem:horizon-re},
the surprising observation is that TMIS improves the efficiency even for the on-policy evaluation problem. This means the natural Monte Carlo estimator of the reward in the on-policy evaluation problem is in fact asymptotically inefficient. 

\begin{figure}[ht]
    \centering
    \includegraphics[width=0.23\textwidth]{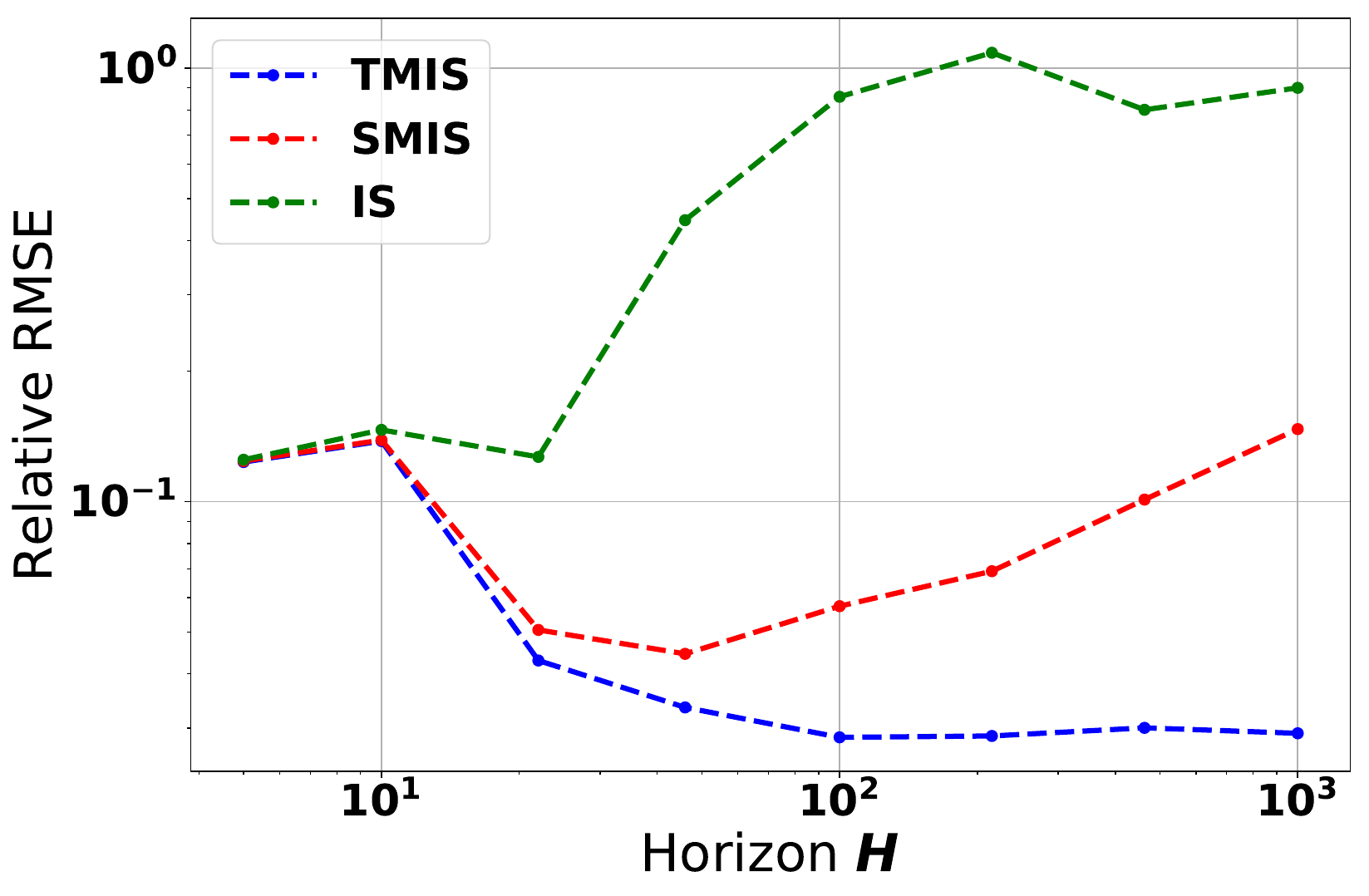}
\includegraphics[width=0.23\textwidth]{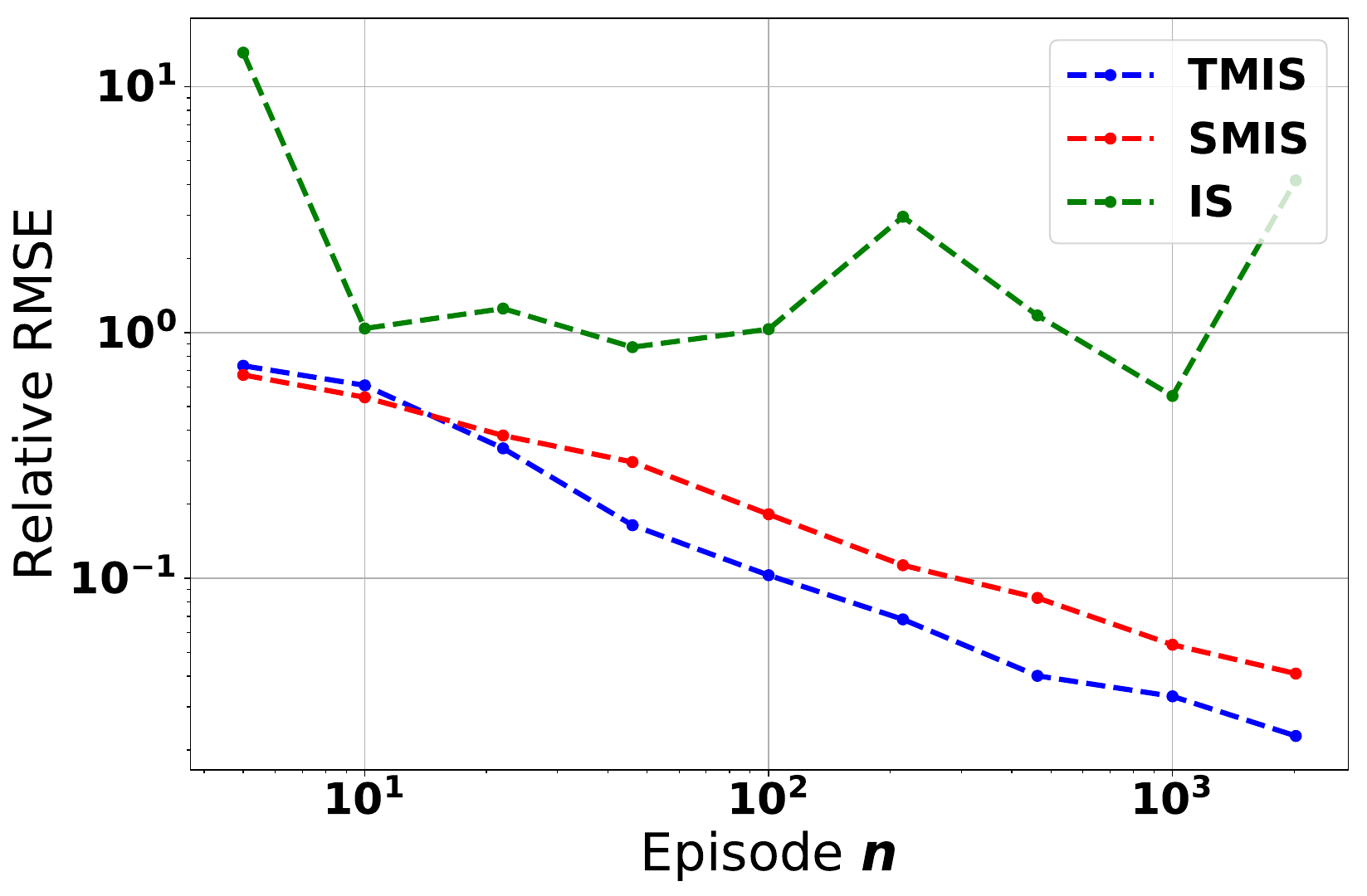}
    \caption{Adopted from \cite{yin2020asymptotically}. Different scaling law for TMIS, SMIS and IS for a time-inhomogenuous MDP. Relative RMSE ($\sqrt{\text{MSE}}/v^\pi$). For episode $n$, the right panel shows both TMIS and SMIS have a convergence rate of $n^{-1/2}$. For horizon $H$, the left panel shows the MSE of TMIS has the optimal dependence $O(H^2)$, while SMIS has the dependence $O(H^3)$.}
    \label{fig:two_figures}
\end{figure}

\textbf{Discussion.} The idea of MIS estimators goes well beyond tabular settings. MIS can be viewed as a dual form of the Bellman value decomposition. This view motivated researchers \citep{liu2018breaking,hallak2017consistent,gelada2019off} to come up with alternative schemes for function approximations in deep RL, e.g., visitation measure $d^\pi(s,a)$ or the importance weights $\rho(s,a) = \frac{d^\pi}{d^\mu}(s,a)$ instead of the value functions. One can also approximates both the $Q$ function and $\rho$ functions, e.g., the double reinforcement learning approach \citep{kallus2020double,kallus2022efficiently} and the DICE family \citep{nachum2019dualdice,uehara2020minimax,zhang2020gendice}. It remains one of the active research areas in RL theory and algorithm design.

As a technical note, the analysis of SMIS and TMIS involves somewhat delicate calculations that leverage the Bellman recursion in both the estimated $\hat{d}^\pi$ and its covariance matrix. As a comparison --- since TMIS is equivalent to the model-based plug-in estimator --- we apply the classical ``simulation lemma'' \citep{kearns2002near} to it, which implies a bound of 
{\small
$$
|\hat{v}^\pi - v^\pi| \leq H^2\sup_{h,s,a}\|\hat{P}_h(\cdot|s,a)-P_h(\cdot|s,a)\|_1 = \tilde{O}(\sqrt{\frac{H^4S^2}{nd_m}}).
$$
}
Observe that our more delicate analysis improves the bound to $\sqrt{\frac{H^2\tau_s\tau_a}{n}}\leq \sqrt{\frac{H^2}{n d_m }}$.


 \section{Offline Policy Evaluation with function approximation}\label{sec:ope_FA}


Next, we switch gears to consider OPE when the (state, action) pairs are described by a continuous feature vector $\phi(s,a) \in \R^d$. This covers most real-life RL problems (such as autonomous driving, robotic arm control, and health care). 

The key challenge here is to generalize across unseen states while maintain the statistical optimality at the same time. In the discrete setting, empirical count (maximum likelihood estimate) is a natural algorithm that is optimal, but it cannot be generalized in the function approximation setting. However, to evaluate MDPs, the Bellman equations are universally true regardless of the setting. As a result, one can apply the approximate dynamic programming principles \cite{powell2007approximate} for the given function class and data. This is realized by the following Fitted Q-Evaluation (FQE). For OPE with function approximation, we review the time-homogenuous RL (\emph{i.e.} transition probabilities are identical across time $P_t=P$) and reformulate offline data $\mathcal{D}=\left\{\left(s_h^k, a_h^k, r_h^k\right)\right\}_{h \in[H], k \in[n]}=\left\{\left(s_i, a_i, r_i\right)\right\}_{i \in[N]}$ throughout the section ($N=nH$).

\vspace{1em}

\textbf{Fitted Q Evaluation.} FQE is a variant of Fitted Q Iteration \cite{ernst2005tree,antos2007fitted} which is designed for policy optimization purpose. A brief history of FQI is discussed in Section~\ref{sec:id_offline_fa}. For a given function class $\mathcal{F}$ and data $\mathcal{D}$, FQE recursively estimates $Q^\pi_h,h\in[H]$ via ($\widehat{Q}^\pi_{H+1}=0$)
\begin{equation}
\label{eqn:fqe}
\widehat{Q}_h^\pi=\underset{f \in \mathcal{F}}{\operatorname{argmin}}\left\{\frac{1}{N} \sum_{i=1}^N\left(f\left(s_i, a_i\right)-y_i\right)^2+\lambda \rho(f)\right\}
\end{equation}
with $y_n=r_i+\int_a \widehat{Q}_{h+1}^\pi\left(s_{i+1}, a\right) \pi\left(a \mid s_{i+1}\right) \mathrm{d} a$. Here $\rho(f)$ is a proper regularizer and is usually chosen as $L_2$, i.e. $\rho(f)=\norm{f}_2^2$. The OPE estimator is 
\[
\widehat{v}^\pi=\mathbb{E}_{s \sim d_1, a \sim \pi(\cdot \mid s)}\left[\widehat{Q}_1^\pi(s, a)\right].
\]
The squared loss function resembles the empirical approximation for Bellman question \eqref{eqn:bellman}.

\subsection{Linear function approximation}\label{sec:ope_linear}
Linear function approximation considers the class $\mathcal{F}_{\text{lin}}=\{f:f(\cdot,\cdot)=\langle \phi(\cdot,\cdot),\theta \rangle,\theta\in \R^d\}$. Denote the shorthand $\phi_n:=\phi(s_n,a_n)$ and $\phi^\pi(s):=\E_{a\sim\pi(\cdot|s)}[ \phi(s, a) ]$, then FQE can be computed recursively via $\widehat{Q}_h(s,a)=\phi(s,a)^\top \widehat{w}^\pi_h$ with 
\[
\widehat{w}^\pi_h=\widehat{R}+\widehat{M}_\pi \widehat{w}_{h+1}^\pi,
\]
where 
$
\widehat{M}_\pi=\widehat{\Sigma}^{-1} \sum_{n=1}^N \phi_n \cdot \phi^\pi\left(s_{n+1}\right)^{\top}, \widehat{\Sigma}=\sum_{n=1}^N \phi_n \phi_n^{\top}+\lambda I_d$ and $
\widehat{R}=\widehat{\Sigma}^{-1} \sum_{n=1}^N r_n \phi_n$. FQE does not learn the model/transition dynamics, and it is generally regraded as a model-free approach. Interestingly, by using the components $\widehat{M}_\pi,\widehat{w}_{h}^\pi$ to approximate the population counterparts ${M}_\pi,{w}_{h}^\pi$, linear FQE is equivalent to the model-based plug-in estimator \cite{duan2020minimax,hao2021bootstrapping}. This phenomenon is similar to TMIS, which can be interpreted as a model-based estimator.

When the data coverage of the behavior policy $\mu$ spans the state-action space and the linear function class is expressive enough, FQE has the following efficiency guarantee. 

\begin{theorem}\label{thm:lope}
Suppose assumption~\ref{assume:lope} (good data coverage) and policy completeness of assumption~\ref{assume:pope} (linear function class is rich enough) are satisfied, then FQE is a consistent OPE estimator with $\sqrt{N}(\widehat{v}^\pi-v^\pi)$ is asymptotically distributed to normal $\mathcal{N}(0,\sigma^2)$. The asymptotic variance is given by 
\[
\begin{aligned}
\sigma^2= & \sum_{h_1,h_2=1}^H\left(\nu_{h_1}^\pi\right)^{\top} \Sigma^{-1} \Omega_{h_1, h_2} \Sigma^{-1} \nu_{h_2}^\pi,
\end{aligned}
\]
where $\nu_h^\pi=\mathbb{E}^\pi\left[\phi\left(s_h, a_h\right) \mid s_1 \sim d_1\right]$, $\Sigma=\frac{1}{H}\sum_{h=1}^H \Sigma_h$, and the cross-covariance 
\[
\Omega_{h_1, h_2}=\mathbb{E}\big[\frac{1}{H} \sum_{h^{\prime}=1}^H \phi\left(s_{h^{\prime}}, a_{h^{\prime}}\right) \phi\left(s_{h^{\prime}}, a_{h^{\prime}}\right)^{\top} \varepsilon_{h_1, h^{\prime}} \varepsilon_{h_2, h^{\prime}}\big]
\]
and $\varepsilon_{h_1, h^{\prime}}=Q_{h_1}^\pi\left(s_{h^{\prime}}, a_{h^{\prime}}\right)-\left(r_{h^{\prime}}+V_{h_1+1}^\pi\left(s_{h^{\prime}+1}\right)\right)$.
\end{theorem}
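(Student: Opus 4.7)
The proof proceeds in three stages: linearize the FQE recursion around its population fixed point, identify the leading stochastic fluctuation as a single per-step Bellman residual noise, and apply a multivariate CLT across the $H$ time indices. Under the policy-completeness part of Assumption~\ref{assume:pope} (i.e., $r$ is linear in $\phi$ and the Bellman update preserves the linear class), the population $Q$-function is exactly linear in $\phi$, so there is a well-defined $w_h^\pi \in \R^d$ satisfying the population analog of the FQE recursion, $w_h^\pi = R + M_\pi w_{h+1}^\pi$, with $R = \Sigma^{-1}\E[r\phi]$ and $M_\pi = \Sigma^{-1}\E[\phi\,\phi^\pi(s')^\top]$. Completeness also forces $\E[\phi^\pi(s') \mid s,a] = M_\pi^\top \phi(s,a)$, from which the marginal-feature identity $\nu_{h+1}^\pi = M_\pi^\top \nu_h^\pi$, equivalently $(\nu_1^\pi)^\top M_\pi^{h-1} = (\nu_h^\pi)^\top$, follows by induction.

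Setting $\delta_h := \widehat{w}_h^\pi - w_h^\pi$ and subtracting the two recursions yields
\[ \delta_h = (\widehat{R}-R) + (\widehat{M}_\pi - M_\pi) w_{h+1}^\pi + M_\pi \delta_{h+1} + (\widehat{M}_\pi - M_\pi)\delta_{h+1}. \]
Unrolling from $h=1$ down to $h=H$ (using $w_{H+1}^\pi = 0$) and discarding the quadratic cross-term as an $o_p(N^{-1/2})$ remainder produces a telescoping expression for $\delta_1$ as a linear combination of $\widehat{R}-R$ and $\widehat{M}_\pi - M_\pi$ weighted by powers of $M_\pi$. Pre-multiplying by $(\nu_1^\pi)^\top$ and applying the marginal-feature identity already surfaces the $\sum_h (\nu_h^\pi)^\top(\cdots)$ structure visible in the target variance.

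The next step is to rewrite $\widehat{R}-R$ and $\widehat{M}_\pi - M_\pi$ as ridge-regularized empirical averages of the reward noise $\eta_i := r_i - \phi_i^\top R$ and the transition noise $\xi_i := \phi^\pi(s_{i+1}) - M_\pi^\top \phi_i$, both of which are conditionally mean zero given $(s_i,a_i)$. When combined at the $h$-th time index as prescribed by the unrolled formula, using $\phi_i^\top(R + M_\pi w_{h+1}^\pi) = Q_h^\pi(s_i,a_i)$, the two noises algebraically collapse into exactly one Bellman residual, $-\varepsilon_{h,i} = r_i + V_{h+1}^\pi(s_{i+1}) - Q_h^\pi(s_i,a_i)$. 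This gives
\[ \sqrt{N}(\widehat{v}^\pi - v^\pi) = -\sum_{h=1}^H (\nu_h^\pi)^\top (\widehat{\Sigma}/N)^{-1}\Bigl[N^{-1/2}\sum_i \phi_i \varepsilon_{h,i}\Bigr] + o_p(1). \]
Under the minimum-eigenvalue Assumption~\ref{assume:lope}, the LLN yields $\widehat{\Sigma}/N \to \Sigma = \frac{1}{H}\sum_h \Sigma_h$ in probability. Since the $N=nH$ data points partition into $n$ i.i.d.\ trajectories and the residuals are sequentially uncorrelated within a trajectory, a multivariate CLT applied to the stacked vector $(N^{-1/2}\sum_i\phi_i\varepsilon_{h,i})_{h\in[H]}$ delivers joint normality with block-covariance equal to $\Omega_{h_1,h_2}$ (the $1/H$ factor arising from averaging across the $H$ summands per trajectory). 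Slutsky's theorem then produces $\mathcal{N}(0,\sigma^2)$ with the stated quadratic form, and consistency $\widehat{v}^\pi \to v^\pi$ falls out of the same calculation with weaker quantitative control.

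The main obstacle is not the leading-order calculation but the error control across the $H$ Bellman layers: both $M_\pi^k$ and $\widehat{M}_\pi^k$ can in principle have operator norms growing in $k$, so justifying the substitution of $\widehat{M}_\pi$ by $M_\pi$ in the unrolled formula, as well as the quadratic cross-term $(\widehat{M}_\pi - M_\pi)\delta_{h+1}$, requires sharp concentration of $\widehat{M}_\pi - M_\pi$ and $\widehat{\Sigma}^{-1} - \Sigma^{-1}$ under the minimum-eigenvalue condition, together with a uniform bound on $\|M_\pi^k w_h^\pi\|$ that exploits the bounded reward and the fact that the Bellman completeness operator leaves the class of value functions bounded by $H$ invariant. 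Propagating these bounds uniformly through all $H$ layers while preserving the exact constants inside $\sigma^2$ is the main technical burden.
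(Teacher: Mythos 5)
Your proposal is correct and follows essentially the same route as the paper's analysis: it unrolls the FQE recursion to isolate the first-order term $E_1=\frac{1}{N}\sum_i e_i$ with $e_i=\sum_h(\nu_h^\pi)^\top\Sigma^{-1}\phi_i\,\varepsilon_{h,i}$ (exactly the decomposition the paper describes), controls the higher-order remainder via the minimum-eigenvalue coverage condition, and invokes a CLT exploiting the martingale (sequentially uncorrelated) structure of the Bellman residuals within each trajectory. The only cosmetic difference is that you apply an i.i.d.\ multivariate CLT at the trajectory level rather than a martingale CLT at the sample level, which is an equally valid instantiation of the same argument.
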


Critically, the above asymptotic variance is optimal for OPE with linear function approximation. In fact, for linear OPE with Assumption~\ref{assume:lope} and policy completeness, the variance of any unbiased estimator is lower bounded by $\sigma^2$ in Theorem~\ref{thm:lope}. Such a lower bound is constructed via computing the influence function from semi-parametric statistics \cite{van2000asymptotic,tsiatis2006semiparametric}. As a special case, the linear optimality is also consistent with the tabular OPE. For the time-inhomogeneous MDPs, the cross-terms vanish, and the variance $\sigma^2=\sum_{h=1}^H\left(\nu_{h}^\pi\right)^{\top} \Sigma^{-1} \Omega_{h, h} \Sigma^{-1} \nu_{h}^\pi$ coincides with $L_{\text{CR}}$ in Theorem~\ref{thm:lower} when features are indicator functions for states and actions.

\textbf{From offline policy evaluation to offline policy inference.} In addition to the point estimator, Efron's bootstrap \cite{mooney1993bootstrapping} is utilized in literature for distributional inference. By sampling episodes $\mathcal{D}^*=\{\tau_1^*, \ldots, \tau_n^*\}$ independently and with replacement from $\mathcal{D}$, the bootstrapped FQE is
consistent in distribution, meaning $$\sqrt{N}\left(\widehat{v}^\pi_{\text{Bootstrap}}-\widehat{v}^\pi_{\text{FQE}}\right) \xrightarrow{d} \mathcal{N}\left(0, \sigma^2\right).$$ This implies the consistency of the moment estimations, and one particular example is for the second order, i.e. $$\lim_{n\rightarrow\infty} \Var[\sqrt{N}\left(\widehat{v}^\pi_{\text{Bootstrap}}-\widehat{v}^\pi_{\text{FQE}}\right)]=\sigma^2.$$

\textbf{Distribution shift characterization via Minimax-optimal OPE.} The finite sample error bound for FQE provides a similar characterization for the hardness of OPE in the non-asymptotic way. By incorporating the Chi-square divergence $\chi_{\mathcal{F}_{\text{lin}}}^2\left(p, q\right):=\sup _{f \in \mathcal{F}_{\text{lin}}} \frac{\mathbb{E}_{p}[f(x)]^2}{\mathbb{E}_{q}\left[f(x)^2\right]}-1$, there is a simplified finite error bound \cite{duan2020minimax}:
\[
\left|\widehat{v}^\pi-v^\pi\right| \lesssim  H^2 \sqrt{\frac{1+\chi_{\mathcal{F}_{\text{lin}}}^2\left(\pi, {\mu}\right)}{N}}+O\left(N^{-1}\right).
\]This shows the distribution divergence in an explicit way.

\subsection{Parametric function approximation}\label{sec:ope_parametric}

Parametric models extends the linear representation $\langle \phi,\theta\rangle$ to the functional form $f(\theta,\phi)$, allowing for nonlinear or nonconvex structures. Fitted Q-Evaluation over this generic class is less tractable since the regression objective \eqref{eqn:fqe} no longer yield a closed-form solution, and the optimal solution can only be characterized by the optimality condition through the lens of an \emph{Z-estimator} \cite{kosorok2008introduction}
\[
\nabla_\theta\bigg\{\frac{1}{2 N} \sum_{i=1}^N\left[f\left(\widehat{\theta}_h, \phi_i\right)-y_i\left(\widehat{\theta}_{h+1}\right)\right]^2+\lambda \rho(\widehat{\boldsymbol{\theta}})\bigg\}=0,
\]
where $\phi_i=\phi(s_i,a_i)$, {\small$y_j(\theta):=\E_{a'\sim \pi\left(\cdot \mid s_{j+1}\right)} [f\left(\theta, \phi\left(s_{j+1}, a^{\prime}\right)\right)  ]$} and $\widehat{\boldsymbol{\theta}}=(\widehat{\theta}_1,\ldots,\widehat{\theta}_H)$. Yet, FQE is still asymptotically efficient.

\begin{theorem}\label{thm:pope}
Under Assumption~\ref{assume:pope} and mild conditions, when the number of episodes $n\rightarrow\infty$ and $\lambda =o(n^{-1/2})$, we have convergence in distribution ($N=nH$)
$
\sqrt{N}\left(\widehat{v}_\pi-v_\pi\right) \xrightarrow{d} \mathcal{N}\left(0, \sigma^2\right).
$
The asymptotic variance $\sigma^2$ is
\[
\sigma^2= \sum_{h_1, h_2=1}^H [\nu^\pi_{h_1}]^{T} \Sigma_{h_1}^{-1} \Omega_{h_1, h_2} \Sigma_{h_2}^{-1} \nu^\pi_{h_2}.
\]
Here $
\Sigma_h=\mathbb{E}\left[\frac{1}{H} \sum_{j=1}^H\left(\nabla_{\theta_h} f\left(\theta_h^*, \phi_j\right)\right)^{\top}\left(\nabla_{\theta_h} f\left(\theta_h^*, \phi_j\right)\right)\right]
$, $\nu_h^{\pi\top}=\mathbb{E}^\pi\left[\nabla_{\theta_h} f\left(\theta_h^*, \phi\left(s_h, a_h\right)\right) \right]$, the cross covariance is \[\Omega_{i, j}=\mathbb{E}\left[\frac{1}{H} \sum_{h=1}^H\left(\nabla_{\theta_i}^{\top} f\left(\theta_i^*, \phi_h\right)\right)\left(\nabla_{\theta_j} f\left(\theta_j^*, \phi_h\right)\right) \varepsilon_{i, h} \varepsilon_{j, h}\right]\]
with $\varepsilon_{j, h}=f(\theta_j^*, \phi_h)-r_h-\mathbb{E}^\pi[f(\theta_{j+1}^*, \phi_{h+1}) \mid s_{h+1}]$.
\end{theorem}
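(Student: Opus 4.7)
The plan is to treat FQE under a parametric class as a system of coupled $Z$-estimators, one per horizon layer, and apply the standard $Z$-estimator asymptotics of \cite{van2000asymptotic,kosorok2008introduction} followed by a Delta-method step. Define the empirical score at layer $h$ as $\Psi_{n,h}(\theta,\theta') = \frac{1}{N}\sum_{i=1}^N \nabla_\theta f(\theta,\phi_i)\,[\,f(\theta,\phi_i) - r_i - y_i(\theta')\,] + \lambda \nabla\rho(\theta)$, so that the FQE optimality condition reads $\Psi_{n,h}(\widehat{\theta}_h,\widehat{\theta}_{h+1})=0$. Bellman completeness and realizability in Assumption~\ref{assume:pope} guarantee that $\theta_h^*$ is the unique root of the population version $\Psi_h(\theta,\theta_{h+1}^*)=0$, and the eigenvalue bound $\lambda_{\min}(\Sigma_h)>0$ makes the root non-degenerate. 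A backward induction in $h$, starting from the base case $\widehat{\theta}_{H+1}=0=\theta_{H+1}^*$, then yields $\widehat{\theta}_h \xrightarrow{p} \theta_h^*$: the inductive step combines uniform convergence of $\Psi_{n,h}$ over a compact neighborhood (guaranteed by joint continuity of $f$ and its first two derivatives), joint continuity in $\theta'$ to absorb $\widehat{\theta}_{h+1}\xrightarrow{p}\theta_{h+1}^*$, and $\lambda = o(n^{-1/2})\to 0$ to kill the regularizer.

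The next step is to Taylor-expand each $\Psi_{n,h}$ around $(\theta_h^*,\theta_{h+1}^*)$ to one order, using the third-order differentiability assumption to bound the remainder in $o_p(N^{-1/2})$. This gives the asymptotic linearization
\begin{equation*}
\sqrt{N}(\widehat{\theta}_h - \theta_h^*) = \Sigma_h^{-1}\sqrt{N}\,\Psi_{n,h}(\theta_h^*,\theta_{h+1}^*) + \Sigma_h^{-1} J_h \sqrt{N}(\widehat{\theta}_{h+1}-\theta_{h+1}^*) + o_p(1),
\end{equation*}
where $J_h = \mathbb{E}[\nabla_\theta f(\theta_h^*,\phi)\, \mathbb{E}^\pi\!\nabla_{\theta_{h+1}} f(\theta_{h+1}^*,\phi(s',a'))^\top \mid s,a]$ is the Bellman-backup Jacobian induced by $y_i(\cdot)$. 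At $(\theta_h^*,\theta_{h+1}^*)$, Bellman completeness forces $\mathbb{E}[\Psi_{n,h}]=0$, so the score is a sample mean of mean-zero i.i.d. terms proportional to the Bellman residual $\varepsilon_{h,\cdot}$ in the theorem; a multivariate CLT on the stacked vector $(\Psi_{n,1},\ldots,\Psi_{n,H})$ at truth then supplies joint asymptotic normality with covariance built from the $\Omega_{h_1,h_2}$.

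Finally, the Delta method applied to $\theta_1 \mapsto \mathbb{E}_{s\sim d_1,a\sim\pi}[f(\theta_1,\phi(s,a))]$ gives $\sqrt{N}(\widehat{v}_\pi - v^\pi) = [\nu_1^\pi]^\top \sqrt{N}(\widehat{\theta}_1-\theta_1^*) + o_p(1)$. Unrolling the backward recursion for $\sqrt{N}(\widehat{\theta}_1-\theta_1^*)$ through $h=1,\ldots,H$ expresses it as a sum over $h$ of the layer-$h$ innovation $\Sigma_h^{-1}\sqrt{N}\Psi_{n,h}(\theta_h^*,\theta_{h+1}^*)$ premultiplied by the product $\Sigma_1^{-1} J_1 \cdots J_{h-1}$ (with the $\Sigma_h^{-1}$ appearing only once, at the layer of origin). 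The main obstacle, and the place where the proof is genuinely doing work, is verifying that this product of Jacobians telescopes via the Bellman recursion into exactly $[\nu_h^\pi]^\top\Sigma_h^{-1}$: writing out $J_h$ and invoking $\nu_{h+1}^\pi = \mathbb{E}^\pi[\nabla_{\theta_{h+1}} f(\theta_{h+1}^*,\phi(s_{h+1},a_{h+1}))]$ together with the Markov property, one recognizes the unrolled product as exactly the pushed-forward gradient occupancy at layer $h$. Combining these, $\sqrt{N}(\widehat{v}_\pi-v^\pi) = \sum_{h=1}^H [\nu_h^\pi]^\top \Sigma_h^{-1} \sqrt{N}\Psi_{n,h}(\theta_h^*,\theta_{h+1}^*) + o_p(1)$, and computing its asymptotic variance via the covariance of the score vector recovers the double sum $\sum_{h_1,h_2}[\nu_{h_1}^\pi]^\top\Sigma_{h_1}^{-1}\Omega_{h_1,h_2}\Sigma_{h_2}^{-1}\nu_{h_2}^\pi = \sigma^2$ claimed in the statement.
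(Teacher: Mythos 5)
Your proposal is correct and follows essentially the same route the paper takes (via \cite{zhang2022off}): a $Z$-estimator linearization whose first-order term is exactly the paper's $e_i=\sum_h(\nu_h^\pi)^\top\Sigma_h^{-1}\nabla_\theta f(\theta_h^*,\phi_i)\,\varepsilon_{h,i}$, with the higher-order remainder absorbed by the coverage condition and normality from the $Z$-estimator master theorem, and the telescoping identity $[\nu_h^\pi]^\top\Sigma_h^{-1}J_h=[\nu_{h+1}^\pi]^\top$ (a consequence of Bellman completeness) doing the real work just as you identify. The only imprecision is calling the score summands i.i.d.: within an episode the $H$ transitions are dependent, which is precisely why the paper invokes the martingale CLT and why the cross-covariances $\Omega_{h_1,h_2}$ appear; your episode-level stacked CLT handles this correctly in substance.
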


The parametric FQE strictly subsumes the linear FQE as a special case, and this can be seen by noticing $\nabla_{\theta_j} f(\theta_j^*, \phi_h)=\phi_h$ in the linear case. Besides, there is a matching Cramer Rao lower bound, showing that the asymptotic optimality is achieved \cite{zhang2022off}.

\textbf{On the analysis for OPE with function approximations.} For both linear and parametric cases, the OPE error can be decomposed into two parts $v^\pi-\widehat{v}^\pi=E_1+E_2$, and $E_1=\frac{1}{N}\sum_{i=1}^N$ is the first order term with the form 
\[
e_i:=\sum_{h=1}^H\left(\nu_h^\pi\right)^{\top} \Sigma^{-1} \zeta_i\left(Q_h^\pi\left(s_i, a_i\right)-(r_i^{\prime}+V_{h+1}^\pi(s_i^{\prime}))\right)
\]
and $E_2$ is the higher order term. Depending on the setting, $\zeta_i$ is either $\phi_i$ or $\nabla_\theta f(\theta_h^*,\phi_i)$. Higher order terms are generally handled by the data coverage conditions, and the asymptotic normality can be proved by Martingale CLT \cite{mcleish1974dependent} or Z-Estimator Master Theorem from empirical process theory \cite{kosorok2008introduction}.

\section{Offline Policy Learning in Tabular RL: Pessimism and Instance-Dependent Bounds} \label{sec:OPL}

Policy learning differs from the policy evaluation in that it needs to optimize over a set of policies rather than just evaluating a given policy. Consider the case where there are finite number of policies $\pi_1,\pi_2,\ldots,\pi_K$, and the estimates for the respective policies are $\widehat{v}^{\pi_1},\ldots,\widehat{v}^{\pi_K}$. If that is all the information provided, a natural algorithm for policy learning would be the ERM (Empirical Risk Minimizer)
\begin{equation}\label{eqn:erm}
\widehat{\pi}^*= \mathrm{argmax}_\pi \widehat{v}^\pi.
\end{equation}
However, simply selecting policy via point estimators might not provide the best approach due to uncertainty, and the error in the estimators might cause incorrect prediction about the order of policies. In particular,
the dataset could be biased toward certain states, contain many suboptimal actions, or even contain little information about the optimal policy, which poses significant challenges when trying to generalize beyond the observed data. If an agent is too optimistic in regions where it has little or no data, it may overestimate the value of actions in these regions, leading to poor policy performance. 

The generic recipe for offline decision-making (not just for RL) is the so-called \emph{pessimism in the face of uncertainty}, namely, to stay conservative 
\begin{center}
\textsf{One should be biased towards the more conservative side when the value of a decision is uncertain.}
\end{center}

Pessimism is particularly important in real-world offline applications, where safety and reliability are crucial. For example, in
healthcare, an overly optimistic RL policy might recommend treatments that appear effective in limited data but are unproven or unsafe. Pessimism ensures that decisions are made conservatively, focusing on treatments with scientific evidence.
Besides, offline RL can be used to train self-driving systems using logged data. A pessimistic approach ensures that the vehicle avoids risky maneuvers that haven’t been sufficiently tested in the training data.

Consider the multi-arm bandit (MAB) problem as a simple example, where there are $K$ decision arms. The goal is to identify the arm with the highest mean reward.  Each arm has reward estimate  and uncertainty, then the principle of pessimism will choose 
\begin{equation}\label{eqn:lcb_mab}
\text{argmax}_{k\in[K]}\{\text{Reward\_Estimate}_k-\alpha\cdot \text{Uncertainty}_k\}
\end{equation}
for some penalty parameter $\alpha>0$. The above quantity $$\text{Reward\_Estimate}_k-\alpha\cdot \text{Uncertainty}_k$$ is often termed as the \emph{lower confidence bound}, which discourages the effect of uncertainty when no exploration is allowed (i.e. the offline case). In contrast, online setting optimizes the \emph{upper confidence bound} $\text{Reward\_Estimate}_k+\alpha\cdot \text{Uncertainty}_k$ to encourage exploring region with high uncertainty (see Figure below).

\begin{figure}\label{fig:lcb}
  \begin{center}
\includegraphics[width=0.25\textwidth]{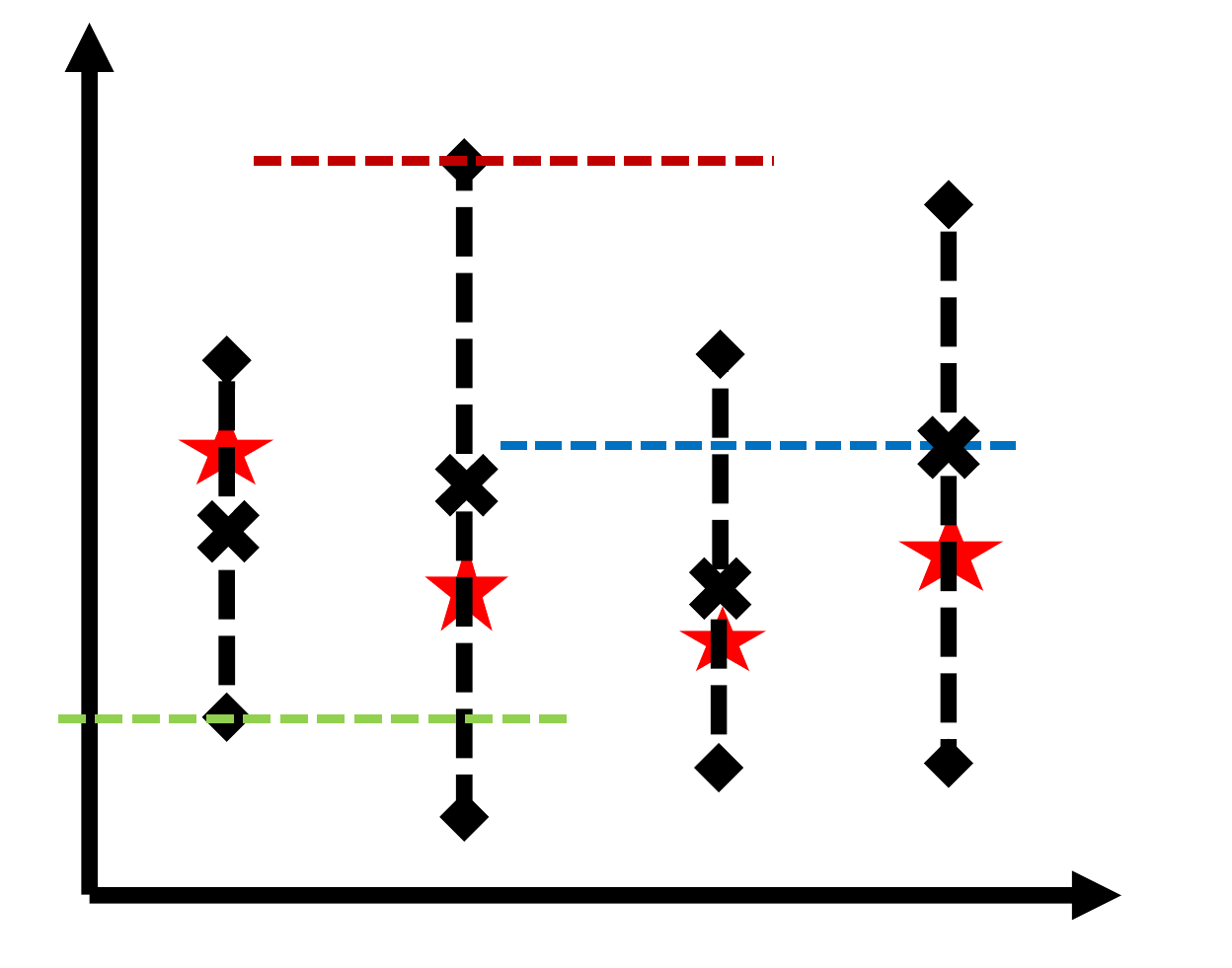}
  \end{center}
  \caption{An instance of MAB problem with pessimism being the right choice. Red choice: upper confidence bound. Blue choice: empirical risk minimizer. Green choice: lower confidence bound \eqref{eqn:lcb_mab}. Red star denotes the true mean reward; black cross denotes the point estimator \eqref{eqn:erm}.}
\end{figure}

\subsection{Pessimism is Minimax Optimal}


An offline policy learning algorithm targets at identifying a policy $\pi_{\text{alg}}$ such that its value differences with respect to the optimal policy $\pi^*$ is small. The performance is theoretically characterized by the \emph{probably approximately correct} (PAC) bound,
\[
v^*-v^{\pi_{\text{alg}}}\leq \mathrm{Poly}(H,S,A,\frac{1}{\sqrt{n}},C_\mu) \text{ with high probability,}
\]   
meaning the performance gap is polynomial in the planning horizon, number of states and actions, $1/\sqrt{n}$, and certain data coverage parameter $C_\mu$. As the number of the episodes $n$ goes to infinity, $v^*-v^{\pi_{\text{alg}}}\rightarrow 0$. 

In addition, for the given $n$ episodic data, the minimax risk (suboptimality gap) \cite{wainwright2019high,minimax} 
\[
\mathcal{R}_n:=\inf _{\pi_{\text{alg}}} \sup _{\text{MDP} \;\mathcal{M}} \mathbb{E}_{\mathcal{M}}\left[v^*-v^{\pi_{\text{alg}}}\right]
\]
measures the best possible performance (information-theoretical limit) for a class of MDP problems $\mathcal{M}$ in the worst-case-scenario sense. If for certain algorithm $\mathcal{A}$, the PAC bound of its suboptimality gap $v^*-v^{\pi_{\text{alg}}}$ matches $\mathcal{R}_n$, then we call algorithm $\mathcal{A}$ minimax optimal. The key feature for minimax lower bound is that the supremum is taken over the whole MDP class, making it instance independent. This is to say, the worst case optimality are optimal ``globally''. 

\begin{itemize}[leftmargin=*]
    \item For the Uniform data coverage $d_m>0$ (Assumption~\ref{assum:uniform}), the minimax optimal bound has rate $\Theta(\sqrt{\frac{H^3}{n\cdot d_m}})$, and it is attained by choosing the ERM estimator \cite{yin2021near,ren2021nearly,yin2021near-optimal}. 
    \item For the single policy coverage $C^*=\norm{\frac{d^{\pi^*}_h}{d^\mu_h}}_\infty$ (Assumption~\ref{assum:single_concen}), the minimax optimal bound has rate $\Theta(\sqrt{\frac{H^3SC^*}{n}})$ by using the reference advantage techniques \cite{xie2021policy,rashidinejad2021bridging}. 
\end{itemize}

 The (near-)optimal worst-case performance bounds that depend on their data-coverage coefficients are valuable as they do not depend on the structure of the particular problem, therefore, remain valid even for pathological MDPs. However, the global optimal characterizations are unable to depict what types of decision processes and what kinds of behavior policies are inherently easier or more challenging for offline RL. In particular, the empirical performances of real applications are often far better than what those non-adaptive / problem-independent bounds would indicate. For example, city driving vs. highway driving in autonomous driving. In both cases, the state-action space could be defined by the position, velocity, orientation of the vehicle, but city driving is more complex due to a highly dynamic and unpredictable environment whereas highway driving is simpler in comparison because the environment is more structured.

Alternatively, rather than obtaining the PAC bound that depends on the global parameters $H,S,A$, we can delve into the instance level and consider the instance-dependent characterization via transition kernel $P$, reward $r$, and behavior policy $\mu$. In general, instance dependent bounds should have the following properties:  

\begin{itemize}[leftmargin=*]
    \item It adapts to the individual instances and only require minimal assumptions so they can be widely applied in most cases.
    \item  It should characterize the system structures of the specific problems, hold even for peculiar instances that do not satisfy the standard data-coverage assumptions. 
    \item It should recover the worst-case guarantees when the data-coverage assumptions are satisfied.
\end{itemize}
For the rest of the section, we review how these guidelines are accomplished for the policy learning tasks. 

\subsection{Towards instance optimality via Pessimism}\label{sec:tabular_instance_optimality}
In this section, we review instance-dependent offline RL in the tabular setting.

\textbf{Pessimistic Value Iteration.} To approximate the optimal Q-function in \eqref{eqn:bellman}, one can perform the update 
\begin{equation}\label{eqn:VI}
\widehat{Q}_h(\cdot,\cdot)\leftarrow (\widehat{\mathcal{P}}_{h}\widehat{Q}_{h+1})(\cdot,\cdot),
\end{equation}
where $\widehat{\mathcal{P}}_{h}$ denotes the approximation for the Bellman operator in \eqref{eqn:operator} and $\widehat{V}_h(s):=\max_a \widehat{Q}_h(s,a)$. For a uncertainty quantification $\Gamma_h(\cdot,\cdot)$, the principle of pessimism applies 
\begin{equation}\label{eqn:UQ}
\widehat{Q}_h(\cdot,\cdot)\leftarrow \widehat{Q}_h(\cdot,\cdot)-\Gamma_h(\cdot,\cdot).
\end{equation}
This is the analogy to the lower confidence bound in the bandit setting \eqref{eqn:lcb_mab}. The learned policy and value function by pessimistic value iteration are defined as ($\forall h\in[H]$): 
\begin{align*}
{\pi}_{h}^{\text{PVI}}(\cdot|s_h)\leftarrow &\mathrm{argmax}_{\pi_h}\langle \widehat{Q}_h(s_h,\cdot),\pi_h(\cdot|s_h)\rangle.\\
\widehat{V}_h(s_h)\leftarrow & \mathrm{max}_{\pi_h}\langle \widehat{Q}_h(s_h,\cdot),\pi_h(\cdot|s_h)\rangle.
\end{align*}
\textbf{Model-based Estimators.} Recall 
the approximated Bellman operator $\widehat{\mathcal{P}}_h$ in \eqref{eqn:VI} is defined via the plug-in estimators:\footnote{$n_{s,a,h}:=\sum_{\tau=1}^n\mathbf{1}[s_h^{\tau},a_h^{\tau}=s,a]$ be the total counts that visit $(s,a)$ pair at time $h$.} 
{
\begin{align*}\label{eqn:mb_est}
\widehat{P}_h(s'|s,a)=\frac{\sum_{\tau=1}^n\mathbf{1}[(s^{\tau}_{h+1},a^{\tau}_h,s^{\tau}_h)=(s^\prime,s,a)]}{n_{s,a}},\\
\widehat{r}_h(s,a)=\frac{\sum_{\tau=1}^n\mathbf{1}[(a^{\tau}_h,s^{\tau}_h)=(s,a)]\cdot r_h^\tau}{n_{s,a}}.
\end{align*}}If $n_{s,a}=0$, $\widehat{P}_h(s'|s,a)={1}/{S},\widehat{r}_h(s,a)=0$. 

\vspace{1em}

\textbf{A Bernstein-style uncertainty.} It turns out that the following variance-dependent quantity 
{\small
 \[
\Gamma_h(s,a)=\widetilde{O}\bigg[\sqrt{\frac{\mathrm{Var}_{\widehat{P}_{s,a}}(\widehat{r}_h+\widehat{V}_{h+1})}{n_{s,a}}}+\frac{H}{n_{s,a}}\bigg]
\]
}describes the uncertainty for \eqref{eqn:UQ} appropriately. The conditional variance ${\mathrm{Var}_{\widehat{P}_{s,a}}(\widehat{r}_h+\widehat{V}_{h+1})}$ corresponds to the \textbf{aleatoric uncertainty} which measures the intrinsic uncertainty of the transition $P$, given the state-action $(s,a)$. This is the uncertainty due to the natural variability of the system being modeled, which cannot be reduced by collecting more data. The term $1/n_{s,a}$ is the \textbf{epistemic uncertainty} that comes from incomplete knowledge and can be reduced by gathering more data \cite{hullermeier2021aleatoric}.

The condition variance in $\Gamma_h$ creates the Bernstein-style pessimism. Compared to the Hoeffding-style pessimism $\widetilde{O}(H/\sqrt{n_{s,a}})$ which is overly pessimistic (due to $\sqrt{\mathrm{Var}_{\widehat{P}}(\widehat{r}_h+\widehat{V}_{h+1})}\leq H$), $\Gamma_h$ is more data-adaptive. Furthermore, for the fully deterministic environments where the transitions and rewards are deterministic, the conditional variances vanishes and the uncertainty has a faster scale $1/n_{s,a}$. 

\begin{theorem}\label{thm:APVI}
	Under the Assumption~\ref{assum:single_concen}, denote $\bar{d}_m:=\min_{h\in[H]}\{d^\mu_h(s_h,a_h):d^\mu_h(s_h,a_h)>0\}$. For any $0<\delta<1$, such that when $n> 1/\bar{d}_m\cdot\log(HSA/\delta)$, with probability $1-\delta$, the output of Pessimistic Value Iteration satisfies 
	\begin{equation}\label{eqn:APVI}
    \begin{aligned}
	& 0\leq  v^*-v^{{\pi}^{\text{PVI}}}\\
 \lesssim & \sum_{h=1}^H\sum_{(s,a)\in\mathcal{C}_h}d^{\pi^*}_h(s,a)\cdot\sqrt{\frac{\mathrm{Var}_{P_{s,a}}(r_h+V^*_{h+1})}{ n\cdot d^\mu_h{(s,a)}}}\\
 +&\widetilde{O}\left(\frac{H^3}{n\cdot \bar{d}_m}\right).
    \end{aligned}
	\end{equation}
\end{theorem}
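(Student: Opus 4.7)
The proof hinges on three ingredients: a pessimism/optimism sandwich for $\widehat{Q}_h$, a value-difference lemma that transfers the suboptimality of $\pi^{\text{PVI}}$ into an expectation of the bonus $\Gamma_h$ along $\pi^*$, and sharp Bernstein-type concentration that ties $\Gamma_h$ to the true conditional variance. I will proceed by backward induction on $h$ from $H+1$ to $1$.

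First I would set up the high-probability ``good event'' under which, for every $(s,a,h)$ with $(s,a)\in\mathcal{C}_h$, the empirical Bellman backup is close to the true one:
\begin{equation*}
\bigl|\bigl(\widehat{\mathcal{P}}_h \widehat{V}_{h+1}\bigr)(s,a) - \bigl(\mathcal{P}_h \widehat{V}_{h+1}\bigr)(s,a)\bigr| \le \Gamma_h(s,a).
\end{equation*}
The Bernstein inequality applied to $\widehat{P}_h(\cdot|s,a)\widehat{V}_{h+1}$ together with the empirical Bernstein inequality for $\widehat{r}_h$ gives exactly the promised $\Gamma_h(s,a) = \widetilde{O}\bigl(\sqrt{\mathrm{Var}_{\widehat{P}_{s,a}}(\widehat{r}_h+\widehat{V}_{h+1})/n_{s,a}} + H/n_{s,a}\bigr)$. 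A multiplicative Chernoff bound shows $n_{s,a,h}\gtrsim n\,d^\mu_h(s,a)$ whenever $n\,d^\mu_h(s,a)\gtrsim\log(HSA/\delta)$, which is the source of the condition $n>\bar{d}_m^{-1}\log(HSA/\delta)$ and also handles the case $(s,a)\notin \mathcal{C}_h$ (where $d^{\pi^*}_h(s,a)=0$ by Assumption~\ref{assum:single_concen}, so these states never enter the bound).

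Next I would prove the pessimism inequality $\widehat{Q}_h(s,a)\le Q^*_h(s,a)$ for all $(s,a)\in\mathcal{C}_h$ by backward induction. The base case $h=H+1$ is trivial. For the inductive step, on the good event,
\begin{equation*}
\widehat{Q}_h(s,a) = \bigl(\widehat{\mathcal{P}}_h\widehat{V}_{h+1}\bigr)(s,a)-\Gamma_h(s,a)
\le \bigl(\mathcal{P}_h\widehat{V}_{h+1}\bigr)(s,a)\le \bigl(\mathcal{P}_h V^*_{h+1}\bigr)(s,a)=Q^*_h(s,a),
\end{equation*}
where the second inequality uses the inductive hypothesis combined with monotonicity of the max (i.e. $\widehat{V}_{h+1}\le V^*_{h+1}$). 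This pessimism yields the left-hand inequality $v^*\ge v^{\pi^{\text{PVI}}}$ immediately, and via a one-line ``extended value-difference'' argument (using that $\pi^{\text{PVI}}$ is greedy with respect to $\widehat{Q}_h$) it also produces the per-step decomposition
\begin{equation*}
v^* - v^{\pi^{\text{PVI}}} \le \sum_{h=1}^{H} \mathbb{E}_{\pi^*}\bigl[2\,\Gamma_h(s_h,a_h)\bigr]
= 2\sum_{h=1}^H \sum_{(s,a)\in\mathcal{C}_h} d^{\pi^*}_h(s,a)\,\Gamma_h(s,a).
\end{equation*}

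Finally I would replace the empirical Bernstein bonus by its population counterpart. Writing $\mathrm{Var}_{\widehat{P}_{s,a}}(\widehat{r}_h+\widehat{V}_{h+1}) \le \mathrm{Var}_{P_{s,a}}(r_h+V^*_{h+1}) + |\text{perturbation}|$ and expanding the perturbation using $\widehat{V}_{h+1}-V^*_{h+1} = O(H)$ together with concentration of $\widehat{P}$ to $P$, we pick up an additional $\widetilde{O}(H^2/n_{s,a})$ inside the square root, which, when summed over $h$ and averaged against $d^{\pi^*}_h$, contributes the $\widetilde{O}(H^3/(n\bar{d}_m))$ lower-order term (together with the epistemic part $H/n_{s,a}$). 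Substituting $n_{s,a}\asymp n\,d^\mu_h(s,a)$ yields the stated leading term.

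The main obstacle, in my view, is the self-referential issue in the Bernstein argument: the bonus $\Gamma_h$ uses $\widehat{V}_{h+1}$, which depends on the same data through earlier applications of pessimism. The cleanest way to handle this is a uniform concentration argument over a suitably coarse $\varepsilon$-net of value functions taking values in $[0,H]$, or an absorbing-MDP construction that decouples $\widehat{V}_{h+1}$ from the randomness at level $h$. Once this is handled, turning the empirical variance into the true variance $\mathrm{Var}_{P_{s,a}}(r_h+V^*_{h+1})$ without inflating the main term is purely algebraic and absorbed into the $\widetilde{O}(H^3/(n\bar{d}_m))$ remainder.
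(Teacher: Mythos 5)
Your proposal is correct and follows essentially the same route as the paper's argument: a Bernstein-type good event for the empirical Bellman backup, backward-induction pessimism, the extended value-difference decomposition $v^*-v^{\pi^{\text{PVI}}}\le 2\sum_{h}\E_{\pi^*}[\Gamma_h(s_h,a_h)]$, and conversion of the empirical variance in $\Gamma_h$ to $\mathrm{Var}_{P_{s,a}}(r_h+V^*_{h+1})$ with the data-dependence of $\widehat{V}_{h+1}$ decoupled exactly as you describe (via a net or by splitting off $(\widehat{P}_h-P_h)V^*_{h+1}$ and absorbing the residual into the $\widetilde{O}(H^3/(n\bar{d}_m))$ term). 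The only detail worth pinning down is that the pessimism induction needs $\widehat{V}_{h+1}\le V^*_{h+1}$ at \emph{all} states reachable from $\mathcal{C}_h$, not only on the covered set, which is ensured by the algorithm's convention of zeroing (or clipping) $\widehat{Q}$ at unvisited state-action pairs.
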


Unlike the worst-case bounds that rely on the data-coverage parameters, the instance bound requires the minimal assumption~\ref{assum:single_concen}. The key distinction is the main term in \eqref{eqn:APVI} is expressed by the system quantities that admits no explicit dependence on $H,S,A$. It depicts the interrelations within the problem when the problem instance is a tuple $(\mathcal{M},\pi^*,\mu)$: an MDP $\mathcal{M}$ (coupled with the optimal policy $\pi^*$) with the data rolling from an offline behavior policy $\mu$, so it helps understand what type of problems are harder / easier than others in a \emph{quantitative} way. 

The complexity of the main term in \eqref{eqn:APVI} can be decomposed into $\frac{d^{\pi^*}_h(s,a)}{\sqrt{d^\mu_h{(s,a)}}}\cdot \sqrt{\mathrm{Var}_{P_{s,a}}(r_h+V^*_{h+1})}$, and this reveals the learning hardness of offline RL stems from two aspects. 

\begin{itemize}[leftmargin=*]
    \item \textbf{Environmental variation}\footnote{\cite{maillard2014hard} named a similar quantity environmental norm.} $\sqrt{\mathrm{Var}_{P_{s,a}}(r_h+V^*_{h+1})}$ is jointly determined by the stochasticity of transition, reward, and optimal value function. A problem with lower environmental variation is easier than a problem with higher environmental variation. This theoretical characterization explains the intuition that stochastic environments are generally harder than the deterministic environments.
    
    \item \textbf{Distribution mismatch} $\frac{d^{\pi^*}_h(s,a)}{\sqrt{d^\mu_h{(s,a)}}}$ is the other factor that affects the learning hardness. When the behavior policy $\mu$ deviates far from the optimal policy $\pi^*$, the problem is intrinsically harder since the mismatch ratio becomes large. When $\pi^*=\mu$, the mismatch ratio is bounded by $1$ for all states and actions. 
\end{itemize}
Due to its fine-grained expression, \eqref{eqn:APVI} is named \emph{Intrinsic offline learning bound} by recent literature \cite{yin2021towards}. It also subsumes the existing worst-case bounds that are optimal.

For uniform data-coverage \ref{assum:uniform}, the optimal suboptimality is $\Theta(\sqrt{\frac{H^3}{nd_m}})$. The intrinsic RL bound can be upper bounded by this rate via \emph{cauchy inequality} and Lemma~\ref{lem:horizon-re}:\footnote{Here $\odot$ denotes element-wise multiplication.}
{\small
\begin{align*}
&v^*-v^{{\pi}^{\text{PVI}}}\lesssim\sum_{h=1}^H\langle d^{\pi^*}_h(\cdot),\sqrt{\frac{\mathrm{Var}_{P_{(\cdot)}}(r_h+V^*_{h+1})}{ n\cdot d^\mu_h{(\cdot)}}}\rangle \\
=& \sum_{h=1}^H\langle \sqrt{d^{\pi^*}_h(\cdot)},\sqrt{\frac{d^{\pi^*}_h(\cdot)\odot\mathrm{Var}_{P_{(\cdot)}}(r_h+V^*_{h+1})}{ n\cdot d_m}}\rangle\\
\leq &\sum_{h=1}^H\norm{ \sqrt{d^{\pi^*}_h(\cdot)}}_2\norm{\sqrt{\frac{d^{\pi^*}_h(\cdot)\odot\mathrm{Var}_{P_{(\cdot)}}(r_h+V^*_{h+1})}{ n\cdot d_m}}}_2\\
\leq& \sqrt{\frac{H\cdot \mathrm{Var}_{\pi^*}(\sum_{h=1}^{H} r_{h})}{n\cdot d_m}}\leq \sqrt{\frac{H^3}{n\cdot d_m}}
\end{align*}
}which recovers the optimal rate.

For the single policy coverage \ref{assum:single_concen} with $C^*:=\norm{\frac{d^{\pi^*}_h}{d^\mu_h}}_\infty<\infty$, a similar computation using Lemma~\ref{lem:horizon-re} can recover the optimal rate $\Theta(\sqrt{\frac{H^3SC^*}{n }})$ via
{\small
\begin{align*}
&v^*-v^{{\pi}^{\text{PVI}}}\lesssim\sum_{h=1}^H\langle d^{\pi^*}_h(\cdot),\sqrt{\frac{\mathrm{Var}_{P_{(\cdot)}}(r_h+V^*_{h+1})}{ n\cdot d^\mu_h{(\cdot)}}}\rangle \\
\leq&\sqrt{\frac{C^*}{n}} \sum_{h=1}^H\langle \sqrt{d^{\pi^*}_h(\cdot)},\sqrt{\mathrm{Var}_{P_{(\cdot)}}(r_h+V^*_{h+1})}\rangle\\
\leq & \sqrt{\frac{SC^*}{n }}\sum_{h=1}^H\sqrt{{\sum_{s\in\mathcal{S}}d^{\pi^*}_h(s,\pi^*_h(s))\cdot \mathrm{Var}_{P_{s,\pi^*_h(s)}}(r_h+V^*_{h+1})}}\\
\leq& \sqrt{\frac{SC^*}{n }} \sqrt{H}\cdot\sqrt{\mathrm{Var}_{\pi}\left[\sum_{t=1}^{H} r_{t}\right]}\leq \sqrt{\frac{H^3SC^*}{n }}.
\end{align*}
}

\textbf{Problem dependent domain.} 
Similar to the online RL \cite{zanette2019tighter}, if we denote  $\mathbb{Q}^*_h=\max_{s_h,a_h}\mathrm{Var}_{P_{s_h,a_h}}(r_h+V^*_{h+1})$ for all $h\in[H]$, and relax the total sum of rewards to be bounded by any arbitrary value $\mathcal{B}$ (\emph{i.e.} $\sum_{h=1}^H r_h\leq \mathcal{B}$), then Theorem~\ref{thm:APVI} implies:
	{\small
	\[
	v^*-v^{{\pi}^{\text{PVI}}}\leq\min\bigg\{\widetilde{O}\big(\sum_{h=1}^H\sqrt{\frac{\mathbb{Q}^*_h}{n\bar{d}_m}}\big),\widetilde{O}\big(\sqrt{\frac{H \mathcal{B}^2}{n\bar{d}_m}}\big)\bigg\}+\widetilde{O}(\frac{H^3}{n\bar{d}_m}).
	\]}  
For the problem instances with either small $\mathcal{B}$ or small $\mathbb{Q}_h^*$, the intrinsic bound yields much better performances, as discussed in the following.

\emph{Deterministic systems.} For systems equipped with low stochasticity, \emph{e.g.} robotics, or even deterministic dynamics, \emph{e.g.} the game of GO, the agent needs less experience for each state-action therefore the learning procedure could be much faster. In particular, when the system is fully deterministic (in both transitions and rewards) then $\mathbb{Q}^*_h=0$ for all $h$. This enables a faster convergence rate of order $\frac{H^3}{n\bar{d}_m}$ and significantly improves over the existing worst-case results that have order $\frac{1}{\sqrt{n}}$. 

\emph{Partially deterministic systems.} Sometimes, practical applications can have a mixture model which contains both deterministic and stochastic steps. In those scenarios, the main complexity is decided by the number of stochastic stages: suppose there are $t$ stochastic $P_h,r_h$'s and $H-t$ deterministic $P_{h'},r_{h'}$'s, then completing the offline learning guarantees {\small$t\cdot\sqrt{{\max Q^*_h}/{n \bar{d}_m}}$} suboptimality gap, which could be much smaller than {\small$H\cdot\sqrt{{\max Q^*_h}/{n \bar{d}_m}}$} when $t\ll H$. 

\emph{Fast mixing domains.} Consider a class of highly mixing non-stationary MDPs that satisfies the transition $P_h(\cdot|s_h,a_h):=\nu_h(\cdot)$ depends on neither the state $s_h$ nor the action $a_h$. Define $\bar{s}_{t} := \arg \max V_{t}^{*}(s)$ and $\underline{s}_{t} := \arg \min V_{t}^{*}(s)$. Also, denote $\mathrm{rng}V^*_h$ to be the range of $V^*_h$.
In such cases, Bellman optimality equations have the form
{
\begin{align*}
&V_{h}^{*}\left(\bar{s}_{h}\right)=\max _{a}\left(r_h\left(\bar{s}_{h}, a\right)+\nu_h^{\top} V_{h+1}^{*}\right),\\
&V_{h}^{*}\left(\underline{s}_{h}\right)=\max _{a}\left(r_h\left(\underline{s}_{h}, a\right)+\nu_h^{\top} V_{h+1}^{*}\right),
\end{align*}
}which yields $\mathrm{rng}V^*_h=V_{h}^{*}\left(\bar{s}_{h}\right)-V_{h}^{*}\left(\underline{s}_{h}\right)=\max_ar_h\left(\bar{s}_{h}, a\right)-\min_ar_h\left(\underline{s}_{h}, a\right)\leq 1$, and this in turn gives $\mathbb{Q}_h^*\leq 1+(\mathrm{rng}V^*_h)^2=2$. As a result, the suboptimality is bounded by $\widetilde{O}(\sqrt{H^2/nd_m})$ in the worst case. This reveals, the class of non-stationary fast mixing MDPs is only as hard as the family of stationary MDPs in the minimax sense ($\Omega(H^2/d_m\epsilon^2)$).

\subsection{Assumption-Free Offline RL}\label{sec:AF-offlineRL}

We now review the scenario where the behavior policy can be arbitrary in this section. In this case, $\mu$ might not cover any optimal policy $\pi^*$ (\emph{i.e.} there might be high reward location $(s,a)$ that $\mu$ can never visit). This can happen when a mediocre doctor only uses one treatment for certain patient all the time. Statistically, even with the infinite amount of episodic data, algorithms might not learn the optimal policy exactly. 

To better characterize the discrepancy, an augmented MDP $\mathcal{M}^\dagger$ is defined with one extra state $s_h^\dagger$ for all $h\in\{2,\ldots,H+1\}$ with the augmented state space $\mathcal{S}^\dagger=\mathcal{S}\cup\{s^\dagger_h\}$. Compared to the original MDP $\mathcal{M}$, the transition and the reward are modified as follows: 
{\small
	\begin{align*}
	P^{\dagger}_h(\cdot \mid s_h, a_h)&=\left\{\begin{array}{ll}
	P_h(\cdot \mid s_h, a_h), \;n_{s_h,a_h}>0, \\
	\delta_{s^{\dagger}_{h+1}}, \; s_h=s_h^{\dagger} \text { or } n_{s_h,a_h}=0.
	\end{array}\right. \\ r^{\dagger}( s_h, a_h)&=\left\{\begin{array}{ll}
	r(s_h, a_h), \; n_{s_h,a_h}>0, \\
	0, \; s_h=s^{\dagger}_{h} \text { or } n_{s_h,a_h}=0.
	\end{array}\right.
	\end{align*}}
 here $\delta_s$ is the Dirac measure and we denote $V^{\dagger \pi}_h$ and $v^{\dagger\pi}$ to be the values under $\mathcal{M}^\dagger$. In this case, the pessimistic value iteration guarantees with high probability that, for any behavior policy $\mu$,{\small
 \begin{align*}
	 &v^*-v^{{\pi}^{\text{PVI}}}\lesssim \sum_{h=2}^{H+1}d^{\dagger\pi^*}_h(s^\dagger_h)\\
  +&\sum_{h=1}^H\sum_{(s,a)\in\mathcal{C}_h}d^{\dagger\pi^*}_h(s,a)\cdot\sqrt{\frac{\mathrm{Var}_{P^\dagger_{s,a}}(r^\dagger_h+V^{\dagger\pi^*}_{h+1})}{ n\cdot d^\mu_h{(s,a)}}}\\
  +&\widetilde{O}\left(\frac{H^3}{n\bar{d}_m}\right).
  \end{align*}
}The off-support gap $\sum_{h=2}^{H+1}d^{\dagger\pi^*}_h(s^\dagger_h)$ satisfies $d^{\dagger\pi^*}_h(s^\dagger_h)=\sum_{t=1}^{h-1}\sum_{(s,a)\in\mathcal{S}\times\mathcal{A}\backslash\mathcal{C}_t}d^{\dagger\pi^*}_t(s,a)$ and
$\mathcal{C}_h:=\{(s,a):d^\mu_h(s,a)>0\}$. When assumption~\ref{assum:uniform} or \ref{assum:single_concen} is satisfied, this gap vanishes since $\mathcal{S}\times\mathcal{A}\backslash \mathcal{C}_h=\emptyset$, and the assumption-free generalization reduces to \eqref{eqn:APVI}.

Beyond the tabular setting, assumption-free RL is also considered in the function approximation setting. \cite{liu2020provably} uses $\epsilon_\zeta$, the probability under a policy of escaping to state-actions with insufficient data during an episode, to measure the state-action region that is agnostic to the behavior policy, then it incurs an off-support gap $\frac{V_{\max}\epsilon_\zeta}{1-\gamma}$.\footnote{In the discounted setting, $1/(1-\gamma)$, the effective horizon, is similar to $H$ in the finite horizon setting. }
The other study relies the condition \emph{Compliance of Dataset} which only requires the data tuples $(s_i,a_i,r_i,s'_i)$ to follow the same MDP transition $P$ that might not cover any good policy, and the data agnostic region is handled by regularization to avoid singularity \cite{jin2021pessimism}. For general function approximation, the off-support gap is characterized by Theorem 3.1 of \cite{xie2021bellman}.

\section{Offline Policy Learning with Function Approximations}\label{sec:id_offline_fa}

Fitted Q-Iteration (FQI) \cite{ernst2005tree}, which is initially named as \emph{fitted value iteration} (FVI) \cite{gordon1999approximate}, makes it possible to take full advantage of any regression algorithm
for achieving generalization for reinforcement learning. In particular, it is widely adopted for offline RL when only historical data are provided \cite{antos2007fitted,munos2008finite}. In the previous section~\ref{sec:ope_FA}, we have seen that its variants Fitted Q-Evaluation are the statistically optimal estimator for the \emph{offline policy evaluation} task. For policy learning with function approximation, we review FQI/FVI as it still yields strong instance-dependent guarantees.

\textbf{Pessimism remains effective for function approximation.} The general prototype of pessimism combined with FVI in \eqref{eqn:VI}, \eqref{eqn:UQ} remains valid for any MDPs. If the point-wise condition \cite{jin2021pessimism}
\[
 |(\widehat{\mathcal{P}}_{h}\widehat{Q}_{h+1})(\cdot,\cdot) - ({\mathcal{P}}_{h}\widehat{Q}_{h+1})(\cdot,\cdot)|\leq \Gamma(\cdot,\cdot)
\]
holds true, then the suboptimality gap can be bounded by 
\[
v^*-v^{{\pi}^{\text{PFVI}}}\leq 2 \sum_{h=1}^H \mathbb{E}_{(s_h,a_h)\sim \pi^*}\left[\Gamma_h\left(s_h, a_h\right)\right]. 
\]
\subsection{OPL with Linear Function Approximation}\label{sec:OPL_linear}  

When Linear MDP models (c.f. section~\ref{subsec:str}) are instantiated, the FVI solves 
{\small
\begin{equation}\label{eqn:FVI}
\begin{aligned}
\widehat{w}_h=&\text{argmax}_{w\in\R^d}\bigg\{\sum_{\tau=1}^n\left(r_h^\tau+\widehat{V}_{h+1}\left(x_{h+1}^\tau\right)-\phi\left(x_h^\tau, a_h^\tau\right)^{\top} w\right)^2\\
+&\lambda \norm{w}_2^2\bigg\},
\end{aligned}
\end{equation}}
and $\widehat{\mathcal{P}}_{h}\widehat{Q}_{h+1}(\cdot,\cdot)=\phi(\cdot,\cdot)^\top \widehat{w}_h$ has a closed-form solution. The pessimism $\Gamma_h(s,a)=dH\sqrt{\phi(s,a)^\top \Lambda_h^{-1}\phi(s,a)}$, and ${\phi(s,a)^\top \Lambda_h^{-1}\phi(s,a)}$ represents the effective number of samples observed in offline data along the $\phi$ direction, and thus represents the uncertainty along the $\phi$ direction. Here $\Lambda_h=\sum_{\tau=1}^n \phi\left(x_h^\tau, a_h^\tau\right) \phi\left(x_h^\tau, a_h^\tau\right)^{\top}+\lambda \cdot I$ is the Gram matrix. The resulting bound scales as 
{\small
\begin{equation}\label{eqn:PFVI}
v^*-v^{{\pi}^{\text{PFVI}}}\lesssim dH \sum_{h=1}^H \mathop{\mathbb{E}}_{(s_h,a_h)\sim \pi^*}\left[\sqrt{\phi(s_h,a_h)^\top \Lambda_h^{-1}\phi(s_h,a_h)}\right].
\end{equation}
}

\textbf{Is FQI/FVI itself sufficient for optimality?} When reducing to the tabular MDPs with $\phi(s,a)=\mathbf{1}_{s,a}$, PFVI has the form {\small$\widetilde{O}(d H\cdot\sum_{h,s,a}d^{\pi^*}_h(s,a)\sqrt{\frac{1}{n\cdot d^\mu_h(s,a)}})$}, and this deviates from Theorem~\ref{thm:APVI} {\small$\widetilde{O}(\sum_{h,s,a}d^{\pi^*}_h(s,a)\sqrt{\frac{\Var_{P_{s,a}}(r+V^*_{h+1})}{n\cdot d^\mu_h(s,a)}})$} by a factor of $H^{1/2}$.
By direct comparison, it can be seen that PFVI cannot get rid of the explicit $H$ factor due to missing the variance information (\emph{w.r.t} $V^*$). 

Intuitively, it might not be ideal to put equal weights on all the training samples in the FQI/FVI objectives, as different data pieces carry different ``amount'' of information. The term $\Var_{P_{s,a}}(r+V^*_{h+1})$ happens to measure the aleatoric uncertainty at location $(s,a)$. If $\Var_{P_{s_1,a_1}}(r+V^*_{h+1})\ll \Var_{P_{s_2,a_2}}(r+V^*_{h+1})$, then the information contained in sample piece $(s_1,a_1,s'_1,r_1)$ is more certain than the sample $(s_2,a_2,s'_2,r_2)$. To address this, existing literature deployed variance reweighting \cite{min2021variance,yin2022near,xiong2022nearly} for FQI/FVI.

\textbf{Variance-weighted FVI.} Instead of regressing via \eqref{eqn:FVI}, Variance-weighted FVI reweights each sample via an estimated conditional variance $\widehat{\sigma}^2$
{\small
\begin{align*}
\widehat{{w}}_{h}:=\underset{{w} \in \mathbb{R}^{d}}{\operatorname{argmin}} \;&\sum_{k=1}^{n}\frac{\left[\langle {\phi}(s_{h}^k,a_{h}^k), {w}\rangle-r_{ h}^k-\widehat{V}_{h+1}(s_{h+1}^{\prime k})\right]^{2}}{\widehat{\sigma}^2_h(s_h^k,a_h^k)} \\
&+  \lambda\|{w}\|_{2}^{2}
\end{align*}}where $\widehat{\sigma}^2$ approximates $\Var_{P_{s,a}}(r+V^*_{h+1})$ and can be computed by estimating the first and second order moments separately. The pessimism in this case is modified as:{\small\[
\Gamma_h \approx O\left(\sqrt{d} \cdot (\phi(\cdot, \cdot)^{\top} \widehat{\Lambda}_{h}^{-1} \phi(\cdot, \cdot) )^{1 / 2}\right)+\frac{ H^4\sqrt{d}}{n}
\]
}with $
\widehat{\Lambda}_h=\sum_{k=1}^n\phi(s_{h}^k,a_{h}^k)\phi(s_{h}^k,a_{h}^k)^\top/\widehat{\sigma}^2_h(s_{h}^k,a_{h}^k) +\lambda I_d$ being the reweighted Gram matrix. With the update $Q_h(\cdot,\cdot)\leftarrow \phi(\cdot,\cdot)^\top \widehat{w}_h-\Gamma_h(\cdot,\cdot)$, we have the following.

\begin{theorem}\label{thm:main-linear}
    For linear MDPs, under assumption~\ref{assume:lopl} and some mild conditions, with high probability, for all policy $\pi$ simultaneously, $v^*-v^{{\pi}^{\text{Vw-PFVI}}}$ is bounded by
	\[
    \begin{aligned}
&\widetilde{O}\big(\sqrt{d}\sum_{h=1}^H\mathbb{E}_{\pi}\bigg[\sqrt{\phi(\cdot, \cdot)^{\top} \Lambda_{h}^{-1} \phi(\cdot, \cdot)}\bigg]\big)+\frac{2 H^4\sqrt{d}}{n},
        \end{aligned}
	\]
where $\Lambda_h=\sum_{k=1}^K \frac{\phi(s_{h}^k,a_{h}^k)\cdot \phi(s_{h}^k,a_{h}^k)^\top}{\sigma^2_{\widehat{V}_{h+1}(s_{h}^k,a_{h}^k)}}+\lambda I_d$. Moreover, $v^*-v^{{\pi}^{\text{Vw-PFVI}}}$ is also bounded by
	\begin{equation}\label{eqn:optimal_eqn}
\begin{aligned}
&\widetilde{O}\bigg(\sqrt{d}\cdot\sum_{h=1}^H\mathbb{E}_{\pi^*}\bigg[\sqrt{\phi(\cdot, \cdot)^{\top} \Lambda_{h}^{*-1} \phi(\cdot, \cdot)}\bigg]\bigg)+\frac{2 H^4\sqrt{d}}{n},
    \end{aligned}
	\end{equation}
where $\Lambda^*_h=\sum_{k=1}^K \frac{\phi(s_{h}^k,a_{h}^k)\cdot \phi(s_{h}^k,a_{h}^k)^\top}{\sigma^2_{{V}^*_{h+1}(s_{h}^k,a_{h}^k)}}+\lambda I_d$ and $\widetilde{O}$ hides universal constants and the Polylog terms.
\end{theorem}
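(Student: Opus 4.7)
The overall plan is to invoke the pessimistic value iteration decomposition stated immediately before the theorem: once we establish the pointwise uncertainty bound $|(\widehat{\mathcal{P}}_{h}\widehat{Q}_{h+1})(s,a) - ({\mathcal{P}}_{h}\widehat{Q}_{h+1})(s,a)| \leq \Gamma_h(s,a)$ on a high-probability event, the suboptimality is immediately controlled by $v^* - v^{\pi^{\text{Vw-PFVI}}} \leq 2\sum_{h=1}^H \mathbb{E}_{\pi^*}[\Gamma_h(s_h,a_h)]$, and the first bound in the theorem drops out. The refined bound~\eqref{eqn:optimal_eqn} with $\Lambda^*_h$ follows by additionally showing $\widehat{\sigma}^2_h$ approximates $\sigma^2_{V^*_{h+1}}$, which lets us swap $\widehat{\Lambda}_h$ with $\Lambda^*_h$ up to a lower-order correction absorbed into the $H^4\sqrt{d}/n$ remainder.

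First I would establish pointwise concentration for the weighted ridge estimator $\widehat{w}_h$. By the linear MDP structure there exists $w_h\in\mathbb{R}^d$ with $(\mathcal{P}_h\widehat{V}_{h+1})(s,a) = \phi(s,a)^\top w_h$, and the residual decomposes into a self-normalized noise term and a ridge-bias term:
\begin{align*}
\phi(s,a)^{\top}(\widehat{w}_h - w_h) &= \phi(s,a)^{\top}\widehat{\Lambda}_h^{-1}\sum_{k=1}^n \frac{\phi(s_h^k,a_h^k)\,\eta_h^k}{\widehat{\sigma}_h^2(s_h^k,a_h^k)} \\
&\quad - \lambda\,\phi(s,a)^{\top}\widehat{\Lambda}_h^{-1} w_h,
\end{align*}
with $\eta_h^k = r_h^k + \widehat{V}_{h+1}(s_{h+1}^{\prime k}) - \phi(s_h^k,a_h^k)^{\top}w_h$ a martingale-difference noise. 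Cauchy--Schwarz in the $\widehat{\Lambda}_h^{-1}$-norm reduces the task to bounding the self-normalized sum. The critical gain from variance weighting is that the conditional variance of $\eta_h^k/\widehat{\sigma}_h^2(s_h^k,a_h^k)$ is $O(1)$ rather than $O(H^2)$, so a Bernstein-style self-normalized inequality (plus an $\epsilon$-covering over the random class containing $\widehat{V}_{h+1}$) yields $\|\widehat{w}_h - w_h\|_{\widehat{\Lambda}_h} = \widetilde{O}(\sqrt{d})$, saving the factor of $\sqrt{H}$ lost in the Hoeffding-based PFVI bound \eqref{eqn:PFVI}. Together with the pessimism decomposition this yields the first claim.

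For~\eqref{eqn:optimal_eqn}, I would show $|\widehat{\sigma}^2_h(s,a) - \sigma^2_{V^*_{h+1}}(s,a)|$ is $\widetilde{O}(\mathrm{poly}(H,d)/\sqrt{n})$ uniformly. Since $\sigma_V^2(s,a) = (\mathcal{P}_h V^2)(s,a) - [(\mathcal{P}_h V)(s,a)]^2$, this reduces to (a) regressing the first and second conditional moments, both linear in $\phi$ under the linear MDP structure, and (b) bounding $\|\widehat{V}_{h+1} - V^*_{h+1}\|_\infty$, which itself follows from applying the first bound pointwise in the initial state. A standard perturbation then gives $\widehat{\Lambda}_h \succeq \frac{1}{2}\Lambda^*_h$ up to lower order, so inserting this inside $\mathbb{E}_{\pi^*}\bigl[\sqrt{\phi^{\top}\widehat{\Lambda}_h^{-1}\phi}\bigr]$ swaps in $\Lambda^{*-1}_h$ at a cost that is absorbed into the $\widetilde{O}(H^4\sqrt{d}/n)$ residual.

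The main obstacle is the recursive self-reference: $\widehat{\sigma}^2_h$ depends on $\widehat{V}_{h+1}$, which depends on $\widehat{w}_{h+1}$, whose concentration in turn depends on $\widehat{\sigma}^2_{h+1}$. I would break this loop by a backward induction on $h$, carrying the invariants (i) $\|\widehat{V}_{h+1} - V^*_{h+1}\|_\infty$ is small and (ii) $|\widehat{\sigma}^2_h - \sigma^2_{V^*_{h+1}}|$ is small, verifying at each step $h$ that the concentration tools apply under (i)--(ii) at step $h+1$, and checking that errors only accumulate polynomially in $H$ so that they collapse into the stated $H^4\sqrt{d}/n$ remainder. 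A secondary subtlety is that the covering number of the weighted value class is data-dependent through $\widehat{\sigma}^2_h$; this can be addressed either by a joint covering of $(\widehat{V}_h,\widehat{\sigma}^2_h)$ or by clipping $\widehat{\sigma}^2_h$ from below with a small constant to restore boundedness before covering.
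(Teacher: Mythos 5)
Your proposal is correct and follows essentially the same route the paper outlines for this result: the pessimistic value-iteration decomposition reducing suboptimality to $\sum_h \E_{\pi}[\Gamma_h]$, a Bernstein-style self-normalized martingale bound (with a covering argument over the data-dependent value class) that exploits the $O(1)$ conditional variance of the reweighted residuals to save the extra $\sqrt{H}$, and a backward induction establishing $\|\widehat V_{h+1}-V^*_{h+1}\|_\infty$ and $|\widehat\sigma_h^2-\sigma^2_{V^*_{h+1}}|$ are small (using the coverage assumption to make the pointwise bonus uniformly $O(1/\sqrt{n})$) so that $\widehat\Lambda_h$ can be traded for $\Lambda^*_h$ in the second bound. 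No gaps worth flagging.
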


Theorem~\ref{thm:main-linear} extends the instance-dependent characterization for offline RL in \ref{sec:tabular_instance_optimality} to the linear case. Compared to FVI \eqref{eqn:PFVI}, the main term in Theorem~\ref{thm:main-linear} replaces the explicit dependence on $H$ with a more adaptive/instance-dependent characterization. For instance, if we ignore the technical treatment by taking $\lambda=0$ and $\sigma^*_h\approx \Var_P(V^*_{h+1})$, then for the {partially deterministic systems} (where there are $t$ stochastic $P_h$'s and $H-t$ deterministic $P_h$'s), the main term diminishes to $$\sqrt{d}\sum_{i=1}^t\mathbb{E}_{\pi^*}\big[\sqrt{\phi(\cdot, \cdot)^{\top} \Lambda_{h_i}^{*-1} \phi(\cdot, \cdot)}\big]$$ 
with $h_i\in\{h:\;s.t. \;P_{h}\;\text{is stochastic}\}$ and can be a much smaller quantity when $t\ll H$. Furthermore, for the {fully deterministic system}, \ref{thm:main-linear} automatically provides faster convergence rate $O(\frac{1}{n})$, given that the main term degenerates to $0$. 

\subsection{OPL with Parametric Function Approximation}\label{sec:OPL_parametric} 

The parametric function class $\mathcal{F}:=\{f(\theta,\phi(\cdot,\cdot)):\mathcal{S}\times\mathcal{A}\rightarrow\mathbb{R},\theta\in\Theta\}$ provides the flexibility of selecting model $f$, making it possible for handling a variety of tasks. For instance, when $f$ is instantiated to be neural networks, $\theta$ corresponds to the weights of each network layers and $\phi(\cdot,\cdot)$ corresponds to the state-action representations (which is induced by the network architecture). When facing with easier tasks, we can deploy simpler model $f$ such as polynomials or even linear function $f(\theta,\phi)=\langle \theta,\phi \rangle $. 

Similar to FQE for the parametric function approximation, FQI perform the update with pessimism ($\phi_{h,k}=\phi(s^k_h,a^k_h)$) 
{\small
\begin{align*}
\widehat{\theta}_h&\leftarrow \argmin_{\theta\in\Theta}\sum_{k=1}^{n}\frac{\left[f\left(\theta, \phi_{h,k}\right)-r_{h,k}-\widehat{V}_{h+1}(s_{h+1}^k)\right]^{2}}{\widehat{\sigma}^2_h(s_h^k,a_h^k)}+\lambda \norm{\theta}_2^2\\
&\Gamma_{h}(\cdot, \cdot) \leftarrow \widetilde{O}\left(d\sqrt{\nabla_{\theta} f(\widehat{\theta}_h,\phi(\cdot,\cdot))^{\top} {\Lambda}_{h}^{-1} \nabla_{\theta} f(\widehat{\theta}_h,\phi(\cdot,\cdot))}+\frac{1}{K}\right),
\end{align*}} where $\widehat{\sigma}^2_h$ approximates $\sigma^2_h(s,a):=\Var_{P(\cdot|s,a)} (r+V^*_{h+1})$, and the reweighted Gram matrix has the form 
$${\Lambda}_{h} \leftarrow \sum_{k=1}^{n} \nabla f(\widehat{\theta}_h,\phi_{h,k}) \nabla f(\widehat{\theta}_h,\phi_{h,k})^{\top}/\widehat{\sigma}^2(s^k_h,a^k_h)+\lambda \cdot I.$$ 
Compared to Linear function approximation, the feature representation $\phi$ is replaced with $\nabla f(\widehat{\theta},\phi)$, and regression objective admits no closed-form solution.
This design generalizes the results in linear function approximation as follows:

\begin{theorem}[\cite{yin2022offline}]\label{thm:VAFQL}
	Suppose Assumption~\ref{assum:R+BC},\ref{assum:cover} and other mild conditions, with probability $1-\delta$, for all policy $\pi$ simultaneously, it holds ($\phi_h=\phi(s_h,a_h)$)
	\begin{align*}
	&v^\pi-v^{\widehat{\pi}}\lesssim 
 d\sum_{h=1}^H \E_\pi\left[\norm{\nabla_\theta f(\widehat{\theta}_h,\phi_h)}_{\Lambda_h^{-1}}  \right]
 +\frac{1}{n}.
	\end{align*}
 In particular, it has  
	\[
	v^{*}-v^{\widehat{\pi}}\lesssim  d\sum_{h=1}^H \E_{\pi^*}\left[\norm{\nabla^\top_\theta f({\theta}^*_h,\phi_h)}_{\Lambda^{*-1}_h}\right]+\frac{1}{n}.
	\]
	Here $\Lambda^{*}_h=\sum_{k=1}^K \frac{\nabla_\theta f({\theta}^*_h,\phi_{h,k})\nabla^\top_\theta f({\theta}^*_h,\phi_{h,k})}{\sigma^*_h(s^k_h,a^k_h)^2}+\lambda I_d$ and the $\sigma^*_h(\cdot,\cdot)^2:=\max\{1,\Var_{P_h}V^*_{h+1}(\cdot,\cdot)\}$.
	
\end{theorem}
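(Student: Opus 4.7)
The plan is to follow the pessimism meta-argument already flagged in Section~\ref{sec:id_offline_fa}: once one can verify the pointwise Bellman error bound $|(\widehat{\mathcal{P}}_h \widehat{Q}_{h+1})(\cdot,\cdot)-(\mathcal{P}_h \widehat{Q}_{h+1})(\cdot,\cdot)|\le \Gamma_h(\cdot,\cdot)$ with high probability, the meta-lemma immediately yields $v^{\pi}-v^{\widehat{\pi}} \lesssim \sum_{h} \mathbb{E}_{\pi}[\Gamma_h]$ and, specialized to $\pi=\pi^*$, the stated suboptimality. So the entire technical burden is to prove that the chosen $\Gamma_h(s,a) \asymp d\, \|\nabla_\theta f(\widehat{\theta}_h,\phi(s,a))\|_{\Lambda_h^{-1}} + 1/n$ is indeed a valid pointwise confidence width for the weighted parametric regression that defines $\widehat{\theta}_h$.

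First I would exploit Bellman completeness (Assumption~\ref{assum:R+BC}): for each $h$ there exists $\theta^{\mathrm{BC}}_h \in \Theta$ with $f(\theta^{\mathrm{BC}}_h,\phi(\cdot,\cdot)) = \mathcal{P}_h \widehat{V}_{h+1}(\cdot,\cdot)$, so the weighted regression targets an in-class function plus mean-zero Bellman noise $\eta_{h,k} := r_{h,k}+\widehat{V}_{h+1}(s_{h+1}^k)-(\mathcal{P}_h\widehat{V}_{h+1})(s_h^k,a_h^k)$. Next, I would Taylor-expand $f(\widehat{\theta}_h,\phi)-f(\theta^{\mathrm{BC}}_h,\phi)$ around $\theta^{\mathrm{BC}}_h$ using the third-order smoothness of $\mathcal{F}$ and the boundedness of $\Theta$, reducing the problem to a linear-in-gradient regression with the Jacobian $\nabla_\theta f$ playing the role of the feature. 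The first-order optimality condition of the weighted program, together with the quadratic growth condition of Assumption~\ref{assum:cover} (to invert the Hessian of the population loss) and the lower bound $\mathbb{E}_{\mu,h}[\nabla f \nabla f^\top] \succ \kappa I$, yields a handle on $\widehat{\theta}_h-\theta^{\mathrm{BC}}_h$ in the $\Lambda_h$-norm. A self-normalized concentration bound for vector-valued martingales (Abbasi--Yadkori type) applied to $\sum_k \nabla_\theta f(\widehat{\theta}_h,\phi_{h,k})\eta_{h,k}/\widehat{\sigma}_h^2$ then gives the $\|\nabla_\theta f(\widehat{\theta}_h,\phi)\|_{\Lambda_h^{-1}}$ confidence width, with the $d$ factor arising from the covering-number argument over $\Theta$ used to make the bound uniform in $\widehat{\theta}_h$ (which depends on the data). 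Higher-order Taylor remainders and the $\lambda\|\theta\|^2$ bias contribute only the $O(1/n)$ additive slack.

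For the variance-weighted refinement that replaces $H^2$ by an instance-dependent factor and produces the sharper bound with $\Lambda_h^{*-1}$, I would analyze the variance estimator $\widehat{\sigma}^2_h$ via a first-and-second-moment construction (regressing the squared Bellman target and subtracting the squared regression estimate), showing $\widehat{\sigma}^2_h(s,a) \approx \max\{1,\mathrm{Var}_{P_h} V^*_{h+1}(s,a)\} = \sigma^{*2}_h(s,a)$ up to $o(1)$ multiplicative error on the support of $\mu$. This in turn implies $\Lambda_h \succeq c\,\Lambda^*_h$ with high probability, which upgrades the data-adaptive bound to the oracle one in \eqref{eqn:optimal_eqn}. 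Finally, I would chain these per-step guarantees through the standard backward induction on $h$, absorbing the recursive dependence of $\widehat{V}_{h+1}$ into the covering number and keeping the cumulative failure probability below $\delta$ by a union bound.

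The main obstacle, I expect, is the coupling between $\widehat{\theta}_h$ and $\widehat{V}_{h+1}$: the regression target itself is random and class-dependent, so the self-normalized concentration inequality must be made uniform over the set of plausible value functions induced by the algorithm. Handling this requires a careful covering-number argument over $\{V(\cdot)=\max_a f(\theta,\phi(\cdot,a))-\Gamma_{h+1}(\cdot,a) : \theta\in\Theta\}$ whose $\epsilon$-entropy must be controlled by the smoothness of $f$ in $\theta$ and the boundedness of $\Theta$. A secondary subtlety is verifying that the quadratic growth condition of Assumption~\ref{assum:cover} on the population loss transfers to the empirical (weighted) loss; this needs the gradient-covariance lower bound plus a standard matrix-Bernstein argument on $\Lambda_h$, and is the step where the sample size requirement implicit in the ``mild conditions'' actually bites.
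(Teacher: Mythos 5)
Your proposal follows essentially the same route the paper outlines for this result: the pessimism meta-argument reducing everything to a valid pointwise width $\Gamma_h$, linearization of the parametric regression via the gradient features under Bellman completeness and the quadratic-growth/gradient-covariance conditions of Assumption~\ref{assum:cover}, a self-normalized (Bernstein-type) vector-martingale concentration made uniform by covering arguments, and the variance-estimator comparison $\Lambda_h \succeq c\,\Lambda_h^*$ to pass to the oracle bound. This matches the strategy of \cite{yin2022offline} as described in Section~\ref{sec:OPL_parametric}, so no substantive gap to report.
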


From a technical perspective, the key tool for finite-sample analysis in function approximation is the \emph{Self-Normalized Concentration for Vector-Valued Martingales} \cite{abbasi2011improved}, originally developed for analyzing stochastic linear bandits. This tool provides a Hoeffding-style concentration bound that does not rely on variance or second-order information. Recently, Zhou et al. \cite{zhou2021nearly} extended this by proving a Bernstein version of Self-Normalized Concentration for linear mixture MDPs. This approach is well-suited for analyzing variance reweighting mechanisms in offline RL, applicable to both linear MDPs \cite{xiong2022nearly,yin2022near} and parametric models \cite{yin2022offline} as discussed above.

\subsection{Pessimism in the wild}

Beyond the theoretical focus, the aim of pessimism is to explicitly account for uncertainty in state-action value estimation and ``penalize'' actions in areas of high uncertainty. This approach generally involves modifying the value function (or policy optimization procedure) to discourage actions that have high uncertainty. There are several ways to implement pessimism:

\emph{Lower Confidence Bound (LCB)}. Instead of using the point estimate of the value function, the agent computes a lower bound based on the confidence interval around the estimate. If the agent is uncertain about the true value 
$Q(s,a)$, it will use a pessimistic estimate such as:
\[
Q_{\text{LCB}}(s, a)=\hat{Q}(s, a)-\lambda \cdot U(s, a)
\]
with $\lambda$ controlling the level of pessimism. This encourages the agent to favor actions with more reliable estimates, avoiding overestimated, risky actions, and is celebrated by theoretical research \cite{rashidinejad2021bridging,xie2021policy,wang2022gap,nguyen2023instance,di2023pessimistic} and other research mentioned in the previous sections. 

\emph{Penalty on Critic.} Another approach is to add a divergence penalty term to the critic objective to make conservative value estimates \cite{nachum2019algaedice}. For instance, \cite{kostrikov2021offline} uses Fisher divergence with respect to the Boltzmann policy and behavior policy, and \emph{conservative Q-learning} \cite{kumar2020conservative,lyu2022mildly} uses Kullback–Leibler divergence for the Boltzmann policy and the behavior policy.

\emph{Policy Regularization.} Regularization is often used to prevent the learned policy from deviating too much from the behavior policy. This can be viewed as a pessimistic strategy because the learned policy is constrained to stay close to what has been observed, reducing the risk of taking untested actions. For instance, BRAC \cite{wu2019behavior} adds a regularization term $\mathbb{E}_{s \sim \mathcal{D}}\left[D_{\mathrm{KL}}\left(\pi_\theta(\cdot \mid s) \| \pi_b(\cdot \mid s)\right)\right]$ to prevent the learned policy from deviating too much from the behavior policy, thus ensuring pessimistic behavior in uncertain areas.

\textbf{Optimism vs Pessimism?} 
While, under the offline setting, the pessimistic algorithm is consistent with rational decision-making using preferences that satisfy uncertainty aversion \cite{gilboa1989maxmin}, it remains intriguing whether pessimism is uniformly better than optimism in the instance-dependent scenarios. For multi-armed bandit problems, \cite{xiao2021optimality} demonstrated that greedy, optimistic, and pessimistic approaches are all (globally) minimax optimal for offline optimization, with each potentially outperforming the others in specific instances. For example, in case 1, where batch data frequently includes good arms, pessimism performs better. Conversely, in case 2, if the behavior policy pulls good arms infrequently, optimism may be advantageous due to the higher uncertainty associated with good arms. This suggests that existing instance-dependent offline RL studies primarily address case 1 (echo Assumption~\ref{assum:single_concen}), leaving open the question of whether section~\ref{sec:AF-offlineRL} could be further enhanced by incorporating an optimistic perspective, as in case 2.

\section{Low-Adaptive Exploration in RL}\label{sec:lowadaptive}
So far, we have focused on offline RL which aims at doing the best one can with the given data in learning a new policy. The resulting algorithm is based on the ``pessimism'' principle that discourages exploration. 
\begin{figure}[tb]
    \centering
    \includegraphics[width=0.6\linewidth]{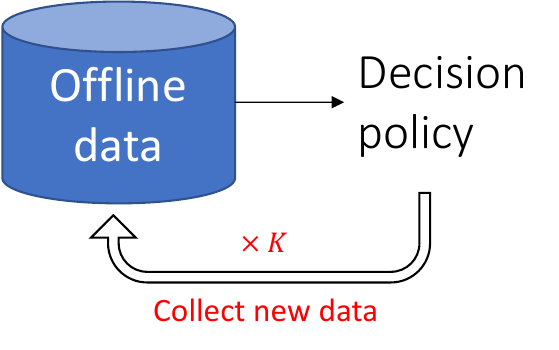}
    \caption{Illustration of the problem of low-adaptive RL.}
    \label{fig:illus_lowadaptive}
\end{figure}

If an offline RL agent ends up finding a near-optimal policy, that is because we are lucky to have observed data that covered all states/actions that that optimal policy has taken.  Alternatively, we can change the goal-post (in ``assumption-free'' offline RL) by declaring that we will only learn that part that is ``observable'' based on the offline data.  Statistical lower bounds indicate that both these results cannot be substantially improved.

This conclusion is quite pessimistic indeed in that it does not take into account the common real-life scenario that the learned policy may get deployed and that a new batch of data will eventually be collected, nor how to make use of the new data.

This is a much weaker claim than the online RL, which involves algorithmically ensuring that the exploration policies to have good coverage, hence allowing the learner to identify the optimal policy.

One way to think about this is that offline RL is a problem with no adaptivity allowed, while online RL allows adaptively choosing a new policy after every trajectory. This motivates us to consider the problem \emph{in between} by asking:
\begin{center}
    \textsf{Can we learn as well as the best online RL agent while using only a few batches? } 
\end{center} 
One can also think about the problem as a sequence of offline RL problem, but the learner can decide on the exploration policy $\mu$ to run for the next batch. All three examples that we considered as motivation of offline RL in the introduction may actually allow some limited exploration. Minimizing the number of batches help to alleviate all of the following issues.
	\begin{itemize}
		\item \textbf{Deployment Costs:} Updating policies in distributed systems, such as autonomous vehicles or network routers, can be computationally expensive.
		\item \textbf{Testing and Approval Overheads:} Policies in sensitive domains (e.g., healthcare) require extensive testing, ethical approvals, and regulatory compliance.
		\item \textbf{Concurrency Challenges:} Running experiments in parallel to identify optimal policies is limited by physical and logistical constraints.
	\end{itemize}
	Low-adaptive RL addresses these challenges by limiting the number of policy changes ($K$) during the learning process, where $K \ll T$ (the total number of rounds). 
This problem is well-studied in multi-armed bandits and linear bandits which shows that no-regret learning with $\tilde{O}(\sqrt{T})$ regret can be achieved with only $O(\log\log T)$ batches of exploration \citep{cesa2013online,perchet2016batched,gao2019batched}, but the same problem on RL is only getting started recently \citep{matsushima2021deployment,huang2021towards,qiao2022sample,qiao2023near}.
    

There are two closely related settings.
\begin{description}
    \item[RL with low switching cost] The learner must limit the number of times the deployed policy changes to $K$ (for historical reason the policy is often confined to deterministic policies). 
    \item[RL with low batch complexity] The learner must schedule $K$ batches of exploration (i.e., experiments) ahead of time and only look at the collected data in the completed batch and decide on the (sequence of) policies to use for the next batch at pre-determined checkpoints.
\end{description}
Both settings could make sense in practice with the second setting being qualitatively stronger\footnote{It is stronger when we allow randomized policies, and not compatible if we restrict to deterministic policies.} as monitoring certain statistics might be much cheaper than deploying new policies.

\textbf{Regret and sample complexity in online RL.} Considering low-switching or low batch complexity in isolation does not make sense, the algorithm must also be able to find a near-optimal policy. To quantify the performance of an RL algorithm it is typical that we consider (cumulative regret) of the sequence of policies being played $\pi^{(t)}$
$$
\mathrm{Regret} := \sum_{t=1}^T v^{\pi^*} -  v^{\pi^{(t)}}
$$
or the number of samples $T$ as a function of $\epsilon>0$ such that we can identify $\hat{\pi}$ that satisfies 
$$
v^{\pi^*} -  v^{\hat{\pi}}\leq \epsilon.
$$

 In the remainder of this section, we survey the existing work on reinforcement learning for both the tabular case and under function approximation. 

\subsection{Learning Tabular RL in $O(\log \log T)$ batches}\label{sec:batched_tabular}
Let us first state the known information-theoretic lower bounds in this problem.
\begin{theorem}
    Consider the tabular RL problems (defined in Section~\ref{sec:setup}). Assume $S<A^{H/2}$ \footnote{This is without loss of generality because otherwise uniform exploration and IS-based OPE with curse-of-horizon suffices to solve the problem with one batch.}.  
    \begin{enumerate}[leftmargin=*] 
        \item  Any algorithms with a regret of $\tilde{O}(\sqrt{T\mathrm{poly}(H,S,A)})$ must incur a switching cost of $\Omega(HSA\log\log T)$ and use $\Omega(H/\log T + \log\log T)$ batches of exploration. 
        \item Moreover, any algorithms with a regret of $o(T)$ must incur a switching cost of $\Omega(HSA)$ and use $\Omega(H/\log T)$ batches of exploration.
    \end{enumerate} 
\end{theorem}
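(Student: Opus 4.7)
The plan is to establish both parts via hard-instance constructions that couple two complementary sources of hardness: a ``combination-lock'' chain MDP that forces sequential information flow through the $H$ layers (giving the $\Omega(HSA)$ switching and $\Omega(H/\log T)$ batch bounds), together with stochastic bandit sub-problems embedded at each $(h,s)$ that inherit the batched-bandit lower bound of Gao, Han, Ren, Zhou (2019) of $\Omega(\log\log T)$ batches for $\tilde O(\sqrt{T})$ regret.

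I would first handle Part 2, the weaker statement. For the switching-cost bound, fix a family of MDPs indexed by a vector of ``correct'' actions $(a^*_{h,s})$, where deviating at any $(h,s)$ pays a constant per-step regret. If an algorithm's sequence of deployed (deterministic) policies has fewer than $HSA$ switches, then some triple $(h,s,a)$ is never played; the adversary plants the correct action at that triple, so the algorithm plays suboptimally every time $s$ is visited at step $h$. A short coupling argument shows such visits occur $\Omega(T)$ times on average, contradicting $o(T)$ regret. For the batch bound, use a depth-$H$ combination lock whose only rewarding trajectory is a hidden action sequence $(a_1^*,\ldots,a_H^*)$; by Fano's inequality together with per-batch KL bounds against a uniform-null instance, each batch of size $T/K$ can resolve at most $O(\log T)$ layers of the lock, which gives $K\cdot O(\log T)\ge H$, i.e., $K=\Omega(H/\log T)$. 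The condition $S<A^{H/2}$ is precisely what makes this construction non-trivial, ruling out the regime in which enumerating all state-paths via one batch of uniform exploration with IS-based OPE succeeds.

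For Part 1, I would additionally embed an independent stochastic bandit at each $(h,s)$, with action-dependent mean gaps and independent sub-Gaussian noise. Conditional on reaching $(h,s)$ often enough, the restriction of any global RL algorithm to $(h,s)$ is itself a batched bandit algorithm, and if the global regret is $\tilde O(\sqrt{T\cdot\mathrm{poly}(H,S,A)})$ then the induced per-$(h,s)$ regret is sublinear of order $\sqrt{T}$ in the bandit sense; the Gao--Han--Ren--Zhou lower bound then forces $\Omega(\log\log T)$ switches per $(h,s)$ to identify the right action. Since the sub-bandits are information-theoretically decoupled, these switches accumulate to $\Omega(HSA\log\log T)$. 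The batch count, being a single global counter, inherits both the $\Omega(H/\log T)$ from Part 2 and the $\Omega(\log\log T)$ from the bandits, yielding $\Omega(H/\log T + \log\log T)$.

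The main obstacle is composing the chain hardness with the bandit hardness without one of them nullifying the other: the combination-lock concentrates probability mass on a single trajectory and makes most $(h,s,a)$ off-support, which would trivialize the per-$(h,s,a)$ bandit lower bound. A workable remedy is to split the MDP into two branches via the initial distribution $d_1$, a chain branch that delivers the $H/\log T$ hardness and a ``wide'' branch in which every $(h,s,a)$ is reached with probability $\Omega(1/(HSA))$, and balance their weights so that both lower bounds survive simultaneously. A secondary technical care is aligning the reduction of RL regret to per-$(h,s)$ bandit regret so the polynomial factors match the $\tilde O(\sqrt{T\cdot\mathrm{poly}(H,S,A)})$ hypothesis; otherwise the algorithm can evade the lower bound by allocating its samples very unevenly across $(h,s)$ pairs, which must be ruled out by a uniform visitation lower bound on the wide branch.
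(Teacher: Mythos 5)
The paper does not actually prove this theorem: it is assembled by citation, with the $\Omega(H/\log T)$ batch bound from Theorem B.3 of \cite{huang2021towards}, the $\Omega(\log\log T)$ batch bound from Corollary 3 of \cite{gao2019batched}, and both switching-cost bounds from Theorems 4.2 and 4.3 of \cite{qiao2022sample}. Your reconstruction of the two batch-complexity ingredients follows essentially the same strategy as those sources: a depth-$H$ combination lock in which a non-adaptive batch of $T/K$ samples can only resolve $O(\log T)$ further layers (this is where $S<A^{H/2}$ enters), plus a reduction of the embedded per-$(h,s)$ subproblems to batched bandits. Those sketches are credible, modulo the unspecified Fano/KL details.

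The genuine gap is in both switching-cost claims, and it is the same counting error twice. A deterministic policy prescribes one action at \emph{every} $(h,s)$ pair, so a single deployment can cover up to $HS$ distinct triples, and $A$ well-chosen deterministic policies --- hence only $A-1$ switches --- already play every one of the $HSA$ triples in any MDP where each state is reachable at each layer. Your claim that ``fewer than $HSA$ switches implies some triple is never played'' is therefore false, and it fails precisely in the ``wide branch'' you introduce to guarantee reachability. The same issue defeats the accumulation step in Part 1: one global policy switch simultaneously advances the batched-bandit subroutine at all $HS$ pairs at once, so summing the per-pair switching lower bounds yields only $\Omega(A\log\log T)$ globally, not $\Omega(HSA\log\log T)$. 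To obtain the multiplicative $HS$ factor one needs an instance in which each deployed policy is informative about only $O(1)$ new triples --- e.g., deterministic lock-style transitions from a fixed initial state, so that each deterministic policy traverses a single path, only the frontier layer is informative, the $SA$ trials within a layer must be carried out by distinct policies, and the layers must be unlocked sequentially. That sequential-revelation structure is the substance of the cited constructions in \cite{qiao2022sample}, and it is exactly what your two-branch remedy strips out of the switching-cost instance; as written, your argument proves only an $\Omega(A)$ (resp.\ $\Omega(A\log\log T)$) switching-cost lower bound.
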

The lower bounds of for the batch complexity is due to Theorem B.3 of \cite{huang2021towards} and Corollary 3 of \citep{gao2019batched}. The lower bounds of the switching costs are due to \citep[Theorem 4.2 and 4.3]{qiao2022sample}.

Now let us inspect the algorithmic techniques in this space. First, a doubling schedule of exploration due to the UCB2 algorithm \citep{auer2002finite} can be combined with optimistic exploration $Q$-learning to obtain a near-optimal regret while using only $\log(T)$ switching cost\citep{bai2019provably}, but since it requires monitoring the exploration to decide when to change the policy, its batch complexity remains $T$. 

Can this algorithm be improved? \citet{qiao2022sample} designed a policy elimination-based method called \emph{Adaptive Policy Elimination by Value Estimation} (APEVE) that achieves the following guarantees:
	\begin{itemize}
		\item \textbf{Near-Optimal Regret} 
			$\tilde{O}(\sqrt{H^4 S^2 A T})$ which is optimal up to a factor of $HS$.
		\item \textbf{Switching Costs} of 
        $O(H S A \log \log T)$
		matching the information-theoretic lower bound.
        \item  \textbf{Batch complexity} of $O(H\log\log T)$, which matches the information-theoretic lower bound in $T$ \footnote{A minor variation of APEVE called APEVE+ achieves \textbf{Batch complexity} of $O(H + \log\log T)$ \citep{qiao2022sample}.}
	\end{itemize}
These results highlight that low-adaptive RL can achieve comparable performance to traditional online RL while using only a small number of batches.

APEVE is a \textit{policy elimination} method, which iteratively narrows down the set of candidate policies by eliminating those deemed suboptimal. The method combines:
\begin{enumerate}[leftmargin=*]
    \item \textbf{Crude Layer-Wise Exploration:} A coarse-grained exploration scheme that explores each $h,s,a$ layer by layer. This provides a crude approximation of the useful part of the MDP's transition kernel. 
    \item \textbf{Fine Stagewise Exploration:} Use the crude transition kernel estimate to plan and identify $HSA$ policies that each visits a particular triplets $h,s,a$ most frequently among all policies in the remaining set of policies, then execute these policies to collect more data. 
    \item \textbf{Confidence-Bound Based Elimination:} Use the dataset with good coverage to conduct OPE on all policies that remains to be contenders, then eliminate those policies with their upper confidence bound lower than the highest lower confidence bound.
\end{enumerate}
Let the total number of stages be $K$, and the $k^{th}$ stage have length $T^{(k)} = K^{1-1/k}$, one can work out that the smallest $K$ such that $\sum_{k=1}^K T^{(k)} > T $ is $K=O(\log\log T)$. The total number of stages is only $O(\log\log T)$ and in each stage, it requires deterministically changing policies for $HSA$ times per stage.
 
\textbf{Reward-free exploration with $O(H)$-batches.} \citet{qiao2022sample} also presented a reward-free exploration method (LARFE) with a sample complexity of $O(H^5S^2A / \epsilon^2)$ for identifying any policies while using only $2H$ rounds of adaptivity. LARFE does not need to perform the $\log\log T$ stages of exploration since it does not care about regret, so the crude-layerwise exploration can reach a reasonable approximation and the $HSA$ exploration policies can be identified at one shot for driving the error down.

These results demonstrate that there are algorithms that can achieve nearly the same regret or sample complexity as the best online algorithm even if we only give a very small room for adaptively updating the policies. The result is further improved in \cite{zhang2022near}, who improved the regret bound to the optimal $\tilde{O}(\sqrt{H^3SA T})$ while retaining the same batch complexity.

\subsection{Linear function approximation and Reward-Free Exploration in $O(H)$ batches}\label{sec:batched_linear}
The natural next question is whether APEVE-like algorithms can be derived for RL under linear function approximation. The lower bounds are in place, 
\begin{theorem}[Theorem~7.2 and 7.2 of \citep{qiao2023near}]
    Under linear MDPs setting, any algorithm that achieves $\tilde{O}(\sqrt{T\mathrm{poly}(d,H)})$ regret must incur a switching cost of $\Omega(dH\log\log T)$ and a batch complexity of $\Omega(H/\log d + \log\log T)$
\end{theorem}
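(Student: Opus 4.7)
The plan is to prove the two lower bounds separately by constructing a family of hard linear MDP instances indexed by layer and by a $d$-dimensional packing, and then running a standard change-of-measure / Le Cam argument adapted to the batched and low-switching regimes. I would first build on the layered hard instance for linear MDPs: $H$ independent ``parallel'' linear-bandit-like subproblems embedded across horizons, where at layer $h$ the unknown parameter $\theta_h^\star$ lies in a properly separated packing of $\Omega(d)$ directions in $\mathbb{R}^d$ and the reward gap along the hidden direction is of order $\sqrt{d/T}$. The key structural property I need is that distinguishing two neighbors in the packing at any layer $h$ requires $\Omega(T/d)$ samples whose features align with a specific direction, and that failure to distinguish forces $\Omega(\sqrt{Td})$ regret on that layer via a two-point Le Cam comparison.

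\textbf{Switching cost bound $\Omega(dH\log\log T)$.} I would adapt the Gao--Han--Ren--Zhou geometric argument for batched linear bandits. Partition the time horizon into epochs $[1,n_1], (n_1, n_2], \ldots$ with $n_k$ growing doubly exponentially. Within each epoch, show via a Cauchy--Schwarz / pigeonhole argument that if fewer than $\Omega(d)$ distinct policies are deployed at layer $h$, then some direction in $\mathbb{R}^d$ is under-explored by a factor that forces $\Omega(\sqrt{n_{k+1}})$ regret in the next epoch --- contradicting the assumed $\tilde{O}(\sqrt{T\,\mathrm{poly}(d,H)})$ bound. The number of such epochs is $\Omega(\log\log T)$ and the argument applies independently at each of the $H$ layers, composing to $\Omega(dH\log\log T)$ switches.

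\textbf{Batch complexity bound $\Omega(H/\log d + \log\log T)$.} The $\log\log T$ term follows from the same doubly-geometric grid argument as in the batched bandit lower bound: if the batch endpoints grow faster than double-exponentially, the last batch must carry $\omega(\sqrt{T})$ regret. The more delicate $H/\log d$ term requires an information-propagation argument. Since the optimal policy is encoded by one of $d$ alternatives at each of $H$ independent layers, the composite instance carries $H\log d$ bits of entropy. I would argue, via a Fano-type inequality applied batch-by-batch, that the \emph{exploration policy} used in batch $k+1$ depends on the data from batch $k$ only through a summary of at most $O(\log d)$ bits that can influence reward collection at a downstream layer (roughly, the identification of the most promising direction at the preceding layer). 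Hence each batch reveals $O(\log d)$ useful bits about the composite instance, so at least $\Omega(H/\log d)$ batches are needed to drive the Bayes risk below a constant under a uniform prior on the hard family.

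\textbf{Main obstacle.} The hardest step is making the $\Omega(H/\log d)$ information-propagation lower bound rigorous. The difficulty is that, unlike tabular MDPs where the information bottleneck per batch is essentially the sample size, here we need to argue that per-batch \emph{cross-layer} information is limited by $\log d$. This demands a hard instance in which the layers are ``sequentially coupled'' --- so that effectively exploring layer $h+1$ requires first having identified the correct direction at layer $h$ --- while still respecting the linear MDP structure (feature maps $\phi$ and measures $\nu_h$ with the stated norm bounds). I would build this coupling through a tree-structured state space where the reachable feature support at layer $h+1$ is determined by the action chosen at layer $h$, and then reduce to a cascaded hypothesis testing problem whose batched sample complexity is governed by the log-cardinality of the per-layer alternatives.
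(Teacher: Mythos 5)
This theorem is not proved in the survey; it is imported verbatim from \citep{qiao2023near}, so the comparison below is against the argument in that cited work (which itself builds on \citep{gao2019batched} and \citep{huang2021towards}).

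Your architecture for the $\Omega(dH\log\log T)$ switching cost and the $\Omega(\log\log T)$ batch term is essentially the right one and matches the cited proof: embed $H$ layered linear-bandit subproblems with an $\Omega(d)$-packing of hidden directions per layer, and run the doubly-exponential-grid argument of Gao et al.\ to show that too few policy changes (resp.\ batches) leaves some feature direction under-explored in some epoch, forcing $\omega(\sqrt{T})$ regret. One caution there: the regret budget $\tilde{O}(\sqrt{T\,\mathrm{poly}(d,H)})$ is shared across all $H$ layers and all $d$ directions, so the per-layer, per-direction contradictions must be set up to \emph{add}, not merely to each individually exceed the budget; this is standard but is where the $dH$ prefactor actually has to be earned.

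The genuine gap is in the $\Omega(H/\log d)$ term. First, your accounting does not produce the claimed bound: if the composite instance carries $H\log d$ bits and each batch reveals $O(\log d)$ useful bits, you get $\Omega(H)$ batches, not $\Omega(H/\log d)$; to land on $H/\log d$ each batch must be allowed to make progress on $\Theta(\log d)$ \emph{layers}, i.e.\ reveal $\Theta(\log^2 d)$ bits. Second, and more fundamentally, the premise that ``the exploration policy in batch $k+1$ depends on batch $k$'s data only through $O(\log d)$ bits'' is not something you may assume about an arbitrary algorithm --- the learner can condition its next exploration distribution on the entire transcript. The information bottleneck cannot be imposed on the algorithm; it has to be engineered into the MDP. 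The actual mechanism in the cited line of work is a \emph{reachability} bottleneck: the hard instance is a sequentially coupled (tree-like) linear MDP in which any policy that has not yet identified the hidden directions up to layer $h$ reaches the informative region at depth $h+\ell$ with probability decaying like $d^{-\Omega(\ell)}$, so a single batch of polynomially many trajectories can push the identified frontier forward by only $O(\log_d(\mathrm{poly})) = O(\log(\cdot)/\log d)$ layers, whence $\Omega(H/\log d)$ batches. You correctly flag this coupling as the main obstacle and propose the right kind of tree construction, but the Fano-type ``bits per batch'' framing as stated does not close the argument, and verifying that the coupled construction remains a valid linear MDP under the normalization constraints ($\|\phi\|_2\le 1$, $\int_{\mathcal{S}}\|\nu_h\|\,ds\le\sqrt{d}$) is precisely where the remaining technical work lies.
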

Unfortunately, there are technical challenges and the best low-adaptive learner of linear MDPs for regret minimization still requires $O(\log T)$ batches from the doubling trick \citep{wang2021provably,gao2021provably} using the doubling trick from \citet{abbasi2011improved}.

On the other hand, in the reward-free exploration setting, a policy elimination approach \citep{qiao2023near} with merely $H$ batches of exploration while achieving a sample-complexity bound of $O(d^2H^5/\epsilon^2)$.   This improves over a related result \citep{huang2021towards} that obtains $O(d^3H^5/\epsilon^2\nu_{\min}^2)$ where $\nu_{\min}$ is an (arbitrarily small) problem-specific reachability parameter. The algorithm of \citep{qiao2023near} is also more satisfying as it does not need to know  $\nu_{\min}$ and the result does not deteriorate as $\nu_{\min}$ gets smaller.

The key algorithmic ideas are closely related to the reward-free exploration algorithm (LARFE) for the tabular case that uses layer-wise exploration (which gives rise to $H$ batches of exploration), with a carefully chosen batch of exploration policy for the next layer after knowing the MDP parameters for the current layer. 

The main difference from the tabular case is that instead of estimating the transition kernels as discrete probability distributions, we now solve linear regression problems.  Instead of identifying the policies that maximizes the visitation measure to every $(h,s,a)$, we identify a set of policies $\Pi_{h,\epsilon}$ that maximizes the visitation to every direction of features $\phi(h,s,a)$ that is relevant to learning while still keeping the set relatively small. Then the batched exploration policy $\pi$ that can be obtained using a variant of G-optimal experiment design that minimizes the maximum ``misalignment'' of the covariance matrix, namely, $\max_{\pi' \in \Pi_{h,\epsilon}}\E_{\pi'}[ \phi(s,a)^T \Sigma_\pi \phi(s,a)]$. This is still infeasible because $\pi'$ is not executed, but we can estimate the $\E[\cdot]$ uniformly for every $\pi'\in\Pi_{h,\epsilon}$ and showed that the approximate G-optimal design still works.

\subsection{Beyond Linear MDPs}\label{sec:batched_general}
Low-adaptive RL beyond linear function approximation is more open-ended. 
Most existing work settles with $O(\log T)$-style switching cost bounds that generalizes the ``doubling trick'' to more abstract settings such as linear Bellman-complete MDPs with low inherent Bellman error \citep{qiao2024logarithmic} or low Bellman Eluder-dimension \citep{zhao2024nearly}.  There hasn't been any algorithm that achieves no regret learning with either $O(\log\log T)$ switching cost or $O(\log\log T)$ batches of exploration. This is a major open problem in this space. The best-policy identification problem is likely to be easier. We believe reward-free exploration in the low-adaptive case is tractable by combining techniques from \citep{qiao2023near} and \citep{yin2022offline}.

\section{Conclusion and Open problems} 
In this paper, we have surveyed recent advances in the statistical theory of offline reinforcement learning as well as the related problem of low-adaptive exploration. Both problems are well-motivated by the emerging applications of reinforcement learning for real-life sequential decision-making problems. We covered results that characterize the optimal statistical complexity of each problem family as well as algorithms that are not only minimax optimal but also adaptive to individual problem instances across a hierarchy of coverage assumptions and structural conditions. We described not only the technical results but also theoretical insights on how these algorithms work and where the technical challenges are. 

We conclude the paper by highlighting a few open directions of research in this rich problem space. 
\begin{itemize}
    \item \textbf{Agnostic Offline RL with function approximation.} Most provable offline RL algorithms in the function approximation settings require strong assumptions on the realizability and self-consistency (i.e., Bellman completeness) of the given function class.  In practice, it is observed that even when linear function approximation is a poor approximation, the resulting policy that one can learn with it under a realistic exploration budget is still very impressive. At the moment there is no appropriate theoretical framework that satisfactorily quantifies this behavior. It will be nice to understand how much we can push the theoretical limit towards achieving similar levels of agnostic learning for offline (and online) RL comparable to supervised learning.
    \item \textbf{$O(\log\log T)$-adaptive RL with function approximation}  As we described in Section~\ref{sec:batched_linear} it remains open even under linear MDP how to achieve the optimal $O(\log\log T)$ batch complexity or switching cost while achieving a $\tilde{O}(\sqrt{T})$ regret. This is a concrete open problem that we hope to see resolved in the next few years.
    \item \textbf{Efficient computation} The paper focuses on the information-theoretical aspects of the problems and does not distinguish whether the OPE estimators,  offline RL algorithms or the low-adaptive online learners are efficiently computable. For offline RL, anything beyond linear MDPs are computationally intractable. For low-adaptive RL, the algorithms are inefficient even for the tabular case (except in some cases when there are linear-program reformulations of the experiment-design).
    \item \textbf{Theory-inspired algorithms in offline Deep RL} Despite the widely-recognized importance of offline RL problems, the theory and practice remain pretty disjoint. The principle of ``pessimism'' is independently discovered but the theoretically approaches for implementing ``pessimism'' and deep RL heuristics for implementing ``pessimism'' are very different \cite{levine2020offline,kumar2020conservative,li2023offline,asadi2024learning}. The Deep RL heuristics are often overly optimized to the specific test cases in popular benchmarks and do not work well in new problems. This was demonstrated in the context of RL for computer networking \citep{haider2024networkgym} and that an simple alternative algorithm inspired by the pessimistic bonus of \cite{yin2022offline} turns out to work significantly better than state-of-the-art deep RL counterparts. We believe it is a productive avenue of research to bring some of the theoretical ideas from offline and low-adaptive RL to practice in different problem domains.
\end{itemize}

\begin{appendix}

\section{Examples of ``Curse of Horizon'' for Importance Sampling estimators}\label{app:CoH_examples}
In this Appendix, we provide two concrete examples where the IS estimators suffer from the ``Curse of Horizon''.

\textbf{Example 1.}\cite{liu2018breaking}
Consider a ``ring MDP'' with $n$ (an odd number) states $\mathcal{S} = \{0,1,\cdots,n-1\}$, arranged on a circle (see the figure on the right). There are two actions for all states, ``L'' and ``R''. The L action moves the agent from the current state counterclockwise to the next state, and the R action does the opposite direction. This can be equivalently written as: 
\[
\begin{aligned}
P(s^{\prime} \mid s, \mathrm{~L}) & =\mathbb{I}(s^{\prime}=s-1 \bmod n) \\
P\left(s^{\prime} \mid s, \mathrm{R}\right) & =\mathbb{I}\left(s^{\prime}=s+1 \bmod n\right).
\end{aligned}
\]
Let $\eta\in[0,1]$ and $\eta\neq 1/2$. We choose the behavior policy $\mu$ and target policy $\pi$ as follows: $
    \pi(\text{R}|s)=\mu(\text{L}|s)=1-\eta, \; \; \mu(\text{R}|s)=\pi(\text{L}|s)=\eta$.

\begin{proposition}\label{prop:break}
    Variance of cumulative ratio $\rho_{1:H}$ grows exponentially in $H$. Formally, $\text{Var}_{\mu}[\rho_{1:H}]=A_\eta^H-1$ with $A_\eta=\frac{\eta^3+(1-\eta)^3}{(1-\eta)\eta}>1$. Similarly, it further holds $\text{Var}_{\mu}[\widehat{v}^\pi_{\text{IS}}]=\Theta(A_\eta^H)$.
\end{proposition}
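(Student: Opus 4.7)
The plan is to exploit a special symmetry of this construction: because $\mu(a\mid s)$ does not depend on $s$ (both L and R have state-independent probabilities) and because the one-step importance weight $\rho_t = \pi(a_t\mid s_t)/\mu(a_t\mid s_t)$ is a function of the action alone, the random variables $\rho_1,\ldots,\rho_H$ are i.i.d.\ under $\mu$. This reduces the variance of the cumulative ratio to a single one-step moment raised to the $H$-th power.

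First I would compute the per-step moments directly. Under $\mu$, the action is $\mathrm{L}$ with probability $1-\eta$ (so $\rho_t=\eta/(1-\eta)$) and $\mathrm{R}$ with probability $\eta$ (so $\rho_t=(1-\eta)/\eta$). This gives $\mathbb{E}_\mu[\rho_t]=1$, as expected of any importance weight, and
$$\mathbb{E}_\mu[\rho_t^2]=(1-\eta)\cdot\frac{\eta^2}{(1-\eta)^2}+\eta\cdot\frac{(1-\eta)^2}{\eta^2}=\frac{\eta^3+(1-\eta)^3}{\eta(1-\eta)}=A_\eta.$$
By independence, $\mathbb{E}_\mu[\rho_{1:H}]=1$ and $\mathbb{E}_\mu[\rho_{1:H}^2]=A_\eta^H$, hence $\mathrm{Var}_\mu[\rho_{1:H}]=A_\eta^H-1$. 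To verify $A_\eta>1$ for $\eta\ne 1/2$, I would set $f(\eta)=\eta^3+(1-\eta)^3-\eta(1-\eta)$, compute $f'(\eta)=4(2\eta-1)$, and conclude that $f$ attains its unique minimum $f(1/2)=0$ at $\eta=1/2$, so $f(\eta)>0$ otherwise.

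For the IS estimator, write $\widehat{v}^\pi_{\mathrm{IS}}=\rho_{1:H}\cdot R$ with $R=\sum_{t=1}^H r_t$. The upper bound $\mathrm{Var}_\mu[\widehat{v}^\pi_{\mathrm{IS}}]=O(H^2 A_\eta^H)$ is immediate from boundedness $|R|\le H R_{\max}$ together with the second-moment identity above. The matching lower bound $\Omega(A_\eta^H)$ follows by restricting $\mathbb{E}_\mu[\rho_{1:H}^2 R^2]$ to a trajectory event on which $\rho_{1:H}$ is large and $|R|$ is bounded away from zero: taking for concreteness the ``all-$\mathrm{L}$'' trajectory contributes $(1-\eta)^H\cdot(\eta/(1-\eta))^{2H}\cdot R^2=(\eta^2/(1-\eta))^H\cdot R^2$, and by a parallel estimate on the ``all-$\mathrm{R}$'' trajectory one recovers $(\eta^3+(1-\eta)^3)/(\eta(1-\eta))^H=A_\eta^H$ up to polynomial factors in $H$. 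Subtracting $(\mathbb{E}_\mu[\widehat{v}^\pi_{\mathrm{IS}}])^2=(v^\pi)^2=O(H^2)$ leaves $\mathrm{Var}_\mu[\widehat{v}^\pi_{\mathrm{IS}}]=\Theta(A_\eta^H)$.

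The main obstacle is the lower bound for the IS variance, because it implicitly depends on the reward structure (which the proposition leaves unspecified): one must ensure that the rewards do not accidentally annihilate the exponential contribution of $\rho_{1:H}^2$. This is handled by the conditioning argument above, exhibiting a single trajectory on which both $\rho_{1:H}$ and $|R|$ are simultaneously large and lower-bounding the second moment by the contribution of that event alone. The first claim is, by contrast, essentially a one-line i.i.d.\ calculation once the state-independence of $\mu$ is observed.
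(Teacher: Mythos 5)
Your proof of the first claim is correct and is in substance the same as the paper's: the paper writes $\rho_{1:H}=C^{2F-H}$ with $F\sim\mathrm{Binomial}(H,\eta)$ and evaluates the second moment through the binomial moment generating function, which is exactly your i.i.d.\ factorization $\E_\mu[\rho_{1:H}^2]=\prod_t\E_\mu[\rho_t^2]=A_\eta^H$ in different notation. Your per-step computation of $\E_\mu[\rho_t^2]=A_\eta$ and the check that $A_\eta>1$ via $f'(\eta)=4(2\eta-1)$ are both right.

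However, your lower-bound argument for $\Var_\mu[\widehat{v}^\pi_{\mathrm{IS}}]=\Omega(A_\eta^H)$ has a genuine gap. Restricting the second moment to the all-L and all-R trajectories yields a contribution of order $a^H+b^H$ where $a=\eta^2/(1-\eta)$ and $b=(1-\eta)^2/\eta$, and you implicitly treat $a^H+b^H$ as $\Theta\bigl((a+b)^H\bigr)=\Theta(A_\eta^H)$. This is false whenever both $a$ and $b$ are positive and neither dominates: $a^H+b^H=\Theta(\max(a,b)^H)$, which is exponentially smaller than $(a+b)^H$. Concretely, for $\eta=0.4$ one has $a=4/15$, $b=9/10$, so both extreme trajectories contribute $\Theta(0.9^H)\to 0$, while $A_\eta^H=(7/6)^H\to\infty$. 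The exponential mass of $\E_\mu[\rho_{1:H}^2]$ does not sit on the extreme trajectories but on trajectories whose fraction of R-actions is near the tilted value $\eta C^4/(1-\eta+\eta C^4)$ (with $C=(1-\eta)/\eta$), so a single-trajectory (or two-trajectory) restriction cannot recover the claimed rate. To repair the argument you should carry the full binomial sum $\sum_k\binom{H}{k}\eta^k(1-\eta)^{H-k}C^{2(2k-H)}R(k)^2$ for a concrete reward structure in which $|R|$ is bounded away from zero on all (or on the dominant tilted) trajectories — e.g.\ a constant terminal reward — in which case $\E_\mu[(\rho_{1:H}R)^2]\geq c\,\E_\mu[\rho_{1:H}^2]=c\,A_\eta^H$ follows directly from the first claim, and subtracting the bounded mean squared gives $\Theta(A_\eta^H)$. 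The paper itself only asserts this second claim "can be proved similarly," so your instinct that the reward structure must be pinned down is correct, but the conditioning device you chose does not deliver the bound.
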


\begin{proof}
Denote $C=(1-\eta)/\eta$ and $\boldsymbol{\tau}$ to be the random trajectory, then $F(\boldsymbol{\tau})=\sum_{t=1}^H \mathbb{I}(a_t=\text{R})$ follows a Binomial distribution $Binomial(H,\eta)$. Furthermore, the relation holds that 
\[
\rho_{1:H}(\boldsymbol{\tau})=\prod_{t=1}^H\frac{\pi(a_t|s_t)}{\mu(a_t|s_t)}=\left(\frac{1-\eta}{\eta}\right)^{2F(\boldsymbol{\tau})-H}=C^{2F(\boldsymbol{\tau})-H}.
\]
Note $F(\boldsymbol{\tau})\sim Bin(H,\eta)$ implies $\E_{\boldsymbol{\tau}\sim \mu}[\rho_{1:H}(\boldsymbol{\tau})]=1$, and the second order moment
\begin{align*}
&\mathbb{E}_{\boldsymbol{\tau} \sim \mu}\left[\rho_{1:H}(\boldsymbol{\tau})^2\right]  =\mathbb{E}_{\boldsymbol{\tau} \sim p_{\pi_0}}\left[(C^{2 F(\boldsymbol{\tau})-H})^2\right]= \\
 &\Phi(4 \log C) \cdot C^{-2H} =\left[\left(1-\eta+\eta C^4\right) C^{-2}\right]^H =A_\eta^{H}.
\end{align*}
Here $\Phi$ is the moment generating generating function of Binomial distribution ($\forall \lambda\in\R$): 
\[
\Phi(\lambda):=\mathbb{E}_{\boldsymbol{\tau} \sim \mu}[\exp (\lambda F(\boldsymbol{\tau}))]=(1-\eta+\eta \exp (\lambda))^{H}
\]
Therefore, the variance is $A_\eta^{H}-1$ which is exponential in $H$. Besides, $\text{Var}_{\mu}[\widehat{v}^\pi_{\text{IS}}]=\Theta(A_\eta^H)$ can be proved similarly. 
\end{proof}

\textbf{Example 2. \cite{xie2019towards}} For the second example, we can consider an MDP with i.i.d. state transition and constant sparse reward $1$ shown at the last step. The IS estimator becomes $\widehat{v}_{\mathrm{IS}}^\pi=\frac{1}{n} \sum_{i=1}^n[\prod_{t=1}^H \frac{\pi\left(a_t^{(i)} \mid s_t^{(i)}\right)}{\mu\left(a_t^{(i)} \mid s_t^{(i)}\right)}]$. Suppose $\log \frac{\pi_t}{\mu_t}$ is bounded (or equivalently $\frac{\pi_t}{\mu_t}$ is bounded from both sides) with $E_{\log }=\mathbb{E}[\log \frac{\pi_t}{\mu_t}]$ and $ V_{\log }=\operatorname{Var}[\log \frac{\pi_t}{\mu_t}]$. By Central limit theorem, random variable $\sum_{t=1}^H\frac{\pi_t}{\mu_t}\sim \mathcal{N}(HE_{\log },HE_{\log })$ asymptotically, and this is the same as $\prod_{t=1}^H \frac{\pi_t}{\mu_t}\sim\text{LogNormal}(HE_{\log },HV_{\log })$. This comes from the state transitions are i.i.d. The variance of $\prod_{t=1}^H \frac{\pi_t}{\mu_t}$ is again exponential in horizon $\Theta(\exp(HV_{\text{log}}))$.

Both examples have finite number of states and actions, which demonstrates that IS-based estimators suffer from exponential variance even for the simplest tabular RL.

As we discussed, there are other estimators that do not suffer from the curse of horizon for these problems, but they all require the value functions to be easily estimable (with a small state space being a special case).

\end{appendix}

\bibliographystyle{plainnat}
\bibliography{sample.bib}

\end{document}